\definecolor{pear}{HTML}{D64141}
\definecolor{pearThree}{HTML}{FF69B4}
\definecolor{pearDark}{HTML}{2980B9}
\definecolor{pearDarker}{HTML}{F330DB}
\newtheorem{theorem}{Theorem}
\newtheorem{lemma}{Lemma} 
\newtheorem{proposition}{Proposition} 
\newtheorem{remark}[theorem]{Remark}
\newtheorem{corollary}[theorem]{Corollary}
\newtheorem{definition}[theorem]{Definition}
\renewcommand{\widehat}{\hat}
\DeclareMathOperator{\kl}{kl}
\DeclareMathOperator{\sign}{sign}
\DeclareMathOperator{\EE}{\mathbb{E}}
\DeclareMathOperator{\PP}{\mathbb{P}}
\DeclareMathOperator{\R}{\mathbb{R}}
\DeclareMathOperator{\Ng}{\mathcal{N}}
\DeclareMathOperator*{\argmax}{arg\,max}
\DeclareMathOperator*{\argmin}{arg\,min}
\DeclareMathOperator{\median}{\mathrm{Median}}
\DeclareMathOperator{\loglog}{\log\!\log}
\newcommand*\dIff{\mathop{}\!\mathrm{d}}
\renewcommand{\d}{\mathrm{d}}
\newcommand{\hmu}{\widehat{\mu}}
\renewcommand{\epsilon}{\varepsilon}
\newcommand{\ber}{\mathrm{Ber}}
\newcommand{\ind}{\mathds{1}}
\newcommand{\cF}{\mathcal{F}}
\newcommand{\cB}{\mathcal{B}}
\DeclarePairedDelimiter\floor{\lfloor}{\rfloor}
\renewcommand{\epsilon}{\varepsilon}
\newcommand{\hepsilon}{\widehat{\epsilon}}
\newcommand{\unu}{{\underaccent{\bar}{\nu}}}
\newcommand{\bR}{\bar{R}}
\newcommand{\sT}{\hat{T}}
\newcommand{\bxi}{\bar{\xi}}
\newcommand{\hQ}{\widehat{Q}}
\newcommand{\hk}{\widehat{k}}
\newcommand{\hl}{\hat{l}}
\newcommand{\hr}{\hat{r}}
\newcommand{\tQ}{\widetilde{Q}}
\newcommand{\tbp}{\textit{TBP}\xspace} 
\newcommand{\stbp}{\textit{MTBP}\xspace} 
\newcommand{\utbp}{\textit{UTBP}\xspace} 
\newcommand{\ctbp}{\textit{CTBP}\xspace} 
\newcommand{\OR}{\mathrm{OR\;}}
\newcommand{\unif}{\hyperref[alg:unif]{\texttt{Uniform}}\xspace}
\newcommand{\explore}{\hyperref[alg:explore]{\texttt{Explore}}\xspace}
\renewcommand{\choose}{\hyperref[alg:choose]{\texttt{Choose}}\xspace}
\newcommand{\ctb}{\hyperref[alg:CTB]{\texttt{CTB}}\xspace}
\newcommand{\stb}{\hyperref[alg:STB2]{\texttt{MTB}}\xspace}
\newcommand{\dstb}{\hyperref[rem:DSTB]{\texttt{DEC-MTB}}\xspace}
\renewcommand{\root}{\texttt{root}\xspace}
\newcommand{\B}{\cB} 
\newcommand{\Bs}{\cB_m} 
\newcommand{\Bu}{\cB_u} 
\newcommand{\Bc}{\cB_c}
\newcommand{\CTB}{\hyperref[alg:CTB]{\texttt{CTB}}\xspace}
\newcommand{\UTB}{\hyperref[alg:UTB]{\texttt{UTB}}\xspace}
\newcommand{\SR}{SR\xspace}
\begin{document}

\title{The Influence of Shape Constraints on the Thresholding Bandit Problem}

\author{James Cheshire \\Otto von Guericke
University Magdeburg\\  james.cheshire@ovgu.de
   \and Pierre Menard \\Centre Inria Lille - Nord Europe \\ pierre.menard@inria.fr
   \and Alexandra Carpentier\\Otto von Guericke
University Magdeburg\\alexandra.carpentier@ovgu.de }
\maketitle

\begin{abstract}

    We investigate the stochastic \emph{Thresholding Bandit problem} (\tbp) under several \emph{shape constraints}. On top of (i) the vanilla, unstructured \tbp, we consider the case where (ii) the sequence of arm's means $(\mu_k)_k$ is monotonically increasing \stbp, (iii) the case where $(\mu_k)_k$ is unimodal \utbp and (iv) the case where $(\mu_k)_k$ is concave \ctbp. In the \tbp problem the aim is to output, at the end of the sequential game, the set of arms whose means are above a given threshold. The regret is the highest gap between a misclassified arm and the threshold. 
    In the fixed budget setting, we provide \emph{problem independent} minimax rates for the expected regret in all settings, as well as associated algorithms. 
    We prove that the minimax rates for the regret are (i) $\sqrt{\log(K)K/T}$ for \tbp, (ii) $\sqrt{\log(K)/T}$ for \stbp, (iii) $\sqrt{K/T}$ for \utbp and (iv) $\sqrt{\log\log K/T}$ for \ctbp, where $K$ is the number of arms and $T$ is the budget. These rates demonstrate that \textit{the dependence on $K$} of the minimax regret varies significantly depending on the shape constraint. This highlights the fact that the shape constraints modify fundamentally the nature of the \tbp.
\end{abstract}

\section{Introduction}\label{sec:intro}

Stochastic multi-armed bandit problems consider situations in which a learner faces multiple unknown probability distributions, or ``arms'', and has to sequentially sample these arms.
In this paper, we focus on the Thresholding Bandit Problem (\tbp), a \textit{Combinatorial Pure Exploration (CPE)} bandit setting introduced by \citet{chen2014combinatorial}. The learner is presented with $[K] = \{1,\ldots,K\}$ arms, each following an unknown distribution $\nu_k$ with unknown mean $\mu_k$.  Given a budget $T>0$, the learner samples the arms sequentially for a total of $T$ times and then aims at predicting the set of arms whose mean is above a given threshold $\tau\in\R$. 

The performance of the learner is measured through the \emph{expected simple regret} which in this setting is the expected maximal gap between $\tau$ and the mean of a misclassified arm. Note that our problem is in fact akin to estimating in a sequential setting a given level-set of a discrete function under shape constraints.

In this paper we will be interested only in the \emph{problem independent} case, and want to characterise the \emph{minimax-order} of the expected simple regret on various sets of bandit problems. In particular we study the influence of various \textit{shape constraints on the sequence of means of the arms}, on the \tbp problem, i.e.~see how classical shape constraints influence the minimax rate of the expected simple regret. We will consider four shape constraints.

\paragraph{Vanilla, unstructured case \tbp}
First we consider the vanilla case where we only assume that the distributions of the arms are supported in $[0,1]$. We will refer to this case as the unstructured problem, (\tbp). The fixed confidence version of \tbp was studied in~\cite{chen2014combinatorial, chen2016combinatorial} -see also e.g.~\cite{even2002pac, chen2015optimal,simchowitz2017simulator,garivier2016optimal} for papers in the related best arm identification and  TOP-M setting\footnote{In the TOP-M setting, the objective of the learner is to output the $M$ arms with highest means. A popular version of it it is the TOP-1 problem where the aim is to find the arm that realises the maximum.} in the fixed confidence case. The fixed budget version of \tbp was studied in~\citet{chen2014combinatorial, locatelli2016optimal, mukherjee2017thresholding}, \citet{JieZhong17bandits} 
- but also see e.g.~\cite{bubeck2009pure, audibert2010best, gabillon2012best, carpentier2016tight} for papers in the related best arm identification and TOP-M setting in the fixed budget case. These papers almost exclusively concern the \textit{problem dependent regime}, which is not the focus of this paper, and the adaptation of their rate to the problem independent case is sub-optimal, see the discussion under Theorem~\ref{thm:unstr_minmax} for a more thorough comparison to this literature, and Appendix~\ref{app:litreview} for details.\\
In this paper, we prove that the minimax-optimal order of the expected simple regret in \tbp is $\sqrt{\frac{K\log K}{T}}$. While a simple uniform-sampling strategy attains this bound, the lower bound is more interesting, in particular the presence of the $\sqrt{\log K}$ term. See the discussion following Theorem~\ref{thm:unstr_minmax}. For a discussion on the performance of the uniform-sampling strategy in the \textit{problem dependent} regime, see Appendix \ref{app:litreview}.


\paragraph{Monotone constraint, \stbp.} We then consider the problem where on top of assuming that the distributions are supported in $[0,1]$, we assume that the sequence of means $(\mu_k)_k$ is monotone - this is problem \stbp. This specific instance of the \tbp is introduced within the context of drug dosing in \citet{garivier2017thresholding}. In this paper, the authors provide an algorithm for the fixed confidence setting that is optimal from a \emph{problem dependent} point of view. 
The shape constraint on the means of the arms implies that the \stbp is related to \textit{noisy binary search}, i.e.~inserting an element into its correct place within an ordered list when only noisy labels of the elements are observed, see~\cite{feige1994computing}. In the noiseless case, an effective approach due to the shape constraint is to conduct a binary search - and the classification of the arms can therefore be performed in just $O(\log(K))$ steps, while $K$ steps are needed in the noiseless \tbp. It is therefore clear that \stbp is radically different from \tbp, even in the noiseless case. In the noisy case, the learner has to sample many times each arm in order to get a reliable decision at each step. While a simple naive strategy, although sufficient in \citet{Xu19binaryduel}, is to do \textit{noisy binary search} where at each step the learner simply samples about $O(T/\log(K))$ times each arm, there are clear hints from the literature that in the \stbp this is not going to be optimal. For the related yet different problem of noisy binary search, \cite{feige1994computing}, \citet{ben2008bayesian} and \citet{emamjomeh2016deterministic} solve this issue by introducing a noisy binary search \emph{with corrections} - see also \cite{Nowak09binary}, \cite{karp2007noisy}. However, all these papers consider the problem of noisy binary search in settings with more structural assumptions and where the objective is more related to a fixed confidence setting, their results are therefore not directly applicable to our setting. See the discussion under Theorem~\ref{thm:mon_minmax} for a more thorough comparison to this literature and see  Appendix~\ref{app:litreview} for details.  \\
In this paper, we prove that the minimax-optimal order of expected simple regret in \stbp is $\sqrt{\log (K)/T}$. Interestingly and as highlighted in this paragraph, this rate is much smaller than the minimax rate over \tbp. This reflects the fact that the monotone shape constraint makes the problem much simpler than \tbp, and closer to noisy binary search. Further discussion on the comparison between the \tbp and \stbp, specifically the difference coming from the monotone assumption, can be found in Appendix~\ref{app:litreview} and see the algorithm \explore  and the associated text in Section \ref{sec:optimal_alg} for more intuition on the link to noisy binary search. Discussion on the performance of our algorithms for the \stbp in the \textit{problem dependent} regime can also be found in Appendix \ref{app:litreview}.

\paragraph{Unimodal constraint, \utbp.} We also consider the problem where on top of assuming that the distributions are supported in $[0,1]$, we assume that the sequence of means $(\mu_k)_k$ is unimodal - this is problem \utbp. It has not been considered to the best of our knowledge. However similar problems have been studied such that identifying the best arm or minimizing the cumulative regret \cite{combes2014unimodal, combes2014unimodal2, paladino2017unimodal, yu2011unimodal}. \cite{ paladino2017unimodal, combes2014unimodal2} focus on the \textit{problem dependent regime}, and are not transferable - at least to the best of our knowledge - to the problem independent setting. \cite{yu2011unimodal, combes2014unimodal} are closer to our problem as it focuses on the problem independent regime. However, they consider the $\mathcal X$-armed setting (continuous set of arms e.g.~in $[0,1]$) setting and assume H\"older type regularity assumption around the maximum, which prevents jumps in the mean vector. These results therefore do not apply to our setting, where of course jumps are bound to happen as we are in the discrete setting. See the discussion under Theorem~\ref{thm:un_minmax} for a more thorough comparison to this literature.\\
In this paper, we prove that the minimax-optimal order of the expected simple regret in \utbp is of order $\sqrt{K/T}$. This is interesting in contrast to the rate of \stbp. Monotone bandit problems are much easier than unimodal bandit problems - which can be written as a combination of a non-decreasing bandit problem, and a non-increasing bandit problem. This is however not very surprising, as finding the maximum of the unimodal bandit problem - i.e.~the points where the non-increasing and non-decreasing bandit problems merge - is difficult.

\paragraph{Concave constraint, \ctbp.} Finally we consider the problem where on top of assuming that the distributions are supported in $[0,1]$, we assume that the sequence of means $(\mu_k)_k$ is concave - this is problem \ctbp. To the best of our knowledge this setting has not yet been consider in the literature. However, two related problems have been considered: the problem of estimating a concave function, and the problem of optimising a concave function - for both problems, mostly in the continuous setting, which renders a comparison with our setting delicate. The problem of estimating a concave function has been thoroughly studied in the noiseless setting, and also in the noisy setting, see e.g.~\cite{simchowitz2018adaptive}, where the setting of a continuous set of arms is considered, under H\"older smoothness assumptions. The problem of optimising a convex function in noise without access to its derivative - namely zeroth order noisy optimisation - has also been extensively studied. See e.g.~\cite{yudin}[Chapter 9], and~\cite{wang2017stochastic, agarwal2011stochastic, liang2014zeroth} to name a few, all of them in a continuous setting and in dimension $d$. The focus of this literature is however very different than ours, as the main difficulty under their assumption is to obtain a good dependence in the dimension $d$, and in this setting logarithmic factors are not very relevant. See the discussion under Theorem~\ref{thm:con_minmax} for a more thorough comparison to this literature.\\
In this paper, we prove that the minimax-optimal order of the expected simple regret in \ctbp is $\sqrt{\loglog(K)/T}$. This is interesting in contrast to rate in the case of \utbp. Concave bandit problems are much easier than unimodal bandit problems. Also, if we compare with \stbp, we have that concave bandit problems are also much easier than monotone bandit problems, which is perhaps surprising - in particular the fact that the dependence in $K$ is much smaller.

\begin{table}
\label{tab:KV}
\def\arraystretch{1.85}
 \begin{tabular}{c||c|c|c|c}
  Results & Unstructured \tbp  & Monotone \tbp & Unimodal \tbp & Concave \tbp  \\ \hline \hline
 Regret & $\sqrt{\frac{K\log K}{T}}$ & $\sqrt{\frac{\log K }{T}}$ & $\sqrt{\frac{K}{T}}$ & $\sqrt{\frac{\loglog K}{T}} $
 \end{tabular}
 \caption{Order of the minimax expected simple regret for the thresholding bandit problem, in the case of all four structural assumptions on the means of the arms considered in this paper. All results are given up to universal multiplicative constants.}
 \label{tab:KV}
 \end{table}

\paragraph{Organisation of the paper}

Our results are summarized in Table~\ref{tab:KV}. See also Appendix~\ref{sec:contin} for an adaptation of these results in the $\mathcal X$-armed bandit setting. In Section~\ref{sec:prob} we define the setting and the \tbp, \stbp, \ctbp and \utbp problems. 
Minimax rates for the expected regret for all cases are given in Section \ref{sec:minmaxreg}.  In Section \ref{sec:optimal_alg} we describe algorithms attaining the minimax rates of Section \ref{sec:minmaxreg}, again for all cases. The Appendix contains the proofs for all results, as well as formulation of the upper and lower bounds leading to the minimax rates in a broader setting, transposition of some results in the fixed confidence setting, and also some additional discussions and remarks.

\vspace{-0.3cm}
\section{Problem formulation}\label{sec:prob}

The learner is presented with a $K$-armed bandit problem $\unu =\{\nu_1,\ldots,\nu_K\}$, with $K\geq 3$, where $\nu_k$ is the unknown distribution of arm $k$. Let $\tau\in \mathbb R$ be a fixed threshold known to the learner.  
We aim to devise an algorithm which classifies arms as above or below threshold $\tau$. That is, the learner aims at finding the vector $Q\in\{-1,1\}^K$ that encodes the true classification, i.e. $Q_k = 2\ind_{\{\mu_k \geq \tau \}}-1$  with the convention $Q_k = 1$ if arm $k$ is above the threshold and $Q_k = -1$ otherwise.

The \emph{fixed budget} bandit sequential learning setting goes as follows: the learner has a budget $T>0$ and at each round $t \leq T$, the learner pulls an arm $k_t\in [1,K]$ and observes a sample $Y_{t}\sim \nu_{k_t}$, conditionally independent from the past.
After interacting with the bandit problem and expending their budget, the learner outputs a vector $\hQ\in\{-1,1\}^K$ and the aim is that it matches the unknown vector $Q$ as well as possible.

That is, the \textit{fixed budget} objective of the learner following the strategy $\pi$ is then to minimize the expected simple regret of this classification for $\hat Q:= \hat Q^\pi$:
\[\bR_{T}^{\unu,\pi} = \EE_{\unu} \!\left[\max_{\{k\in[K]:\ \hQ_k^\pi \neq Q_k\}} \Delta_k \right],\] 
where $\Delta_k : = |\tau - \mu_k|$ is the gap of arm $k$, and where $\EE_{\unu}$ is defined as the expectation on problem $\unu$ and $\mathbb P_{\unu}$ the probability.  
We also write for the simple regret as a random variable $R_{T}^{\unu,\pi} = \max_{\{k\in[K]:\ \hQ_k^\pi \neq Q_k\}} \Delta_k  \,.$ 
When it is clear from the context we will remove the dependence on the bandit problem $\unu$ and/or the strategy $\pi$. We now present several sets of bandit problems that correspond to our four shape constraints.
\paragraph{Vanilla, unstructured case \tbp} We assume that the distribution of all the arms $\nu_k$ are supported in $[0,1]$. 
We denote by $\mu_k$ the mean or arm $k$. Let $\cB:= \cB( K)$ be the set of such problems. 

\paragraph{Monotone case \stbp} 

We denote by $\Bs$ the set of bandit problems,
$$\Bs :=  \{\nu \in \cB:\ \mu_1 \leq\mu_2 \leq  \ldots \leq\mu_K\}\;,$$
where the learner is given the  additional information that the sequence of means $\left( \mu_k \right)_{k \in [K]}$ is a monotonically increasing sequence.

\paragraph{Unimodal case \utbp}
We will denote by $\Bu$ the set of bandit problems, 
$$\Bu :=   \left\{\nu \in \cB:\ \exists k^*\in[K] \text{s.t.} \forall l \leq k^*, \mu_{l-1} \leq \mu_l \; \text{and} \; \forall l \geq k^*, \mu_l \geq \mu_{l+1} \right\}\;,$$
where the learner is given the  additional information that the sequence of means $\left( \mu_k \right)_{k \in [K]}$ is unimodal. 

\paragraph{Concave case \ctbp}
We will denote by $\Bc$ the set of bandit problems, 
\[\Bc := \left\{ \nu \in \cB: \forall 1<k< K-1, \frac{1}{2}\mu_{k-1} + \frac{1}{2}\mu_{k+1} \leq \mu_k\right\}\,, \]
where the learner is given the  additional information that the sequence of means $\left( \mu_k \right)_{k \in [K]}$ is concave.

\paragraph{Minimax expected regret} Consider a set of bandit problems $\tilde{\mathcal B}$ - e.g.~$\Bu, \Bs, \Bc, \cB$. The minimax optimal expected regret on $\tilde B$ is then
$$\bR_T^*(\tilde{\mathcal B}) := \inf_{\pi~{\mathrm{strategy}}} \sup_{\unu \in \tilde{\mathcal B}}\bR_{T}^{\unu,\pi}\,.$$
\vspace{-0.6cm}

\vspace{-0.3cm}
\section{Minimax expected regret for \tbp, \stbp, \utbp, \ctbp}\label{sec:minmaxreg}

In this section we present all minimax rates on the expected regret in the case of all four shape constraints.

Algorithms achieving these mini-max rates are described in Section \ref{sec:optimal_alg}. For two positive sequences of real numbers $(a_n)_n, (b_n)_n$, we write $a_n \asymp b_n$ if there exists two \textit{universal constants}\footnote{In particular, independent of $T,K$.} $0<c<C$ such that $ca_n \leq b_n \leq C a_n$.

Theorem \ref{thm:unstr_minmax} provides the minimax rate of the \tbp. The proof can be found in Appendix~\ref{app:proof_of_unstr}, i.e.~Proposition \ref{prop:unstr_lo}, and Proposition \ref{prop:unstr_exp_up}. 
\begin{theorem}\label{thm:unstr_minmax}
It holds that
$$\bR_T^*(\cB) \asymp  \sqrt{\frac{K\log K}{T}}.$$
The algorithm \unif described in Sections~\ref{sec:optimal_alg} (see also Appendix~\ref{app:proof_of_unstr}) attains this rate.
\end{theorem}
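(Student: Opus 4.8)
The plan is to establish the two matching bounds separately. For the upper bound, I would analyze the \unif (uniform sampling) strategy: allocate the budget equally, pulling each arm $\lfloor T/K \rfloor$ times, form empirical means $\hmu_k$, and output $\hQ_k = 2\ind_{\{\hmu_k \geq \tau\}} - 1$. The regret is incurred only on misclassified arms, and arm $k$ is misclassified only if $|\hmu_k - \mu_k| \geq \Delta_k$. By a Hoeffding/sub-Gaussian deviation bound, $\PP(|\hmu_k - \mu_k| \geq \Delta_k) \leq 2\exp(-2 T \Delta_k^2 / K)$ (using that each arm gets $\approx T/K$ samples and $[0,1]$-bounded rewards). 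The regret is $\EE[\max_{k:\text{misclassified}} \Delta_k]$; I would bound this by integrating the tail, $\EE[R_T] = \int_0^1 \PP(R_T > x)\,\d x$, where $\PP(R_T > x) \leq \sum_{k: \Delta_k > x} \PP(|\hmu_k - \mu_k| \geq \Delta_k) \leq K \cdot 2\exp(-2Tx^2/K)$. Splitting the integral at the threshold $x_0 \asymp \sqrt{K\log K / T}$ (bounding by $x_0$ below and by the exponential tail above) yields the $\sqrt{K\log K/T}$ upper bound, with the $\log K$ arising precisely from the union bound over $K$ arms.

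For the lower bound, the goal is to construct a family of hard instances on which any strategy incurs regret of order $\sqrt{K\log K / T}$. The natural approach is a reduction to a multiple-hypothesis-testing argument. I would take a baseline problem where all arms have mean exactly at the threshold (or at distance $\Delta \asymp \sqrt{K\log K/T}$ on the appropriate side) and consider perturbations that flip a single arm's classification by moving its mean a distance $2\Delta$ across the threshold. The key probabilistic tool is a change-of-measure / Bretagnolle–Huber-type inequality controlling how well any strategy, with only $T$ total samples, can distinguish the baseline from each of the $K$ single-arm perturbations simultaneously. Because the total sampling budget $T$ must be shared among $K$ arms, on average each arm receives $\leq T/K$ pulls, so the KL-divergence between baseline and a perturbation is $\lesssim (T/K)\Delta^2$; to make even one arm hard to classify requires $\Delta \gtrsim \sqrt{K/T}$, but to guarantee that \emph{some} arm among the $K$ is misclassified with constant probability — which is what the $\max$ in the regret demands — one needs the extra $\sqrt{\log K}$ factor.

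The main obstacle will be rigorously producing the $\sqrt{\log K}$ factor in the lower bound, since a crude single-hypothesis (two-point) argument only delivers $\sqrt{K/T}$. The correct technique is a simultaneous testing argument: I would either use a carefully designed prior over the $K$ perturbation instances together with a Fano-type inequality, or argue directly that for the regret's maximum to be small the strategy must correctly classify every perturbed arm, and show this forces the per-arm allocation to exceed the budget unless $\Delta$ absorbs the $\sqrt{\log K}$ term. Concretely, I expect to invoke a lemma bounding $\sup_{\unu} \PP_{\unu}(\text{arm } k \text{ misclassified})$ from below, and combine it over arms so that with constant probability at least one arm with gap $\Delta$ is misclassified; balancing the resulting constraint against the budget $\sum_k n_k = T$ yields $\Delta \asymp \sqrt{K \log K / T}$. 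This simultaneous/union structure on the lower-bound side is the delicate part, mirroring and matching the union bound used on the upper-bound side.
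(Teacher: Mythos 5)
Your upper bound is correct and is essentially the paper's own argument (Propositions~\ref{prop:unstr_hiprob_up} and~\ref{prop:unstr_exp_up}): Hoeffding plus a union bound over the $K$ arms, then integration of the tail probability, the $\sqrt{\log K}$ coming from the union bound. The lower bound, however, has a genuine gap, and it sits exactly at the point you flag as delicate. The family you propose --- a baseline with all means at distance $\Delta$ below $\tau$, plus the $K$ instances obtained by pushing a single arm to distance $\Delta$ above $\tau$ --- provably cannot produce the $\sqrt{\log K}$ factor, because the minimax regret \emph{restricted to this family} is of order $\sqrt{K/T}$, not $\sqrt{K\log K/T}$. An algorithm tailored to this family only has to solve a needle-in-a-haystack identification problem: at most one arm is above threshold and all others sit at a common level. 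A median-elimination-type scheme (in the spirit of Even-Dar, Mannor and Mansour), using fresh samples per round and geometrically increasing per-arm budgets, identifies the elevated arm --- or correctly reports that none exists after a final confirmation test --- with error probability $\exp(-cT\Delta^2/K)$, with \emph{no} $\log K$ in the exponent; taking $\Delta$ a large constant times $\sqrt{K/T}$ then gives constant success probability and regret $O(\sqrt{K/T})$ over your whole family. Consequently no Fano or Bretagnolle--Huber argument over these $K+1$ hypotheses, however clever the prior, can certify more than $\sqrt{K/T}$; this also defeats your fallback idea that correct classification of every perturbed arm ``forces the per-arm allocation to exceed the budget,'' since the algorithm above classifies all arms correctly while spending only $O(T/K)$ per arm on average, by exploiting the $1$-sparse structure. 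The same obstruction is visible inside the information-theoretic machinery: the only KL direction controllable under adaptive sampling is the one from the baseline, $\mathrm{KL}(\PP_0,\PP_k)=\EE_0[N_k(T)]\cdot O(\Delta^2)$ with $\sum_k \EE_0[N_k(T)]=T$, and in that direction the small probability ($\leq 1/K$, from disjointness of the correct answers) enters the wrong slot of $\kl$, yielding only $e^{-cT\Delta^2/K}$-type bounds.

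What the paper does instead is work with the full hypercube: $2^K$ instances $\unu^Q$, $Q\in\{-1,1\}^K$, with arm $k$ at mean $Q_k\epsilon$, so that \emph{every} sign pattern is a legitimate answer and no sparsity structure can be exploited; the regret on $\unu^Q$ equals $\epsilon$ unless $\hQ=Q$ exactly. It then (i) lower bounds the minimax risk by the Bayesian risk under the uniform prior on $Q$, (ii) applies contraction and convexity of relative entropy to the pairs $(Q,Q^k)$ differing in one coordinate, (iii) uses the refined Pinsker inequality $\kl(x,y)\geq (x-y)^2\max\big(2,\log(1/y)\big)$ with second argument $\frac{1}{K}\sum_{k}\EE_Q \ind_{\{\hQ=Q^k\}}\leq 1/K$ --- this is precisely where $\log K$ enters --- and (iv) exploits the hypercube symmetry, namely that $(Q,k)\mapsto (Q^k,k)$ is a bijection, to convert the otherwise uncontrollable quantities $\EE_{Q^k}[N_k(T)]$ into the budget identity $\sum_Q\sum_k \EE_{Q^k}[N_k(T)]=2^K T$. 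Ingredients (iii) and (iv) are what your sketch is missing, and it is the hypercube construction, rather than single-arm perturbations, that makes them available.
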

It is difficult to compare this result to state of the art literature as existing papers consider almost exclusively the \textit{problem dependent regime}, and often the fixed confidence setting. One can however deduce from~\cite{locatelli2016optimal} an upper bound of order $\sqrt{K\log(K \log T / \delta)/T}$, and from~\cite{chen2016combinatorial} a lower bound of order $\sqrt{K/T}$, which are both slightly sub-optimal.

Theorem \ref{thm:mon_minmax} provides the minimax rate of the \stbp. The proof can be found in Appendix~\ref{app:mon_minimax}, i.e.~Proposition~\ref{prop:mon_lo}, and Corollary~\ref{cor:mo_exp_up}.
\begin{theorem}\label{thm:mon_minmax}
It holds that
$$\bR_T^*(\Bs) \asymp  \sqrt{\frac{\log K}{T}}.$$
The algorithm \stb described in Section \ref{sec:optimal_alg} attains this rate.

\end{theorem}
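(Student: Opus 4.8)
The plan is to prove the upper and lower bounds separately, since the claim is that $\bR_T^*(\Bs) \asymp \sqrt{\log(K)/T}$.

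\textbf{Upper bound.} First I would exhibit a strategy achieving $\bR_T^{\unu,\pi} \lesssim \sqrt{\log(K)/T}$ uniformly over $\Bs$. The monotone constraint means the true classification vector $Q$ has the form $(-1,\dots,-1,+1,\dots,+1)$: there is a single threshold index $k^\star$ such that all arms below it are below $\tau$ and all arms at or above it are above. So the problem reduces to \emph{locating a single transition point} in an ordered list, which is exactly noisy binary search. The natural algorithm is a noisy-binary-search-with-corrections scheme (the \explore/\stb machinery referenced in the paper): maintain an active interval of candidate transition indices, at each of the $\approx \log K$ search levels sample the current ``pivot'' arm enough times to decide with high confidence whether $\mu_k \geq \tau$, and recurse on the correct half. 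The key accounting: with budget $T$ split across $O(\log K)$ decision points, each pivot gets $\approx T/\log K$ pulls, so by a sub-Gaussian / Hoeffding concentration bound the probability of a wrong decision at a pivot with gap $\Delta$ is $\exp(-c (T/\log K)\Delta^2)$. A misclassified arm can only arise if the search is diverted at some pivot; controlling the resulting regret via a peeling / union-bound argument over the gap scales yields $\EE[\max \Delta_k] \lesssim \sqrt{\log(K)/T}$. The ``with corrections'' feature is what prevents a single early mistake from propagating and inflating the regret, which a naive binary search would suffer.

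\textbf{Lower bound.} The harder and more interesting direction is showing no strategy can beat $\sqrt{\log(K)/T}$ on $\Bs$. I would construct a family of hard monotone instances and apply a change-of-measure / information-theoretic argument. Take a base instance whose mean profile crosses $\tau$ at some index, with the crossing having a small gap $\Delta \asymp \sqrt{\log(K)/T}$, and build $\approx K$ alternative instances, each monotone, each shifting the transition point by one index, so that distinguishing among them forces the learner to correctly localize the crossing to within one arm. Using Bernoulli arms (or Gaussian) so that the per-sample KL divergence between neighbouring instances is $\asymp \Delta^2$, a standard argument (Bretagnolle--Huber, or a multiple-hypothesis Fano-type bound) shows that to keep the error probability bounded away from the number of hypotheses, the learner needs total information $T \Delta^2 \gtrsim \log K$. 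Rearranging gives $\Delta \gtrsim \sqrt{\log(K)/T}$, and on the event that the transition is mislocalized the regret is at least $\Delta$, yielding the matching lower bound. The $\sqrt{\log K}$ (rather than $\sqrt{K}$) is precisely the signature of the binary-search structure: the monotone constraint collapses $K$ independent classification decisions into a single ordered search whose information cost scales only logarithmically.

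\textbf{Main obstacle.} I expect the delicate part to be the lower-bound construction: one must ensure every alternative instance genuinely lies in $\Bs$ (monotonicity is a global constraint, so local perturbations must be done carefully to avoid violating ordering), that the instances are pairwise close enough in KL that the multiple-hypothesis bound bites, yet far enough apart in their induced classifications that a mislocalization incurs regret $\gtrsim \Delta$. Balancing these requirements so that the information budget $T\Delta^2$ scales exactly like $\log K$ — and not $K$ or a constant — is where the combinatorial structure of monotone sequences must be exploited, essentially encoding a $\log K$-depth search tree into the hypothesis family. On the upper-bound side, the main technical care is the correction mechanism and the peeling argument ensuring that mistakes at large-gap pivots are both unlikely and inconsequential, so that the expected \emph{maximal} gap over misclassified arms — not just a single arm's error probability — is controlled at the right rate.
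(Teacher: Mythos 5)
Your lower bound is correct and is essentially the paper's own argument (Proposition~\ref{prop:mon_lo}): one takes $K$ single-jump monotone instances, the jump of height $\epsilon \asymp \sqrt{\log(K)/T}$ placed at each possible index, and a Fano-type bound (contraction plus a refined Pinsker inequality $\kl(x,y)\geq (x-y)^2\max(2,\log(1/y))$) forces $T\epsilon^2 \gtrsim \log K$. One remark: the construction is simpler than you fear. Step functions with the jump at each location are automatically in $\Bs$, and Fano over these $K$ hypotheses directly produces the $\log K$; no delicate balancing or ``encoding of a search tree'' into the hypothesis family is needed.

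The genuine gap is in your upper bound accounting. You propose that each of the $O(\log K)$ pivots gets $\approx T/\log K$ pulls, errs with probability $\exp(-c(T/\log K)\Delta^2)$, and that a union bound/peeling over pivots then controls the expected maximal gap at rate $\sqrt{\log(K)/T}$. This cannot work: at the critical scale $\epsilon \asymp \sqrt{\log(K)/T}$ the per-pivot error probability is a \emph{constant}, so the union bound over $\log K$ pivots is vacuous there and only becomes nontrivial once $\epsilon \gtrsim \sqrt{\log(K)\loglog(K)/T}$. Integrating the tail then gives expected regret of order $\sqrt{\log(K)\loglog(K)/T}$, which is exactly the sub-optimality of naive noisy binary search that the paper discusses in Appendix~\ref{app:tbp_vs_mtbp}. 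Invoking ``corrections'' does not repair this by itself: the loss does not come from error propagation but from the $\log K$ multiplicity of opportunities to err, which the union bound pays no matter how the search recovers from mistakes.

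What the paper actually does to remove the $\sqrt{\loglog K}$ factor is qualitatively different. Each step of \explore only aims at \emph{constant} confidence (with $T_2 \asymp T/\log K$ samples, a step is ``favorable'' with probability at least $7/8$ at scale $\epsilon_0$), anomalies trigger backtracking so that favorable steps create a drift of the random walk toward the set of good nodes, and Azuma--Hoeffding applied to the \emph{number of unfavorable steps} yields $\PP(R_T > 3\epsilon) \leq K^{-3/4}$ simultaneously for all $\epsilon \geq \epsilon_0$ (Lemma~\ref{lem:fisrt_high_prb_regret_bound}) with no $\log K$ factor in front; the union-bound estimate is used only in the large-deviation regime $\epsilon \gg \epsilon_0$, where it is affordable (Lemma~\ref{lem:second_high_prb_regret_bound}). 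Moreover, because each step errs with constant probability, the final position of the walk is unreliable, so one cannot simply output the last active interval as you implicitly do: the paper's \choose routine collects the candidate arms seen along the whole trajectory and returns their \emph{median} (with the data-driven width $\hepsilon$), and the drift argument guarantees that a majority of the collected candidates are good. Your proposal is missing both the drift/Azuma mechanism and this median selection rule, and these two ingredients are precisely what achieve the claimed $\sqrt{\log(K)/T}$ rather than $\sqrt{\log(K)\loglog(K)/T}$.
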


The literature that achieves results closest to this theorem is the noisy binary search literature cited in the introduction. The results that are most comparable to ours are the ones in \citet{karp2007noisy}. They consider the special case where all arms follow a Bernoulli distribution with parameter $p_k$ and $p_1 < ... < p_K$, and the aim is to find a $i$ such that $p_i$ is close to $1/2$. In the \textit{fixed confidence setting}, they prove that the naive binary search approach is not optimal and propose an involved exponential weight algorithm, as well as a random walk binary search, for solving the problem. They prove that for a fixed $\epsilon,\delta>0$, the algorithm returns all arms above threshold with probability larger than $1-\delta$, and tolerance $\epsilon$, in an expected number of pulls less than a multiplicative constant \textit{that depends on $\delta$ in a non-specified way} times $\log_2(K)/\epsilon^2$. They prove that this is optimal up to a constant depending on $\delta$. In the paper~\cite{ben2008bayesian} they refine the dependence on $\delta$ in a slightly different setting - where one has a fixed error probability. They prove that \textit{up to terms that are negligible with respect to $\log(K)/\epsilon^2$}, a lower bound in the expected stopping time is of order $(1-\delta)\log(K)/\epsilon^2$. Even after a non-trivial transposition effort from their setting to ours, these results would still provide sub-optimal bounds in our setting as we consider the \textit{expected} simple regret - and a sharper dependence in their $\delta$ would be absolutely necessary here in all regimes to get our results.

Theorem \ref{thm:un_minmax} provides the minimax rate of the \utbp. The proof can be found in Appendix~\ref{app:un_minimax}, i.e.~Proposition~\ref{prop:unimod_lo}, and Proposition~\ref{prop:exp_uni_hi}.
\begin{theorem}\label{thm:un_minmax}
It holds that
$$\bR_T^*(\Bu) \asymp  \sqrt{\frac{K}{T}}.$$
The algorithm \UTB described in Section \ref{sec:optimal_alg} attains this rate.

\end{theorem}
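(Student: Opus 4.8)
To establish $\bR_T^*(\Bu) \asymp \sqrt{K/T}$ I would prove the upper and lower bounds separately, as the theorem statement splits into Proposition~\ref{prop:unimod_lo} (lower) and Proposition~\ref{prop:exp_uni_hi} (upper).

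For the \textbf{upper bound} $\bR_T^*(\Bu) \lesssim \sqrt{K/T}$, the natural first move is to reduce the unimodal problem to two monotone problems. As the introduction emphasises, a unimodal sequence is a non-decreasing piece glued to a non-increasing piece at the mode $k^*$. The plan is therefore to \emph{locate the mode} and then invoke the \stbp machinery on each side. Concretely, I would (i) spend a constant fraction of the budget $T$ pulling each arm a uniform number $n \asymp T/K$ of times to produce empirical means $\hmu_k$; (ii) use these to identify (an estimate of) the mode, exploiting unimodality so that the empirical maximiser is within the right confidence window; and (iii) on each of the two resulting monotone branches, classify the arms. Since we already know $\bR_T^*(\Bs) \asymp \sqrt{\log K / T}$, the binary-search-type cost on each monotone branch is cheaper than the $\sqrt{K/T}$ we are aiming for; the dominant cost is the $\sqrt{K/T}$ coming from uniform exploration needed to pin down the mode region. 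A cleaner route that avoids explicit mode-finding is to run pure uniform sampling exactly as in \tbp but to note that under unimodality the set $\{k : \mu_k \geq \tau\}$ is an \emph{interval}, so only the (at most two) boundary crossings matter; a standard concentration argument with $n \asymp T/K$ samples per arm gives that every arm is misclassified only if its gap $\Delta_k \lesssim \sqrt{\log(1/\delta)/n}$, and a union bound over the boundaries yields $\EE[R_T] \lesssim \sqrt{K/T}$ after integrating the tail.

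For the \textbf{lower bound} $\bR_T^*(\Bu) \gtrsim \sqrt{K/T}$, I would construct a hard family of unimodal instances and apply a change-of-measure (information-theoretic) argument. The plan is to fix a base instance whose mean vector is unimodal, with the threshold $\tau$ crossed near the mode so that the classification boundary sits in a delicate region, and to build $\Theta(K)$ perturbed instances each obtained by nudging a single arm's mean by $\pm\Delta$ across $\tau$ with $\Delta \asymp \sqrt{K/T}$, in such a way that unimodality is preserved on every instance. For any strategy, the total budget $T$ must be split across the $K$ arms, so on average an arm receives only $\asymp T/K$ pulls; by Pinsker / the Bretagnolle--Huber inequality, a strategy cannot reliably distinguish the sign of the perturbation on an arm it has under-sampled, forcing a misclassification with constant probability and hence expected regret $\gtrsim \Delta \asymp \sqrt{K/T}$. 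The key subtlety distinguishing this from the $\sqrt{\log K/T}$ monotone lower bound is that unimodality, unlike monotonicity, does \emph{not} let a single decision propagate information across all arms: placing the boundary near the mode means the learner effectively faces two independent boundary-localisation problems and, crucially, the perturbations can be localised at $\Omega(K)$ distinct positions while keeping the profile unimodal, so no binary-search shortcut is available.

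\textbf{The main obstacle} I anticipate is the lower-bound construction: I must choose the perturbed instances so that (a) every one of them remains genuinely unimodal, (b) the perturbations are pairwise ``independent'' enough that the learner's budget really does get diluted by a factor $K$ rather than $\log K$, and (c) the gaps are calibrated to $\sqrt{K/T}$ so that the change-of-measure $\KL$ terms are $O(T \cdot \Delta^2 / K) = O(1)$ per arm. Balancing the unimodality constraint against the need for $\Omega(K)$ independently-perturbable positions is exactly where the argument departs from the monotone case, and getting the quantifier structure of the reduction (which instances force which misclassifications) right is the delicate part; the upper bound, by contrast, should follow routinely from uniform sampling and concentration once the interval structure of the above-threshold set is exploited.
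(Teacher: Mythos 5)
Your lower-bound plan coincides with the paper's proof of Proposition~\ref{prop:unimod_lo}: the paper takes the flat instance (all means $0$, threshold $\tau=\epsilon/2$) together with the $K$ single-spike alternatives $\unu^k$ (one arm raised to $\epsilon$), all of which are unimodal, and applies the contraction/convexity of relative entropy plus the standard (unrefined) Pinsker inequality with $\epsilon \asymp \sqrt{K/T}$. Your conditions (a)--(c) are met trivially by this construction, so that half is sound --- though note the mechanism is needle-finding (the spike can sit at $\Omega(K)$ positions), not ``two boundary-localisation problems''.

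The genuine gap is in your upper bound: both of your routes rest on \emph{uniform} allocation of $n \asymp T/K$ pulls per arm, and uniform allocation provably cannot attain $\sqrt{K/T}$ on $\Bu$. The obstruction is exactly your own lower-bound family: under uniform sampling the maximum of the $K$ empirical deviations is of order $\sigma\sqrt{\log(K)/n} = \sigma\sqrt{K\log(K)/T}$, so no post-processing of uniformly collected data (arm-by-arm thresholding, empirical argmax, or interval-constrained fitting) can distinguish a spike of height $\epsilon \asymp \sqrt{K/T}$ from pure noise; on instance $k$ a regret below $\epsilon$ forces the output $\hat S = \{k\}$, i.e.~exact identification, so uniform allocation tops out at $\Theta(\sqrt{K\log(K)/T})$. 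In particular your ``cleaner route'' is incoherent as stated: the arm-by-arm rule of \unif does not output an interval, so misclassifications are not confined to ``the boundaries'' and the union bound runs over all $K$ arms, reintroducing the $\log K$; and your first route's mode estimate (empirical maximiser after uniform exploration) is only accurate to $\sqrt{K\log(K)/T}$, an error that enters the paper's regret decomposition additively through $\hat\Delta = \mu^*-\mu_{\hat m}$. The paper's proof of Proposition~\ref{prop:exp_uni_hi} avoids this precisely by locating the mode with an \emph{adaptive} minimax-optimal TOP-1 subroutine \SR (a MOSS/successive-rejects-type algorithm with $\EE[\mu^*-\mu_{\hat m}] \leq c_{\SR}\sqrt{K/T}$, which is where the $\log K$ is shaved off), then running \stb and \dstb on the two branches (justified via the relaxed class $\Bs^{*,\epsilon}$ and Corollary~\ref{cor:hireg}, since the branch containing the true mode is not monotone when $\hat m \neq k^*$), and finally re-sampling $\hat l-1,\hat l,\hat m,\hat r,\hat r+1$ as a checking step against a flawed $\hat m$; the term $\EE[\hat\Delta] \lesssim \sqrt{K/T}$ is then the dominant contribution. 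Without replacing your uniform exploration by such an adaptive best-arm routine, your plan yields $\sqrt{K\log(K)/T}$, not the claimed rate.
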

Most related papers consider the problem dependent setting. However the papers~\cite{yu2011unimodal, combes2014unimodal} consider the problem independent regime, in the $\mathcal X$-armed setting and in both cases under additional shape constraint assumptions inducing that the maximum is not too "peaky" and isolated. They prove that the minimax simple regret for the TOP-1 problem is of order $\sqrt{\log(T)/T}$.\\
This seems to contradict our results, to which a direct corollary is that the minimax expected regret for finding a given level set of a $\beta$-H\"older, unimodal function in $[0,1]$ is $n^{-\frac{\beta}{2\beta+1}}$, see Appendix~\ref{sec:contin}. 
This might seem unintuitive when compared to their result where the rate is much faster. But is not, as the assumption that both papers make essentially imply that the set of arms that are $\epsilon$-close to the arm with highest mean decays in a regular way, which implies that a binary search will provide good results in this case - unlike in our setting. 

Therefore their setting is closer in essence to the \stbp problem than to the \tbp problem, as binary-search type methods work well there as highlighted in~\cite{combes2014unimodal}. And interestingly, a direct corollary to Theorem~\ref{thm:mon_minmax} for \stb is that the minimax expected regret for finding a given level set of a $\beta$-H\"older, monotone function in $[0,1]$ is $\sqrt{\log( T)/T}$, see Appendix~\ref{sec:contin}, which is very much aligned with the findings in~\cite{combes2014unimodal}.

Theorem \ref{thm:con_minmax} provides the minimax rate of the \ctbp. The proof can be found in Appendix~\ref{app:con_minimax}, i.e.~Proposition~\ref{prop:lb_cvx}, and Proposition~\ref{prop:exp_con_hi}.
\begin{theorem}\label{thm:con_minmax}
It holds that
$$\bR_T^*(\Bc) \asymp  \sqrt{\frac{ \loglog K}{T}}.$$
The algorithm \CTB described in Section \ref{sec:optimal_alg} attains this rate.
\end{theorem}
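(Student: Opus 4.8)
The statement is an $\asymp$, so the plan splits into a matching lower bound (Proposition~\ref{prop:lb_cvx}) and upper bound (Proposition~\ref{prop:exp_con_hi}); the interesting content is that concavity collapses the $\sqrt{\log K/T}$ rate of the monotone case all the way down to $\sqrt{\loglog K/T}$. The guiding intuition is that a concave sequence crosses the level $\tau$ at most twice, so the target set is an interval $[a,b]$ and the whole problem reduces to localizing (at most) two crossing points; and, crucially, that concavity lets one localize a crossing \emph{doubly-exponentially} fast rather than by plain binary search. Where the monotone case needs $\log K$ bisection levels (hence $\sqrt{\log K/T}$), I expect concavity to allow a secant / \emph{regula-falsi}-type localization whose positional uncertainty squares at every phase, so that only $\loglog K$ phases are needed.

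\textbf{Upper bound (algorithm \CTB).} First I would reduce by symmetry to localizing a single crossing, treating the peak and the degenerate cases (all arms above, or all below $\tau$) separately. For one crossing the plan is a phased scheme: maintain a bracket of candidate positions containing the crossing, and in each phase sample a few well-separated arms inside the current bracket enough to estimate their means to the current accuracy. Using the concavity inequality $\tfrac12\mu_{k-1}+\tfrac12\mu_{k+1}\le\mu_k$ — equivalently, that the increments $\mu_{k+1}-\mu_k$ are non-increasing — one converts these estimates into a \emph{certified} slope bound near the crossing, which pins the crossing to a much smaller sub-bracket. The key quantitative claim to establish is that the bracket width $w_t$ obeys a recursion of the form $w_{t+1}\lesssim w_t^{1/2}$, i.e. $\log w_{t+1}\le \tfrac12\log w_t + O(1)$, so that starting from $w_0=K$ the width reaches $O(1)$ after $t^\star=O(\loglog K)$ phases. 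Allocating a $1/t^\star$ fraction of the budget per phase gives per-arm accuracy $\asymp\sqrt{\loglog K/T}$ at each phase; since an arm is misclassified only if its gap $\Delta_k$ falls below the accuracy achieved at the phase that resolves it, a concentration and union bound over the $O(\loglog K)$ phases yields exact classification of every arm with $\Delta_k$ exceeding this threshold, hence regret $\asymp\sqrt{\loglog K/T}$.

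\textbf{Lower bound (Proposition~\ref{prop:lb_cvx}).} I would build a nested, self-similar family of concave instances realizing $\loglog K$ genuinely ambiguous scales. At scale $i$ the crossing is hidden inside a sub-interval of width $w_i$, with $w_0=K$ and $w_{i+1}\approx\sqrt{w_i}$, carrying a local slope $s_i\asymp\epsilon/w_i$ chosen so that (i) the full profile stays concave and bounded in $[0,1]$ and (ii) at every scale there are two alternatives differing by flipping an arm of gap $\asymp\epsilon$ yet statistically $\epsilon$-close. The squaring $w_{i+1}\approx\sqrt{w_i}$ is precisely what concavity (monotone increments together with the height budget $\le 1$) forces, capping the number of distinguishable scales at $\asymp\loglog K$. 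A change-of-measure / Fano argument over these $\asymp\loglog K$ binary decisions then shows that resolving all of them — necessary to avoid a gap-$\epsilon$ misclassification — costs $\gtrsim \loglog K/\epsilon^2$ pulls, so any budget-$T$ strategy incurs regret $\gtrsim\sqrt{\loglog K/T}$.

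The main obstacle is the lower bound, and specifically turning the ``$w_{i+1}\approx\sqrt{w_i}$, hence $\loglog K$ scales'' heuristic into a rigorous statement: one must exhibit an explicit concave, $[0,1]$-valued profile carrying all scales at once, verify concavity at the junctions between consecutive scales, and — the delicate point — argue that an adaptive learner cannot exploit the global concavity to resolve a deep scale \emph{without} first paying to resolve the shallower ones, so that the $\loglog K$ decisions genuinely compound in the information budget rather than being shortcut. On the upper-bound side the analogous difficulty is handling the \emph{unknown} slope adaptively and controlling error propagation in the noisy slope estimates, so that the width-squaring recursion survives the stochastic fluctuations across all $O(\loglog K)$ phases.
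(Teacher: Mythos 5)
Your overall decomposition (a lower bound matching Proposition~\ref{prop:lb_cvx} plus an upper bound matching Proposition~\ref{prop:exp_con_hi}) is the paper's, but both halves of your plan have genuine gaps, and the lower-bound one is fatal. You place the $\loglog K$ at the wrong level of the construction: you want $\asymp\loglog K$ mutually confusable instances (scales with $w_{i+1}\approx\sqrt{w_i}$) whose binary decisions ``compound'' to force $T\gtrsim \loglog (K)/\epsilon^2$. But your instances are concave profiles differing only in the location of a single crossing of the level $\tau$, so they form a monotone chain: ordering the candidate crossings $p_1<\dots<p_M$, the arms $p_1,\dots,p_M$ are, under any hypothesis, a monotone below/above-$\tau$ sequence with gaps $\asymp\epsilon$. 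Identifying the true instance is therefore itself an \stbp instance on $M$ arms, which by Theorem~\ref{thm:mon_minmax} a learner can solve with budget $\asymp \log(M)/\epsilon^2$; with $M=\loglog K$ this is $\log(\loglog K)/\epsilon^2$, so \emph{no} argument can extract a $\loglog(K)/\epsilon^2$ lower bound from such a family. The compounding you hope for is information-theoretically impossible here: it would require the samples separating the truth from different alternatives to live on disjoint arm sets, whereas for a nested chain of crossings the difference supports overlap (and since the above-$\tau$ set of a concave sequence is an interval, one cannot build more than a couple of disjoint-support confusable alternatives). Your premise that concavity caps the confusable scales at $\loglog K$ is also false: Proposition~\ref{prop:lb_cvx} uses $\mu^l_k=\min(k/2^l,2)\,\epsilon$ with $\tau=\epsilon$ and $l\le\lfloor\log_2 K\rfloor$, i.e.\ $\asymp\log K$ concave instances with \emph{geometrically} spaced crossings, all pairwise within $2\epsilon$ in sup norm, and such that confusing consecutive ones misclassifies an arm of gap $\ge\epsilon/2$. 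A single Fano/refined-Pinsker argument over these $\log K$ hypotheses then gives $T\epsilon^2\gtrsim\loglog K$ directly: you need $e^{\loglog K}=\log K$ instances, after which plain Fano does all the work.

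On the upper bound your mechanism (regula-falsi with certified slopes, width-squaring $w_{t+1}\lesssim\sqrt{w_t}$, $\loglog K$ phases with $O(1)$ arms each) differs from \CTB and has two concrete problems. First, width-squaring is false as stated: if the concave sequence is nearly flat around its crossing, means known to accuracy $\eta$ give essentially no \emph{positional} localization, so no recursion on the raw bracket width can hold; any valid invariant must be regret-aware (``either the bracket shrinks or every remaining ambiguous arm has gap $O(\eta)$''), which is exactly what the events $\xi_i$ in the paper's analysis encode. Second, your budget/union-bound accounting fails: with $\loglog K$ phases of equal budget $T/\loglog K$ and uniform accuracy $\eta\asymp\sqrt{\loglog (K)/T}$, Hoeffding gives per-phase failure probability $\exp\!\big(-c\,(T/\loglog K)\,\eta^2\big)=\exp(-c)$, a constant, so a union bound over phases is vacuous, and patching it costs an extra $\sqrt{\log(\loglog K)}$ factor --- i.e.\ you miss the minimax rate, which is the entire content of the theorem. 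The paper's \CTB gets the $\loglog K$ from a different place altogether: not from the number of phases (it runs $\asymp\log T$ phases with tolerances $\epsilon_i=(7/8)^i$), but from applying \stb to the geometrically spaced log-sets $\mathcal S^{\log}_{l_i,r_i}$ of at most $\log K$ arms, on which concavity (Lemma~\ref{lem:con_analytic_1}) guarantees the effective monotonicity \stb needs, so each phase inherits the $\sqrt{\log(\#\mathrm{arms})/T}$ rate with $\#\mathrm{arms}\le\log K$; and the failure probabilities are staggered geometrically ($\delta_i=2^{i-M}$), making early coarse phases nearly deterministic and letting the expected regret be obtained by integrating the tail without losing any log factors.
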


As stated in the introduction, the closest literature to our setting is that which concerns sequential estimation of a convex function and noisy convex zeroth order optimisation. Since this literature deals with the continuous case, let us first remark that a straightforward\footnote{By simply discretising the space in $K^{1/\beta}$ bins and applying the method on these bins.} corollary of Theorem~\ref{thm:con_minmax} is that in the case where the arms are in $[0,1]$ and where $f$ is $\beta-$H\"older for some $\beta>0$, the minimax expected regret according to our definition (but in this continuous setting) is $\sqrt{\loglog(T)/T}$, see Appendix~\ref{sec:contin} for details.\\
In~\cite{simchowitz2018adaptive}, the authors present the problem of estimating a convex function by constructing a net of points that is more refined in areas where the function varies more, i.e.~by adapting a quadrature method to the noisy setting. Under an assumption on the modulus of continuity, that is essentially equivalent to assuming that the function is $\beta-$H\"older for some $\beta>0$, the authors provide results in the fixed confidence setting. If one inverses their bounds to go to the fixed budget setting, their results hint toward a lower bound on estimating the convex function in $l_\infty$ norm of order $\sqrt{\log (T)/T}$ and an upper bound of order $\log(T)/\sqrt{T}$. The fact that the logarithmic dependency is much worse in their setting than in ours highlights that the problem of estimating entirely the convex function is more difficult than the problem of estimating a single level set. 
\\
In~\cite{yudin}[Chapter 9], and~\cite{wang2017stochastic, agarwal2011stochastic, liang2014zeroth} the authors consider continuous zeroth order noisy convex optimisation, and focus mainly on reducing the exponent for the dimension $d$ - in this setting the minimax precision for estimating the minimum of the function is conjectured to be $d^{3/2}\mathrm{poly}(\log(T))/\sqrt{T}$ where the $\mathrm{poly}(\log(T))$ term is not really investigated, as the problem is already very difficult as it is. We on the other hand consider mainly $d=1$ and aim at obtaining optimal logarithmic terms.

\vspace{-0.5cm}
\section{Minimax optimal algorithms}\label{sec:optimal_alg}
In this section we present algorithms that match minimax regret rates in Section~\ref{sec:minmaxreg} up to multiplicative constants for \tbp, \stbp, \utbp and \ctbp.

\subsection{Unstructured case \tbp}
\label{sec:alg_unstructured_case}
Given an unstructured problem $\unu \in \B$ we consider the algorithm \unif which samples uniformly across the arms. That is each arm in $[K]$ is sampled $\lfloor T/K\rfloor$ times. The learner then classifies each arm according to its sample mean, see Algorithm~\ref{alg:unif} in Appendix~\ref{app:proof_of_unstr}.

Surprisingly the naive \unif algorithm is optimal in the unstructured case with respect to the lower bound of Theorem~\ref{thm:unstr_minmax}. See the proof of Proposition~\ref{prop:unstr_exp_up} in Appendix~\ref{app:proof_of_unstr}. This contrasts with the related TOP-1 bandit problem where the minimax regret rate is $\sqrt{K/T}$, see~\cite{bubeck2009pure, audibert2009minimax} for hints toward this. This is not very surprising as in the TOP-1 problem we are interested in finding one arm only, namely the arm with highest mean, while in our problem we search for \emph{all arms above threshold} and for this we pay an additional $\sqrt{\log K}$.

\vspace{-0.4cm}
\subsection{Monotone case \stbp}
\label{sec:structured}
 In this section we fix a problem $\unu \in \Bs$. We also assume, in this section, without loss of generality that $\tau \in [\mu_1,\mu_K]$. Indeed, we can always add two deterministic arms $0$ and $K+1$ with respective means $\mu_0 = -\infty$ and $\mu_{K+1} = +\infty.$

We introduce the \stb (Monotone Thresholding Bandits) algorithm. It is composed of two sub-algorithms, \explore and \choose. The first algorithm, \explore, performs a random walk on the set of arms $[K]$ seen as a binary tree, the algorithm \choose then selects, among the visited arms, the one that will be chosen as the threshold for the classification. That is, we choose an arm $\hk$ which leads to the estimator $\hQ$, where $\hQ:\ \hQ[k] = -1 \; \forall k < \hk, \; \hQ[k] = 1 \; \forall k \geq \hk\;.$

\paragraph{Binary Tree}\label{sec:bin}
We associate to each problem $\unu \in \Bs$ a binary tree. Precisely we consider a binary tree with nodes of the form $v=\{L,M,R\}$ where $\{L,M,R\}$ are indexes of arms and we note respectively $v(l)=L, v(r)=R, v(m)=M$. The tree is built recursively as follows: the root is $\root =\{1,\floor*{(1+K)/2}, K\}$, and for a node $v=\{L,M,R\}$ with $L,M,R\in\{1,\ldots,K\}$ the left child of $v$ is $L(v)= \{L,M_l,M\}$ and the right child is $R(v)=\{M,M_r,R\}$ with $M_l = \floor*{(L+M)/2}$ and $M_r = \floor*{(M+R)/2}$ as the middle index between. The leaves of the tree will be the nodes $\{v = \{L,M,R\} : R = L +1\}$. If a node $v$ is a leaf we set $R(v) = L(v) = \emptyset$.  We consider the tree up to maximum depth $H= \floor*{\log_2(K)}+1$. We note $P\big(l(v)\big)=P\big(r(v)\big)$ the parent of the two children and let $|v|$ denote the depth of node $v$ in the tree, with $|\root| = 0$. We adopt the convention $P(\root) = \root$. In order to predict the right classification we want to find the arm whose mean is the one just above the threshold $\tau$. Finding this arm is equivalent to inserting the threshold into the (sorted) list of means, which can be done with a binary search in the aforementioned binary tree. But in our setting we only have access to estimates of the means which can be very unreliable if the mean is close to the threshold. Because of this there is a high chance we will make a mistake on some step of the binary search. For this reason we must allow \explore to backtrack and this is why \explore performs a binary search \emph{with corrections}. Then \choose selects among the visited arms the most promising one.

\paragraph{\explore algorithm} We first define the following integers,

\[T_1 := \lceil 6 \log(K) \rceil \qquad T_2 := \floor*{\frac{T}{3T_1}}\;.\]
The algorithm \explore is then essentially a random walk on said binary tree moving one step per iteration for a total of $T_1$ steps. Let $v_1 = \root$ and for $t < T_1$ let $v_t$ denote the current node, the algorithm samples arms $\{v_t(k) : k \in \{l,m,r\}\}$ each $T_2$ times. Let the sample mean of arm $v_t(k)$ be denoted $\hat{\mu}_{k,t}$. \explore will use these estimates to decide which node to explore next. If an error is detected - i.e. the interval between left and rightmost sample mean do does not contain the threshold, then the algorithm backtracks to the parent of the current node, otherwise \explore acts as the deterministic binary search for inserting the threshold $\tau$ in the sorted list of means. More specifically, if there is an anomaly, $\tau \not \in \left[\hat{\mu}_{l,t},\hat{\mu}_{r,t}\right]$, then the next node is the parent $v_{t+1} = P(v_t)$, otherwise if $\tau \in \left[\hat{\mu}_{l,t},\hat{\mu}_{m,t}\right]$ the the next node is the left child $v_{t+1} = L(v_t)$ and if $\tau \in \left[\hat{\mu}_{m,t},\hat{\mu}_{r,t}\right]$ the next node is the right child  $v_{t+1} = R(v_t)$. If at time $t$, $\tau \in \left[\hat{\mu}_{l,t},\hat{\mu}_{r,t}\right]$ and the node $v_t$ is a leaf then $v_{t+1} = v_t$. See Algorithm \explore for details.

\begin{algorithm}[H]
\caption{\explore}
\label{alg:explore} 
\SetAlgoLined
{\bfseries Initialization:} $v_1=\root$\\
\For{$t=1:T_1$}{
sample $T_2$ times each arm in $v_t$\\
\If{$ \tau \not\in[\hat{\mu}_{l,t},\hat{\mu}_{r,t}]$}{
$v_{t+1} = P(v_t)$ \\
\uElseIf{$R(v_t) = L(v_t) = \emptyset$}{
$v_{t+1} = v_t$}
\uElseIf{$\hat{\mu}_{m,t} \leq \tau \leq \hat{\mu}_{r,t}$}{
$v_{t+1} = R(v_t)$}
\uElseIf{$\hat{\mu}_{l,t} \leq \tau \leq \hat{\mu}_{m,t}$}{
$v_{t+1} = L(v_t)$}
}
}
\end{algorithm}

\paragraph{\choose algorithm} Algorithm \choose takes the history of algorithm \explore, namely the sequence of empirical means $\left(\hat{\mu}_{l,t},\hat{\mu}_{m,t},\hat{\mu}_{r,t}\right)_{t \leq T_1}$ and visited nodes $\left(v_t\right)_{t\leq T_1}$, as the input. In addition it takes as input a parameter $\epsilon > 0$. The action of \choose is to then identify the set of arms among those sampled whose empirical means satisfy one or more of the following: 
\begin{itemize}
    \item their empirical mean is within $\epsilon$ of $\tau$, 
    \item their empirical mean is less than $\tau$ and the empirical mean of the right hand adjacent arm is greater than $\tau$.
\end{itemize}
Here we recognize the set of arms that may lead to a classification with simple regret smaller than $\epsilon$ if the estimates are correct. The algorithm \choose then orders this set by ascending arm index and returns the median, see Algorithm~\ref{alg:choose}.

\begin{algorithm}[H]
\caption{\choose}
\label{alg:choose} 
\SetAlgoLined
{\bfseries Input:} $\epsilon$,  $\left(\hat{\mu}_{l,t},\hat{\mu}_{m,t},\hat{\mu}_{r,t}\right)_{t \leq T_1},  \left(v_t\right)_{t\leq T_1}$\\
{\bfseries Initialization:} $S_1 = [\ ]$\\
\For{$t=1:T_1$}{
 $S_{t+1}=S_t$\\
\If{$\{\exists k \in \{l,m,r\} : |\hmu_{k,t} - \tau| \leq \epsilon\}$
$\lor$ $\{k= v_t (r) = v_t (l) +1;\ \hat{\mu}_{l,t} +\epsilon < \tau \leq \hat{\mu}_{r,t} -\epsilon\}$}{
append $v_t(k)$ to the list $S_{t+1}$ 
}
}
order the list $S_{T_1 + 1}$ by  ascending arm index\\
\Return $\median(S_{T_1+1})$.
\end{algorithm}

\begin{remark}
Note that for any time $t \leq T_1$ we append at most one arm to the list $S_{t+1}$. If at time $t$ there are multiple candidates the choice is made at random. 
\end{remark}

\paragraph{\stb algorithm} The algorithm first runs \explore. We fix a constant $\epsilon_0 = \sqrt{2\log(48)/T_2}\,,$
and compute the parameter $\hepsilon$ with the history of algorithm~\explore,
\begin{equation*}
\hepsilon = \begin{cases} 
2\epsilon_0 &\text{if } \exists (t,k):\  k= v_t (l) = v_t (r) -1;\ \hat{\mu}_{l,t} \leq \tau \leq \hat{\mu}_{r,t} \\
\max\big(2\epsilon_0, \min\limits_{t\leq T_1, k \in\{l,m,r\}} |\hmu_{k,t}-\mu_{v_t(k)}|\big) &\text{else}
\end{cases}
\,.
\end{equation*}

 Then \stb runs the algorithm~\choose with parameter $\hepsilon$. Note that $\hepsilon$ is the smallest parameter greater than $2\epsilon_0$ such that the list $S_{T_1+1}$ is non empty. This choice will become clear in the proof of Theorem~\ref{thm:mon_minmax} in Appendix~\ref{app:mon_minimax}. Morally it allows to select a majority of ``good" arms (i.e that provide a low regret classification $\hQ$) in $S_{T_1+1}$ such that the median $\hk$ is also a ``good" arm, see Algorithm~\ref{alg:STB2}.

\begin{algorithm}[H]
\caption{\stb}
\label{alg:STB2} 
\SetAlgoLined

{\bfseries run} algorithm \explore
\begin{itemize}

    \item Output: $\left(\hat{\mu}_{l,t},\hat{\mu}_{m,t},\hat{\mu}_{r,t}\right)_{t \leq T_1}$,  $\left(v_t\right)_{t\leq T_1}$
\end{itemize}

{\bfseries run} algorithm \choose
\begin{itemize}
    \item Input: $\hepsilon,\, \left(\hat{\mu}_{l,t},\hat{\mu}_{m,t},\hat{\mu}_{r,t}\right)_{t \leq T_1}$,  $\left(v_t\right)_{t\leq T_1}$
    \item Output: arm index $\hk$ 
\end{itemize}

\Return  $(\hk, \hQ) :\quad \hQ_k=  2\ind_{\{k \geq \hk \}}-1$
\end{algorithm}
\vspace{-0.2cm}
The \stb algorithm will achieve the minimax rate on expected simple regret given in Theorem \ref{thm:mon_minmax}, see the proof of Theorem \ref{thm:mon_minmax}, in Appendix~\ref{app:mon_minimax}, for details. 

\vspace{-0.2cm} 
\begin{remark}[Adaptation of \stb to a non-increasing sequence, \dstb]\label{rem:DSTB}
\stb is applied for a monotone non-decreasing sequence $(\mu_k)_k$, and it is easy to adapt it to a monotone non-increasing sequence $(\mu_k)_k$. In this case, we transform the label of arm $i$ into $K-i$, and apply \stb to the newly labeled problem - where the mean sequence in now non-decreasing. We refer to this modification as \dstb.

\end{remark}

\vspace{-0.1cm}

\vspace{-0.6cm}
\subsection{Unimodal case \utbp}

We now turn to the algorithm for the unimodal case, \UTB (Unimodal Thresholding Bandits) algorithm. This algorithm is based on the algorithm $\stb$, and on any black-box algorithm that is minimax-optimal for TOP-1 simple regret on $\cB$, as described in~\cite{bubeck2009pure}. We name such an algorithm \SR; it takes no parameter and returns an arm $\hat m$. Since \SR is minimax optimal for the TOP-1 simple regret, we have on any problem $\nu \in \cB$ with means $(\mu_k)_k$ and maximal mean $\mu^*$, that if $\SR$ is run for $T$ times, then
$$\mathbb E_\nu [\mu^* - \mu_{\hat m}] \leq c_{\SR} \sqrt{\frac{K}{T}},$$
where $c_{\SR}>0$ is a universal constant. Note that taking MOSS from~\cite{audibert2009minimax} and modifying it so that it outputs $\hat m$ as being sampled at random according to the proportion of times that each arm was sampled by MOSS, is minimax-optimal algorithm for the TOP-1 problem.

The idea of \UTB is to start by running \SR on a fraction of the budget, and take its output $\hat m$. Then we run respectively \stb on $\{1, \ldots, \hat m\}$, and \dstb on $\{\hat m, \ldots, K\}$ on a fraction of the budget. They respectively return $\hat l, \hat r$. We then  use the last fraction of the budget to sample all arms in $\{\hat l, \hat r, \hat m, \hat l-1, \hat r+1\}$ and compute the respective empirical means $\hmu_{k}$ for $k$ being one of these arms. If $\hat l, \hat r$ seem either close enough to the threshold, or seem above while the adjacent arm seems below, we predict $\{\hat l, \ldots, \hat r\}$ as the set of arms above threshold. Otherwise we return the empty set, see Algorithm~\ref{alg:UTB}.

This intuitively makes sense as $\hat m$ is an estimator of the maximum $k^*$ of the mean sequence and unimodality implies that $(\mu_k)_{k \leq k^*}$  is non-decreasing, and that $(\mu_k)_{k \geq k^*}$  is non-increasing. So $\hat l, \hat r$ are estimators of the points where the mean sequence crosses the threshold, respectively on the left and on the right of the estimator of the maximum. The last step - where we compute empirical means and check based on them if the outputs seem reasonable - is a checking step for making sure that the output of \SR is not so close to threshold (or flawed), that the outputs of \stb and \dstb are completely flawed.

\begin{algorithm}[H]
\caption{\UTB}
\label{alg:UTB} 
\SetAlgoLined
{\bfseries Initialization:} $\hat m$ =  output of $\SR$ with budget $\lfloor T/4 \rfloor$,\\
$\hat l =\hat k$ output  of \stb with arms $\{1,\ldots, \hat m\}$, threshold $\tau$, budget $\lfloor T/8 \rfloor$, \\
 $\hat r = \hat k$ output of \dstb with arms $\{ \hat m, \ldots, K\}$, threshold $\tau$, budget $\lfloor T/8 \rfloor$, \\
Sample $\hat m, \hat l,\hat r, \hat l-1, \hat r+1$ each $\lfloor T/10 \rfloor$ times\\
\If{$\small \big(\{\hmu_{\hat l-1}  < \tau < \hmu_{\hat l} \} \lor \{|\hmu_{\hat l}-\tau| \leq  \hmu_{\hat m} - \tau\}\big) \land \big(\{\hmu_{\hat r} < \tau < \hmu_{\hat r+1} \}\big) \lor \{|\hmu_{\hat r}-\tau| \leq  \hmu_{\hat m} - \tau\}\big)$}{
 $\hat S= \{\hat l, \ldots, \hat r\}$\\
\Else{ $\hat S = \emptyset$}}

\Return  $\hQ :\quad \hQ_k=  2\ind_{\{k \in \hat{S}\}}-1$
\end{algorithm}

\vspace{-0.8cm}
\subsection{Concave case \ctbp}

In this section, we present the \CTB algorithm, which is based on several applications of \stb. We first define the following \textit{log-sets}. Consider two integers $ l \leq r $ and the associated set $\{l, l+1, \ldots, r\}$. We write $\mathcal S^{\log}_{l,r} = \{l, l+1, l+2, l+2^2, \ldots, (l+2^a) \land \lfloor (l+r)/2\rfloor\}$, where $a$ is the smallest integer such that $l+2^a \leq r \leq l+2^{a+1}$.

 Algorithm \CTB proceeds in phases. At phase $i$ an interval $\{l_i, \ldots, r_i\}$ is refined from both ends by applying \stb and \dstb. Algorithm \CTB makes sure that with high probability, the regret of $\{l_i, \ldots, r_{i}\}$, is bounded by $\epsilon_i = (7/8)^i$. A very important idea of \CTB is that it does not apply \stb and \dstb on $\{l_i, \ldots, r_{i}\}$ but thanks to the \emph{concavity} only on the \textit{log-sets associated to $\{l_i, \ldots, r_{i}\}$}. I.e.~we will apply $\stb$ on $\mathcal S^{\log}_{l_i,r_i}$ and $\dstb$ on $-\mathcal S^{\log}_{-r_i,-l_i}$. This allows us to have much shorter phases as the two log-sets contain about $\log(r_i - l_i)$ arms, instead of $r_i - l_i$ arms.

We now describe formally \CTB. The algorithm \CTB consists of two  sub-routines, an iterative application of \stb and then a decision rule based on the collected samples. These routines are respectively the \textbf{for loop} and \textbf{if statement} in the \ctb algorithm. 
\paragraph{Iterative application of \stb.} For $\tilde M>0$ and $i<M$ we set
\begin{align*}
\delta_i^{(\tilde M)} = 2^{i-\tilde M} \qquad \epsilon_i = \left(1-\frac{1}{8}\right)^i  \qquad \tau_i = \tau - \frac{3}{4}\epsilon_i, \qquad
T_2^{(i)}(\tilde M) =  \left\lfloor \frac{2^{14} \log\log K}{\epsilon_i^2}\log\left(\frac{1}{\delta_i^2}\right)\right\rfloor,
\end{align*}
and let $M$ be the largest integer such that
$6\sum_{i \leq \tilde M} T_2^{(i)}(\tilde M) \leq T$.
In what follows we write
$$\delta_i:=\delta_i^{(M)},~~~~T_2^{(i)} = T_2^{(i)}(M).$$

\ctb proceeds in $M$ phases and at each it updates a set of three arms $l_i \leq m_i \leq r_i$ - where $m_i$ is at the middle between $l_i$ and $r_i$. It first samples all these arms - as well as $l_i - 1, r_i+1$ - $T_2^{(i)}$ times, and these samples are used to compute empirical means $\hmu_{p,i}$ for $p \in\{m, l, r,l-1,r+1\}$ - corresponding respectively to the arms $\{m_i, l_i, r_i,l_i-1,r_i+1\}$. It then runs respectively $\stb$ on $\mathcal S^{\log}_{l_i,r_i}$ and $\dstb$ on $-\mathcal S^{\log}_{-r_i,-l_i}$, both with threshold $\tau_i$ and budget $T_2^{(i)}$. These routines output $l_{i+1}, r_{i+1}$, and we define $m_{i+1}$ as the middle between these arms. 

\vspace{-0.2cm}
\paragraph{Decision rule} The second sub routine of \ctb is a decision rule between all $l_{i}, r_{i}$, for finding the right scale, based on the arms and empirical means collected in the previous routine. It takes the $l_{i}, r_{i}$ that are as close as possible to arms $m_i$ far from threshold, but that are close to threshold - and it outputs a set $\hat S$. Finally \CTB classifies this set as being above threshold. Set
\begin{align*}
    &\mathcal I_m = \{m_i : \hmu_{m,i} \geq \tau + 2\epsilon_i\},~~\mathrm{and}\\
\mathcal I_l = \{l_i : \hmu_{l,i}\geq \tau- 2\epsilon_i,~~ &\hmu_{l-1,i}\leq \tau  -  \frac{\epsilon_i}{4}\},\mathrm{and}~~\mathcal I_r = \{r_i : \hmu_{r,i}\geq \tau- 2\epsilon_i, \hmu_{r+1,i}\leq \tau -  \frac{\epsilon_i}{4}\}.
\end{align*}

\begin{algorithm}[H]
\SetAlgoLined
\caption{\CTB}
\label{alg:CTB} 
{\bfseries Initialization:}  $l_0 = 1, r_0 = K, m_0 = \lfloor\frac{l_0 + r_0}{2} \rfloor$\\ 
\For{$i = 1:M$}{

sample arms $l_i, l_i - 1, r_i, r_i+1$ and $m_i$ each $T_2^{(i)}$ times.\\
$l_{i+1} =$ output $\hat k$ of \stb with arms $\mathcal S^{\log}_{l_i,r_i}$, threshold $\tau_i$, budget $T_2^{(i)}$ \\

$r_{i+1} =$ output $\hat k$ of \dstb with arms $-S^{\log}_{-r_i, -l_i}$, threshold $\tau_i$, budget $T_2^{(i)}$ \\
$m_{i+1} = \lfloor\frac{l_{i+1} +r_{i+1}}{2} \rfloor$
}

\If{$\mathcal I_m = \emptyset$}{
Set $\hat S = \emptyset$\\
\Else{
Set $\hat l = \max \{k \in \mathcal I_l, k \leq \min_i \mathcal I_m\}$\\
Set $\hat r = \min \{k \in \mathcal I_r, k \geq \max_i \mathcal I_m\}$\\
Set $\hat S = \{\hat l, \ldots, \hat r\}$\\
}
}

\Return  $\hQ :\quad \hQ_k=  2\ind_{\{k \in \hat{S}\}}-1$

\end{algorithm}

\paragraph{Acknowledgements.} The work of J.~Cheshire is supported by the Deutsche Forschungsgemeinschaft (DFG) DFG - 314838170, GRK 2297 MathCoRe.
The work of P.~M\'enard  is  supported by the European CHISTERA project DELTA, and partially supported by the SFI Sachsen-Anhalt for the project RE-BCI, and by the UFA-DFH through the French-German Doktorandenkolleg CDFA 01-18. The work of A. Carpentier is partially supported by the Deutsche Forschungsgemeinschaft (DFG) Emmy Noether grant MuSyAD (CA 1488/1-1), by the DFG - 314838170, GRK 2297 MathCoRe, by the DFG GRK 2433 DAEDALUS (384950143/GRK2433), by the DFG CRC 1294 'Data Assimilation', Project A03, and by the UFA-DFH through the French-German Doktorandenkolleg CDFA 01-18 and by the UFA-DFH through the French-German Doktorandenkolleg CDFA 01-18 and by the SFI Sachsen-Anhalt for the project RE-BCI.

\newpage

\bibliography{biblio2}

\begin{thebibliography}{31}
\providecommand{\natexlab}[1]{#1}
\providecommand{\url}[1]{\texttt{#1}}
\expandafter\ifx\csname urlstyle\endcsname\relax
  \providecommand{\doi}[1]{doi: #1}\else
  \providecommand{\doi}{doi: \begingroup \urlstyle{rm}\Url}\fi

\bibitem[Chen et~al.(2014)Chen, Lin, King, Lyu, and
  Chen]{chen2014combinatorial}
Shouyuan Chen, Tian Lin, Irwin King, Michael~R Lyu, and Wei Chen.
\newblock Combinatorial pure exploration of multi-armed bandits.
\newblock In \emph{Advances in Neural Information Processing Systems}, pages
  379--387, 2014.

\bibitem[Chen et~al.(2016)Chen, Hu, Li, Li, Liu, and Lu]{chen2016combinatorial}
Wei Chen, Wei Hu, Fu~Li, Jian Li, Yu~Liu, and Pinyan Lu.
\newblock Combinatorial multi-armed bandit with general reward functions.
\newblock In \emph{Advances in Neural Information Processing Systems}, pages
  1659--1667, 2016.

\bibitem[Even-Dar et~al.(2002)Even-Dar, Mannor, and Mansour]{even2002pac}
Eyal Even-Dar, Shie Mannor, and Yishay Mansour.
\newblock Pac bounds for multi-armed bandit and markov decision processes.
\newblock In \emph{International Conference on Computational Learning Theory},
  pages 255--270. Springer, 2002.

\bibitem[Chen and Li(2015)]{chen2015optimal}
Lijie Chen and Jian Li.
\newblock On the optimal sample complexity for best arm identification.
\newblock \emph{arXiv preprint arXiv:1511.03774}, 2015.

\bibitem[Simchowitz et~al.(2017)Simchowitz, Jamieson, and
  Recht]{simchowitz2017simulator}
Max Simchowitz, Kevin Jamieson, and Benjamin Recht.
\newblock The simulator: Understanding adaptive sampling in the
  moderate-confidence regime.
\newblock \emph{arXiv preprint arXiv:1702.05186}, 2017.

\bibitem[Garivier and Kaufmann(2016)]{garivier2016optimal}
Aur{\'e}lien Garivier and Emilie Kaufmann.
\newblock Optimal best arm identification with fixed confidence.
\newblock In \emph{Conference on Learning Theory}, pages 998--1027, 2016.

\bibitem[Locatelli et~al.(2016)Locatelli, Gutzeit, and
  Carpentier]{locatelli2016optimal}
Andrea Locatelli, Maurilio Gutzeit, and Alexandra Carpentier.
\newblock An optimal algorithm for the thresholding bandit problem.
\newblock \emph{arXiv preprint arXiv:1605.08671}, 2016.

\bibitem[Mukherjee et~al.(2017)Mukherjee, Purushothama, Sudarsanam, and
  Ravindran]{mukherjee2017thresholding}
Subhojyoti Mukherjee, Naveen~Kolar Purushothama, Nandan Sudarsanam, and
  Balaraman Ravindran.
\newblock Thresholding bandits with augmented ucb.
\newblock In \emph{Proceedings of the 26th International Joint Conference on
  Artificial Intelligence}, pages 2515--2521. AAAI Press, 2017.

\bibitem[Zhong et~al.(2017)Zhong, Huang, and Liu]{JieZhong17bandits}
Jie Zhong, Yijun Huang, and Ji~Liu.
\newblock Asynchronous parallel empirical variance guided algorithms for the
  thresholding bandit problem.
\newblock \emph{arXic preprint arXiv:1704.04567}, 2017.

\bibitem[Bubeck et~al.(2009)Bubeck, Munos, and Stoltz]{bubeck2009pure}
S{\'e}bastien Bubeck, R{\'e}mi Munos, and Gilles Stoltz.
\newblock Pure exploration in multi-armed bandits problems.
\newblock In \emph{International conference on Algorithmic learning theory},
  pages 23--37. Springer, 2009.

\bibitem[Audibert and Bubeck(2010)]{audibert2010best}
Jean-Yves Audibert and S{\'e}bastien Bubeck.
\newblock Best arm identification in multi-armed bandits.
\newblock 2010.

\bibitem[Gabillon et~al.(2012)Gabillon, Ghavamzadeh, and
  Lazaric]{gabillon2012best}
Victor Gabillon, Mohammad Ghavamzadeh, and Alessandro Lazaric.
\newblock Best arm identification: A unified approach to fixed budget and fixed
  confidence.
\newblock In \emph{Advances in Neural Information Processing Systems}, pages
  3212--3220, 2012.

\bibitem[Carpentier and Locatelli(2016)]{carpentier2016tight}
Alexandra Carpentier and Andrea Locatelli.
\newblock Tight (lower) bounds for the fixed budget best arm identification
  bandit problem.
\newblock In \emph{Conference on Learning Theory}, pages 590--604, 2016.

\bibitem[Garivier et~al.(2017)Garivier, M{\'e}nard, Rossi, and
  Menard]{garivier2017thresholding}
Aur{\'e}lien Garivier, Pierre M{\'e}nard, Laurent Rossi, and Pierre Menard.
\newblock Thresholding bandit for dose-ranging: The impact of monotonicity.
\newblock \emph{arXiv preprint arXiv:1711.04454}, 2017.

\bibitem[Feige et~al.(1994)Feige, Raghavan, Peleg, and
  Upfal]{feige1994computing}
Uriel Feige, Prabhakar Raghavan, David Peleg, and Eli Upfal.
\newblock Computing with noisy information.
\newblock \emph{SIAM Journal on Computing}, 23\penalty0 (5):\penalty0
  1001--1018, 1994.

\bibitem[Xu et~al.(2019)Xu, Chen, Singh, and Dubrawski]{Xu19binaryduel}
Yichong Xu, Xi~Chen, Aarti Singh, and Artur Dubrawski.
\newblock Thresholding bandit problem with both duels and pulls.
\newblock \emph{arXic preprint arXiv:1910.06368v1}, 2019.

\bibitem[Ben-Or and Hassidim(2008)]{ben2008bayesian}
Michael Ben-Or and Avinatan Hassidim.
\newblock The bayesian learner is optimal for noisy binary search (and pretty
  good for quantum as well).
\newblock In \emph{2008 49th Annual IEEE Symposium on Foundations of Computer
  Science}, pages 221--230. IEEE, 2008.

\bibitem[Emamjomeh-Zadeh et~al.(2016)Emamjomeh-Zadeh, Kempe, and
  Singhal]{emamjomeh2016deterministic}
Ehsan Emamjomeh-Zadeh, David Kempe, and Vikrant Singhal.
\newblock Deterministic and probabilistic binary search in graphs.
\newblock In \emph{Proceedings of the forty-eighth annual ACM symposium on
  Theory of Computing}, pages 519--532. ACM, 2016.

\bibitem[Nowak(2009)]{Nowak09binary}
Robert Nowak.
\newblock The geometry of generalized binary search.
\newblock \emph{arXic preprint arXiv:0910.4397}, 2009.

\bibitem[Karp and Kleinberg(2007)]{karp2007noisy}
Richard~M Karp and Robert Kleinberg.
\newblock Noisy binary search and its applications.
\newblock In \emph{Proceedings of the eighteenth annual ACM-SIAM symposium on
  Discrete algorithms}, pages 881--890. Society for Industrial and Applied
  Mathematics, 2007.

\bibitem[Combes and Proutiere(2014{\natexlab{a}})]{combes2014unimodal}
Richard Combes and Alexandre Proutiere.
\newblock Unimodal bandits without smoothness.
\newblock \emph{arXiv preprint arXiv:1406.7447}, 2014{\natexlab{a}}.

\bibitem[Combes and Proutiere(2014{\natexlab{b}})]{combes2014unimodal2}
Richard Combes and Alexandre Proutiere.
\newblock Unimodal bandits: Regret lower bounds and optimal algorithms.
\newblock In \emph{International Conference on Machine Learning}, pages
  521--529, 2014{\natexlab{b}}.

\bibitem[Paladino et~al.(2017)Paladino, Trovo, Restelli, and
  Gatti]{paladino2017unimodal}
Stefano Paladino, Francesco Trovo, Marcello Restelli, and Nicola Gatti.
\newblock Unimodal thompson sampling for graph-structured arms.
\newblock In \emph{Thirty-First AAAI Conference on Artificial Intelligence},
  2017.

\bibitem[Yu and Mannor(2011)]{yu2011unimodal}
Jia~Yuan Yu and Shie Mannor.
\newblock Unimodal bandits.
\newblock 2011.

\bibitem[Simchowitz et~al.(2018)Simchowitz, Jamieson, Suchow, and
  Griffiths]{simchowitz2018adaptive}
Max Simchowitz, Kevin Jamieson, Jordan~W Suchow, and Thomas~L Griffiths.
\newblock Adaptive sampling for convex regression.
\newblock \emph{arXiv preprint arXiv:1808.04523}, 2018.

\bibitem[Nemirovski and Yudin.(1983)]{yudin}
A.~Nemirovski and D.~Yudin.
\newblock Problem complexity and method efficiency in optimization.
\newblock \emph{Wiley, New York}, 1983.

\bibitem[Wang et~al.(2017)Wang, Du, Balakrishnan, and
  Singh]{wang2017stochastic}
Yining Wang, Simon Du, Sivaraman Balakrishnan, and Aarti Singh.
\newblock Stochastic zeroth-order optimization in high dimensions.
\newblock \emph{arXiv preprint arXiv:1710.10551}, 2017.

\bibitem[Agarwal et~al.(2011)Agarwal, Foster, Hsu, Kakade, and
  Rakhlin]{agarwal2011stochastic}
Alekh Agarwal, Dean~P Foster, Daniel~J Hsu, Sham~M Kakade, and Alexander
  Rakhlin.
\newblock Stochastic convex optimization with bandit feedback.
\newblock In \emph{Advances in Neural Information Processing Systems}, pages
  1035--1043, 2011.

\bibitem[Liang et~al.(2014)Liang, Narayanan, and Rakhlin]{liang2014zeroth}
Tengyuan Liang, Hariharan Narayanan, and Alexander Rakhlin.
\newblock On zeroth-order stochastic convex optimization via random walks.
\newblock \emph{arXiv preprint arXiv:1402.2667}, 2014.

\bibitem[Audibert and Bubeck(2009)]{audibert2009minimax}
Jean-Yves Audibert and S{\'e}bastien Bubeck.
\newblock Minimax policies for adversarial and stochastic bandits.
\newblock 2009.

\bibitem[Gerchinovitz et~al.(2017)Gerchinovitz, M{\'e}nard, and
  Stoltz]{gerchinovitz2017fano}
Sebastien Gerchinovitz, Pierre M{\'e}nard, and Gilles Stoltz.
\newblock Fano's inequality for random variables.
\newblock \emph{arXiv preprint arXiv:1702.05985}, 2017.

\end{thebibliography}
\newpage
\tableofcontents
\newpage

\appendix

\clearpage
\section{Adaptation to the \texorpdfstring{$\beta$}{TEXT}-H\"older continuous case}\label{sec:contin}
In this section we explain how our results can be adapted in a very simple way to the case where the arms are not $\{1,\ldots,K\}$ but the continuous set $[0,1]$, and where the mean sequence $(\mu_k)_{k\in [0,1]}$ is now a function. We assume, on top of the fact that the distributions are supported in $[0,1]$, that the mean function $\mu$ is $\beta$-H\"older for some constant $\beta>0$, i.e.~in the case $\beta \leq 1$ and a constant $L>0$ such that $\forall x,y \in [0,1]$, $|\mu_x - \mu_y| \leq L |x - y|^\beta$. In this case, straightforward corollaries of our results imply the minimax regret rates in Table~\ref{tab:KV1}.

In oder to get these results, it is sufficient to divide $[0,1]$ in $M$ intervals of same size and adapt the results as usually done in the non-parametric litterature (by controlling the bias). We need to choose (i) $M$ as $\Big(\frac{T}{\log T}\Big)^{\frac{1}{2\beta+1}}$ in \tbp, (ii) $M$ as $T^{1/\beta}$ in \stbp, (iii) $M$ as $T^{\frac{1}{2\beta+1}}$ in \utbp, and (iii) $M$ as $T^{1/\beta}$ in \ctbp.

Interestingly, the rates of \stbp and \ctbp \textit{do not depend on $\beta$} - but note that $\beta$ plays a role in the multiplicative constants in front of the rate, i.e.~the smaller $\beta$, the larger the constant. On the other hand the rates in \tbp and \utbp depend on $\beta$. Note that this is a phenomenon \emph{specific to the $1$-dimensional case}. Indeed, finding the level set of a monotone and of a convex function in dimension $d$ is typically done at a much slower rate, depending on $\beta$ and $\d$.

\begin{table}[!h]
\begin{center}
\renewcommand{\arraystretch}{1.7}
 \begin{tabular}{c||c|c|c|c}
  Our results & Unstructured   & Monotone  & Unimodal & Convex   \\
    &  \tbp  &  \stbp &  \utbp &  \ctbp  \\\hline \hline
 K-arms& $\sqrt{\frac{K\log K}{T}}$ & $\sqrt{\frac{\log K \lor 1}{T}}$ & $\sqrt{\frac{K}{T}}$ & $\sqrt{\frac{\log\log K \lor 1}{T}}$ \\ \hline
 $\beta$-H\"older& $\Big(\frac{\log T}{T}\Big)^{\frac{\beta}{2\beta+1}}$ & $\sqrt{\frac{\log T \lor 1}{T}}$ &  $\Big(\frac{1}{T}\Big)^{\frac{\beta}{2\beta+1}}$  & $\sqrt{\frac{\log\log T \lor 1}{T}}$
 \end{tabular}

 \caption{Order of the minimax expected regret for the thresholding bandit problem, in the case of all four structural assumptions on the means of the arms considered in this paper. All results are given up to universal multiplicative constants. The first line concerns the $K-$armed setting of the main paper, and the second line concerns the $\mathcal X$-armed setting where the set of arms is $[0,1]$ and where the function is $\beta$-H\"older (on top of the shape constraints).}
 \label{tab:KV1}
\end{center}
 \end{table}

\section{Extension to \texorpdfstring{$\sigma^2$}{TEXT}-sub-Gaussian for \tbp and \stbp} \label{sunGaus}
While in the main text for simplicity we only consider distributions bounded on the $[0,1]$ interval all proofs relating to the \tbp and \stbp given in the appendix will extend to the sub Gaussian case. The lower bound for the \ctbp will also extend to the sub Gaussian case. That is we redefine the setting as follows: the learner is presented with a $K$-armed bandit problem $\unu =\{\nu_1,\ldots,\nu_K\}$, where $\nu_k$ is the unknown distribution of arm $k$. Let $\sigma^2 >0$, all arms are assumed to be $\sigma^2$-sub-Gaussian as described in the following definition, we write $\mu_k$ for the mean of arm $k$.

\begin{definition}[$\sigma^2$-sub-Gaussian]\label{def:subgau}
A distribution $\nu$ of mean $\mu$ is said to be $\sigma^2$-sub-Gaussian if for all $t \in \R$ we have,
$$\mathbb{E}_{X \sim \nu}\big[e^{t\left(X - \mu\right)}\big] \leq \mathrm{exp}\left(\frac{\sigma^2 t^2}{2}\right)\,.$$
\end{definition}
In particular the Gaussian distributions with variance smaller than $\sigma^2$ and the distributions with absolute values bounded by $\sigma$ are $\sigma^2$-sub-Gaussian.

The only adaptation that has to be made to accomodate this case in the \stb algorithm is to define
$$\epsilon_0 =\sqrt{\frac{2\sigma^2\log(48)}{T_2}}.$$

\section{Proof of Theorem~\ref{thm:unstr_minmax}}\label{app:proof_of_unstr}
In the proof of all results in this section, we assume that the more general sub-Gaussian assumption described in Section~\ref{sunGaus} is satisfied - and not necessarily that the distributions of all arms are bounded on the $[0,1]$ interval. We explain in the proof how the lower bound can be straightforwardly adapted to distributions supported in $[0,1]$.

 We denote the Kullback-Leibler divergence between two Bernoulli distributions $\ber(p)$ and $\ber(q)$ (with the usual conventions) by 
\[
\kl(p,q)=p \log\frac{p}{q} +(1-p) \log \frac{1-p}{1-q}\,.
\]

\begin{algorithm}[H]
\caption{\unif}\label{alg:unif} 
\SetAlgoLined
\For{$k=1:K$}{
    Sample arm $k$ a total of $\lfloor\frac{T}{K}\rfloor$ times.\\
    Compute $\hmu_k$ the sample mean of arm $k$. 
}
\Return\vspace{-5mm} \begin{equation*}\hQ :\quad    \hQ_k=
    \begin{cases}
      -1 & \text{if}\ \hmu_k < \tau \\
      1 & \text{if}\ \hmu_k \geq \tau
    \end{cases}
  \end{equation*}

\end{algorithm}
During this section we will prove Theorem \ref{thm:unstr_minmax} by first demonstrating a lower bound on expected regret across $\cB$ and then showing that the \unif algorithm achieves said lower bound. We first prove the following proposition to establish a lower bound.
\begin{proposition}\label{prop:unstr_lo}
For any $T\geq 1$ and any strategy $\pi$, there exists a unstructured bandit problem $\unu \in \B$, such that
\[
\bR^{\pi,\unu}_T \geq \frac{3}{4}\sqrt{\frac{\sigma^2 \max\big(2,\log(K)\big) K }{8T}}\,.
\]
\end{proposition}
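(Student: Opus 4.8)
The plan is to prove the lower bound by an information-theoretic argument on a family of (sub-)Gaussian instances equipped with a random-sign prior, turning the minimax regret into a \emph{simultaneous} testing problem whose difficulty carries the $\sqrt{\log K}$ factor.

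\textbf{Reduction to a sign-detection problem.} Fix a gap $\Delta$ of order $\sqrt{\sigma^2\max(2,\log K)K/T}$, to be tuned. Consider the $2^K$ instances $\unu^{\epsilon}$ indexed by $\epsilon\in\{-1,1\}^K$, where arm $k$ is $\mathcal{N}(\tau+\epsilon_k\Delta,\sigma^2)$ (for the bounded case one replaces this by Bernoulli or truncated arms with means $\tau\pm\Delta$ and $\kl\asymp\Delta^2/\sigma^2$, which places the instances in $\B$). Each gap is exactly $\Delta$ and the true label vector is $Q^{\epsilon}=\epsilon$, so the regret is precisely $\Delta\,\ind\{\hQ\neq\epsilon\}$ and $\bR^{\pi,\unu^{\epsilon}}_T=\Delta\,\PP_{\unu^{\epsilon}}(\hQ\neq\epsilon)$. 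It therefore suffices to exhibit some $\epsilon$ whose error probability is bounded below by a constant, and I do this by lower bounding the average over $\epsilon$ drawn uniformly.

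\textbf{Factorisation of the posterior.} The key structural observation is that, since the sampling rule does not depend on $\epsilon$, the likelihood of the observed history $\mathcal{D}$ factorises across arms, so conditionally on $\mathcal{D}$ the coordinates $(\epsilon_k)_k$ are \emph{independent}, each posterior depending only on the samples drawn from arm $k$. Hence for any strategy and any data-measurable estimator, $\PP(\hQ=\epsilon\mid\mathcal{D})=\prod_k\PP(\hQ_k=\epsilon_k\mid\mathcal{D})\le\prod_k(1-q_k)$, where $q_k$ is the conditional Bayes error of deciding the sign of arm $k$. Averaging over the prior gives the clean bound $\frac{1}{2^K}\sum_{\epsilon}\PP_{\unu^{\epsilon}}(\hQ\neq\epsilon)\ge\EE_{\mathcal{D}}\big[1-\prod_k(1-q_k)\big]$, valid for every strategy.

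\textbf{Per-arm testing bound and budget.} For the two-point problem $\mathcal{N}(\tau\pm\Delta,\sigma^2)$, a Bretagnolle--Huber (or Pinsker) bound together with the bandit divergence decomposition $\KL(\mathbb{P}_{\unu^{\epsilon}},\mathbb{P}_{\unu^{\epsilon'}})=\EE[N_k]\cdot 2\Delta^2/\sigma^2$ yields $\EE[q_k]\ge\tfrac14\exp(-2\bar N_k\Delta^2/\sigma^2)$, where $\bar N_k=\EE[N_k]$ and $\sum_k\bar N_k=T$. Convexity across arms shows that $\EE[\sum_k q_k]$ is smallest under the uniform allocation, so $\EE[\sum_k q_k]\ge\tfrac{K}{4}\exp(-2T\Delta^2/(K\sigma^2))$; choosing $\Delta^2\asymp\sigma^2 K\log K/T$ makes this quantity of order one, i.e. the expected number of misclassified arms is $\Omega(1)$.

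\textbf{Main obstacle: from expected count to simultaneous error.} The crux, which the single-arm change of measure cannot reach (it only gives the weaker $\sqrt{K/T}$ rate), is that $\EE[\sum_k q_k]\gtrsim1$ does not by itself force $\PP(\exists k:\hQ_k\neq\epsilon_k)\gtrsim1$, because adversarial correlation among the per-arm errors could push the mass onto rare events. I would resolve this using the \emph{deterministic} budget constraint $\sum_k N_k=T$: at least $K/2$ arms always receive at most $2T/K$ pulls, and fixing the total budget makes the allocation, hence the errors $E_k=\{\hQ_k\neq\epsilon_k\}$, negatively associated. A Bonferroni/second-moment estimate $\PP(\cup_k E_k)\ge\sum_k\PP(E_k)-\sum_{j<k}\EE[q_jq_k]$, combined with the non-positive covariances giving $\sum_{j<k}\EE[q_jq_k]\le\tfrac12(\EE\sum_k q_k)^2$, then converts the expected count into a constant lower bound once $\Delta$ is tuned so that $\EE[\sum_k q_k]\in[\tfrac12,1]$. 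This produces $\PP_{\unu^{\epsilon}}(\hQ\neq\epsilon)\ge c$ for the average and hence for some instance, yielding the stated $\Delta\asymp\sqrt{\sigma^2\max(2,\log K)K/T}$; the truncation to $\max(2,\log K)$ is obtained by a plain two-point argument when $\log K<2$, and the bounded-support version by using the Bernoulli family. I expect this correlation-control step to be the main technical difficulty of the proof.
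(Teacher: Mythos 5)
Your construction (the hypercube of instances with gaps $\pm\Delta$), the reduction of the minimax regret to the Bayes risk, the posterior factorisation across arms, and the per-arm Bretagnolle--Huber bound combined with Jensen's inequality over the allocation are all sound, and up to that point your argument parallels the paper's. The genuine gap is exactly at the step you yourself flag as the main difficulty: the claim that the deterministic budget constraint makes the error events $E_k=\{\hQ_k\neq\epsilon_k\}$ negatively associated is false. Consider the strategy that flips a fair coin and, on heads, samples only arms $1,\dots,\lfloor K/2\rfloor$ (each about $2T/K$ times) while guessing the remaining labels uniformly at random, and on tails does the symmetric thing with the other half; the total number of pulls is deterministically $T$. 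For two arms $j<k$ in the same half, the posterior errors are both close to $1/2$ when that half is unsampled and close to $0$ when it is sampled, so $\PP(E_j\cap E_k)\approx\tfrac12\cdot\tfrac14=\tfrac18$ while $\PP(E_j)\PP(E_k)\approx\tfrac1{16}$: the errors are \emph{positively} correlated, and the inequality $\sum_{j<k}\EE[q_jq_k]\le\tfrac12\bigl(\EE\sum_kq_k\bigr)^2$ that your Bonferroni step needs fails. A second, related problem: the second-moment bound $\PP(\cup_kE_k)\ge s-s^2/2$ with $s=\sum_k\PP(E_k)$ is only useful when $s$ lies in a window such as $[\tfrac12,1]$, but $s$ is determined by the \emph{learner's} strategy, not by you. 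The change of measure gives only the one-sided bound $s\gtrsim1$; for a strategy that guesses many arms at random one has $s\gg1$ and the bound is vacuous, which is precisely the ``mass on rare events'' pathology you set out to exclude, so a case analysis would circle back to the same correlation problem.

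The paper circumvents correlation control altogether with a Fano-type argument. For a base instance $Q$ and its single-coordinate flips $Q^k$, the success events $\{\hQ=Q^k\}$, $k=1,\dots,K$, are \emph{disjoint}, so under the single measure $\PP_Q$ their probabilities sum to at most $1$, i.e.\ average at most $1/K$; the contraction and convexity of the relative entropy then compare $\frac1K\sum_k\EE_{Q^k}\ind_{\{\hQ=Q^k\}}$ to this quantity, and the refined Pinsker inequality $\kl(x,y)\ge(x-y)^2\max\big(2,\log(1/y)\big)$ (rather than plain Pinsker) is what extracts the $\log K$ factor. Disjointness of the success events under one measure plays, for free and for every adaptive strategy, the role that negative association was supposed to play in your plan; to salvage your route you would need a genuinely different mechanism to rule out positively correlated errors, and the standard fix is precisely to pass to this disjoint-events formulation.
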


\begin{proof} 
Without loss of generality we can assume that $\tau = 0$. Fix some positive real number $0 < \epsilon < 1$. And consider the family of Gaussian bandit problems indexed by an vertex of the unite hyper-cube of dimension $K$, id est $Q\in \{-1,1\}^K$
\[ 
\unu^Q = \big(\Ng(Q_1 \epsilon,\sigma^2),\ldots,\Ng(Q_K \epsilon,\sigma^2)\big)\,,
\]
and note that if we wish to consider distributions supported in $[0,1]$ we can consider instead $\tau =1/2$ and
\[ 
\unu^Q = \big(\mathcal B(1/2+Q_1 \epsilon),\ldots,\mathcal B(1/2+Q_K \epsilon)\big)\,,
\]
up to minor adaptations of the constants, and to considering $\tau = 1/2$. Note that all these bandit problems belong to the set of unstructured bandit problems, $\unu^Q\in\B$. 

The regret in the bandit problem $\unu^Q$ of the strategy $\pi$ can be rewritten as follows
\begin{align*}
\bR_T^{\unu^Q,\pi} &= \epsilon \EE_{Q} \max_k \ind_{\{ \hQ_k \neq Q_k\}}\\
&= \epsilon (1-\EE_{Q}\ind_{\{\hQ=Q\}})\,,
\end{align*}
where we denote by $\EE_Q$ the expectation under the bandit problem $\unu^Q$. We will provide a minimax lower bound on the regret by using the classic Fano inequality. We first lower bound the minimax expected regret in the problem $\unu^Q$ by the Bayesian regret with a uniform distribution over the bandit problems $\unu^Q$,
\begin{align}
\max_{Q}\bR_T^{\unu^Q,\pi} \geq \epsilon \left( 1- \frac{1}{2^K}\sum_Q \EE_Q \ind_{\{\hQ=Q\}}\right)\,. \label{eq:minimax_to_bayesian}
\end{align}
Let $Q^k$ be the transformation of $Q$ that flip the sign of the coordinate $k$,
\[
Q_a^k = \begin{cases}
Q_a \text{ If }a\neq k,\\
-Q_a \text{ If }a = k\,.
\end{cases}\]
Thanks to the contraction and the convexity of the relative entropy, see \citet{gerchinovitz2017fano}, we have 
\begin{align*}
\kl\!\left(\frac{1}{K}\sum_{k=1}^K \EE_{Q^k} \ind_{\{\hQ=Q^k\}},\underbrace{\frac{1}{K}\sum_{k=1}^K \EE_Q \ind_{\{\hQ=Q^k\}}}_{\leq 1/K}\right)\leq \frac{1}{K}\sum_{k=1}^K \EE_{Q^k}\! \big[N_k(T)\big] \frac{\epsilon^2}{2 \sigma^2}\,,
\end{align*}
where $N_k(T)= \sum_{t=1}^T \ind_{\{k_t=k\}}$ denotes the number of times in total arm $k$ is sampled. Then using a refined Pinsker inequality (see \citet{gerchinovitz2017fano}) $\kl(x,y)\geq (x-y)^2 \max\big(2,\log(1/y)\big)$, we obtain
\begin{equation}
\frac{1}{K}\sum_{k=1}^K \EE_{Q^k} \ind_{\{\hQ=Q^k\}} \leq \frac{1}{K} + \sqrt{\frac{1}{K}\sum_{k=1}^K \EE_{Q^k}\! \big[N_k(T)\big] \frac{\epsilon^2}{2\sigma^2 \max\big(2,\log(K)\big)}}\,.
\label{eq:lb_unstructured_fano}
\end{equation}
Therefore thanks to the concavity of the square root, we can average over all the bandit problems $\unu^Q$
\[
\frac{1}{2^K}\sum_Q \frac{1}{K}\sum_{k=1}^K \EE_{Q^k} \ind_{\{\hQ=Q^k\}} \leq \frac{1}{K} + \sqrt{\frac{1}{2^K}\sum_Q\frac{1}{K}\sum_{k=1}^K \EE_{Q^k}\! \big[N_k(T)\big] \frac{\epsilon^2}{2\sigma^2\max\big(2,\log(K)\big)}}\,.
\]
Now it remains to remark that by symmetry 
\begin{align*}
\sum_Q \sum_{k=1}^K \EE_{Q^k} \ind_{\{\hQ=Q^k\}} &= \sum_{Q'} \sum_{k=1}^K \EE_{Q'} \ind_{\{\hQ=Q'\}} = K \sum_Q \EE_Q \ind_{\{\hQ=Q\}}\,,\\
\sum_Q\sum_{k=1}^K \EE_{Q^k}\! \big[N_k(T)\big] &= \sum_{Q'}\sum_{k=1}^K \EE_{Q'}\! \big[N_k(T)\big] = \sum_Q T\,.
\end{align*}
Hence from \eqref{eq:lb_unstructured_fano} we get 
\[
\frac{1}{2^K}\sum_Q \EE_{Q} \ind_{\{\hQ=Q\}} \leq \frac{1}{K} + \sqrt{ \frac{T \epsilon^2}{2 K \sigma^2 \max\big(2,\log(K)\big)}}\,,
\]
and then from~\eqref{eq:minimax_to_bayesian} we obtain 
\[
\max_{Q}\bR_T^{\unu^Q,\pi} \geq \epsilon \left( \frac{1}{2}- \sqrt{ \frac{T \epsilon^2}{2 K \sigma^2 \max\big(2,\log(K)\big)}}\right)\,.
\]
Choosing $\epsilon = \sqrt{K \sigma^2 \max\big(2,\log(K)\big)/(8 T)}$ allows us to conclude.
\end{proof}
We next prove the following proposition to establish a upper bound on the regret of the \unif algorithm with high probability, 

\begin{proposition}\label{prop:unstr_hiprob_up}
For any unstructured bandit problem $\unu \in \cB$, any $T\geq K$, any $0 < \delta < 1$, \unif satisfies \vspace{-1mm}
$$\mathbb P_{\unu}\left(R_T^{\unif, \unu} \geq \sqrt{\frac{ 4 \sigma^2 K}{T}\log\left(\frac{2 K}{ \delta}\right)}\right) \leq \delta\,.$$
\end{proposition}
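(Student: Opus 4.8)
The plan is to prove a high-probability uniform-deviation bound for \unif. Since \unif samples each arm exactly $\lfloor T/K\rfloor$ times and classifies arm $k$ as above threshold iff $\hmu_k \geq \tau$, an arm $k$ is \emph{misclassified} precisely when $\hmu_k$ lands on the wrong side of $\tau$ relative to $\mu_k$. The key observation is that if arm $k$ is misclassified, then $|\hmu_k - \mu_k| \geq \Delta_k = |\tau - \mu_k|$, because the sample mean must have crossed the threshold. Hence the simple regret satisfies
\[
R_T^{\unif,\unu} = \max_{\{k:\ \hQ_k \neq Q_k\}} \Delta_k \leq \max_{\{k:\ \hQ_k\neq Q_k\}} |\hmu_k - \mu_k| \leq \max_{k\in[K]} |\hmu_k - \mu_k|\,.
\]
So it suffices to control $\max_k |\hmu_k - \mu_k|$ with high probability, which reduces the whole statement to a union bound over $K$ independent sub-Gaussian concentration events.

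First I would invoke the sub-Gaussian concentration (Hoeffding-type) bound from Definition~\ref{def:subgau}: since each $\hmu_k$ is the average of $n := \lfloor T/K\rfloor$ i.i.d.\ $\sigma^2$-sub-Gaussian samples, it is $(\sigma^2/n)$-sub-Gaussian, and therefore for any $u>0$,
\[
\PP_\unu\big(|\hmu_k - \mu_k| \geq u\big) \leq 2\exp\!\Big(-\frac{n u^2}{2\sigma^2}\Big)\,.
\]
Next I would apply a union bound over the $K$ arms to get $\PP_\unu(\max_k |\hmu_k - \mu_k| \geq u) \leq 2K\exp(-n u^2/(2\sigma^2))$. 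Setting the right-hand side equal to $\delta$ and solving for $u$ gives $u = \sqrt{2\sigma^2\log(2K/\delta)/n}$. Combining this with the regret bound above yields the claimed inequality, once I account for the floor: since $T\geq K$ we have $n = \lfloor T/K\rfloor \geq T/(2K)$, so $1/n \leq 2K/T$ and thus $u \leq \sqrt{4\sigma^2 K\log(2K/\delta)/T}$, which is exactly the threshold in the statement.

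The only genuinely delicate point is the first reduction step—justifying that misclassification of arm $k$ forces $|\hmu_k-\mu_k|\geq\Delta_k$—and it is worth stating carefully: if $\mu_k \geq \tau$ (so $Q_k=1$) but $\hQ_k=-1$, then $\hmu_k < \tau \leq \mu_k$, giving $\mu_k - \hmu_k > \mu_k - \tau = \Delta_k$; the symmetric case $\mu_k<\tau$, $\hQ_k=1$ is analogous. Everything else is routine sub-Gaussian concentration and a union bound, so I do not anticipate any serious obstacle; the bound on $\max_k|\hmu_k-\mu_k|$ automatically dominates the regret because the regret only counts deviations on the \emph{misclassified} arms, a subset of all arms.
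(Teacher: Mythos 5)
Your proposal is correct and follows essentially the same route as the paper: a Hoeffding-type sub-Gaussian concentration bound for each sample mean, a union bound over the $K$ arms, and the observation that a misclassified arm $k$ forces $|\hmu_k - \mu_k| \geq \Delta_k$, so the regret is dominated by $\max_k |\hmu_k - \mu_k|$. Your write-up is in fact slightly more careful than the paper's, since you make explicit both the misclassification-implies-deviation step and the floor estimate $\lfloor T/K\rfloor \geq T/(2K)$ for $T \geq K$ (which is exactly where the factor $4$ rather than $2$ in the stated radius comes from), both of which the paper leaves implicit.
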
 

\begin{proof} 

During the execution of the \unif algorithm $\forall k \in \{1,...,K\}$ arm $k$ is sampled $\lfloor T/K \rfloor$ times with sample mean $\hat{\mu}_k$. Let $\delta > 0$ and consider the event,

\[\xi := \left\{ \forall \; k \leq K,\; \left|\hat{\mu}_k - \mu_k \right| \leq \sqrt{\frac{ 4 \sigma^2 K}{T}\log\left(\frac{2 K}{\delta}\right)}\right\} \;.\]

Thanks to the Hoeffding inequality and an union bound this event occurs with probability greater than $1-\delta$. As under the event $\xi$, 

$$\hmu_k \in \left[\hmu_k - \sqrt{\frac{ 4 \sigma^2 K}{T}\log\left(\frac{2 K }{\delta}\right)}, \hmu_k + \sqrt{\frac{ 4 \sigma^2 K}{T}\log\left(\frac{2 K}{ \delta}\right)}\right]\;,$$ 

and the returning classification is

\begin{equation*}\hQ :\quad    \hQ_k=
    \begin{cases}
      -1 & \text{if}\ \hmu_k < \tau \\
      1 & \text{if}\ \hmu_k \geq \tau
    \end{cases}\,,
  \end{equation*}

we have with probability at least $1-\delta$ 
$$R_T = \max_{\{k\in[K]:\ \hQ_k \neq Q_k\}} \Delta_k
\leq  \sqrt{\frac{ 4 \sigma^2 K}{T}\log\left(\frac{2 K}{ \delta}\right)} \;.$$
\end{proof}

We are now able to demonstrate a bound on the expected regret of the \unif algorithm. 
\begin{proposition}\label{prop:unstr_exp_up}
For any unstructured bandit problem $\unu \in \cB$, and any $T\geq K$, \unif satisfies \vspace{-1mm}
$$\bR_T^{\unif, \unu} \leq   7\sqrt{\frac{\sigma^2\log(2K) K}{T}}\;.\vspace{-3mm}$$
\end{proposition}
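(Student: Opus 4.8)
The plan is to convert the high-probability bound of Proposition~\ref{prop:unstr_hiprob_up} into a bound on the expectation by integrating the tail. Since the random variable $R_T^{\unif,\unu}$ is nonnegative and bounded (the gaps $\Delta_k$ are bounded because the distributions are supported in $[0,1]$, so $R_T \leq 1$ deterministically, and more generally in the sub-Gaussian case the gaps are fixed numbers for each fixed problem), the natural route is the layer-cake formula $\EE[R_T] = \int_0^\infty \PP(R_T \geq u)\,\d u$, splitting the integral at a threshold $u_0$ below which we use the trivial bound $\PP(R_T \geq u) \leq 1$ and above which we use the decay furnished by Proposition~\ref{prop:unstr_hiprob_up}.

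**Concretely, I would first** re-parametrise Proposition~\ref{prop:unstr_hiprob_up}. That proposition states that $\PP(R_T \geq u) \leq \delta$ whenever $u = \sqrt{(4\sigma^2 K/T)\log(2K/\delta)}$. Inverting this relationship, for a given level $u$ the corresponding failure probability is $\delta(u) = 2K \exp\!\big(-\tfrac{T u^2}{4\sigma^2 K}\big)$, valid for all $u$ large enough that $\delta(u) \leq 1$, i.e. for $u \geq u_0 := \sqrt{(4\sigma^2 K/T)\log(2K)}$. This gives the clean tail bound $\PP(R_T \geq u) \leq 2K\exp(-Tu^2/(4\sigma^2 K))$ for $u \geq u_0$.

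**Then I would** bound the expectation by
\[
\EE[R_T] \leq u_0 + \int_{u_0}^\infty 2K\,\exp\!\Big(-\frac{Tu^2}{4\sigma^2 K}\Big)\,\d u,
\]
using $\PP(R_T\geq u)\leq 1$ on $[0,u_0]$ and the tail bound on $[u_0,\infty)$. The first term already contributes the desired order $\sqrt{\sigma^2 K\log(2K)/T}$. For the integral, the standard Gaussian-tail estimate $\int_{u_0}^\infty e^{-au^2}\,\d u \leq \frac{1}{2au_0}e^{-au_0^2}$ (with $a = T/(4\sigma^2 K)$) is the tool; substituting $a u_0^2 = \log(2K)$ makes the factor $2K\,e^{-au_0^2}$ collapse to $1$, leaving an integral term of order $\sqrt{\sigma^2 K/T}$, which is dominated by the $u_0$ term since $\log(2K)\geq 1$. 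Summing and tracking constants should comfortably fit under the stated factor $7$.

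**The main obstacle** is purely the constant-chasing: ensuring that the combination of the leading $u_0$ term and the integral remainder, once all the $\sigma$, $K$, $T$ factors are collected and the Gaussian-tail inequality is applied, lands below $7\sqrt{\sigma^2\log(2K)K/T}$ rather than merely within some larger universal constant. One subtlety worth checking is the regime where $\log(2K)$ is small relative to the constants hidden in the tail estimate, and the role of the assumption $T\geq K$ (needed so that each arm is sampled $\lfloor T/K\rfloor \geq 1$ times and the Hoeffding bound of the previous proposition applies); but these are bookkeeping points rather than conceptual ones. No new probabilistic idea beyond the tail-integration is required.
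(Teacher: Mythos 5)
Your proposal is correct and takes essentially the same approach as the paper: both invert Proposition~\ref{prop:unstr_hiprob_up} into the tail bound $\PP(R_T \geq u) \leq 2K\exp\left(-\frac{Tu^2}{4\sigma^2 K}\right)$ and integrate it, splitting the integral at a threshold of order $\sqrt{\sigma^2 K \log(2K)/T}$. The only (cosmetic) difference is how the factor $2K$ is absorbed: the paper splits at $\sqrt{2}\epsilon_0$ and completes a Gaussian integral over $[0,\infty)$, whereas you split at $u_0$ and apply the standard Gaussian-tail estimate; both comfortably land under the stated constant $7$.
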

\begin{proof} 
By application of Theorem \ref{prop:unstr_hiprob_up}, for $\epsilon > 0$ we have, 

$$\PP\left(R_T \geq \epsilon\right) \leq 2 K\exp\left(-\epsilon^2 \frac{T}{4 \sigma^2 K}\right).$$
Hence for $\epsilon_0 = \sqrt{4\sigma^2\log(2K) K/T}$ integrating these probabilities we obtain an upper bound on the expected simple regret
\begin{align*}
    \bR_T  &\leq \sqrt{2}\epsilon_0 + \int_{\sqrt{2}\epsilon_0}^{+\infty} \exp\left(-(\epsilon^2-\epsilon_0^2) \frac{T}{2 \sigma^2 K}\right) \dIff\epsilon\\
    &\leq \sqrt{2}\epsilon_0 + \int_{0}^{+\infty} \exp\left(-\epsilon^2 \frac{T}{8 \sigma^2 K}\right) \dIff\epsilon\\
    & =  \sqrt{\frac{8\sigma^2\log(2K) K}{T}} + \sqrt{\frac{2\pi\sigma^2 K}{T}}\\
    &\leq 7\sqrt{\frac{\sigma^2\log(2K) K}{T}}\,.
\end{align*}

\end{proof}
Setting $\sigma = 1$, Theorem \ref{thm:unstr_minmax} follows directly from a combination of Propositions \ref{prop:unstr_exp_up} and \ref{prop:unstr_lo}. 

\section{Proof of Theorem \ref{thm:mon_minmax}}\label{app:mon_minimax}
In the proofs of all results in this section, we assume that the more general sub-Gaussian assumption described in Section~\ref{sunGaus} is satisfied - and not necessarily that the distributions of all arms are bounded on the $[0,1]$ interval. In this case, we remind that we redefine $\epsilon_0$ as in Section~\ref{sunGaus}. Also, we explain in the proof of the lower bound how it is possible to straightforwardly adapt the proof to the case where the distributions are supported in $[0,1]$.

During this section we will prove Theorem \ref{thm:mon_minmax} by first demonstrating a lower bound upon expected regret in the \stbp setting, Proposition \ref{prop:mon_lo}. We will then go on to provide an upper bound on the regret of the \stb with high probability, Proposition \ref{prop:mo_hiprob_up} which will be used to finally prove Corollary \ref{cor:mo_exp_up} which provides a optimal bound for the \stb in expected regret. Setting $\sigma = 1$ Theorem \ref{thm:mon_minmax} will then follow directly from Proposition \ref{prop:mon_lo} and Corollary \ref{cor:mo_exp_up}.

\begin{proposition}\label{prop:mon_lo}
For any $T\geq 1$ and any strategy $\pi$, there exists a structured bandit problem $\unu \in \Bs$, such that
\[
\bR^{\pi,\unu}_T \geq \frac{1}{8}\sqrt{\frac{\sigma^2 \max\big(2,\log(K)\big)}{8T}}\,.
\]

\end{proposition}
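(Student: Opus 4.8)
The plan is to force the learner to locate a single jump in a monotone sequence and to lower bound the error of this localization by a Fano-type argument, mirroring the proof of Proposition~\ref{prop:unstr_lo} but with a family adapted to the monotone constraint. Take $\tau=0$ and $\sigma^2$-sub-Gaussian arms (for the bounded case one instead uses Bernoullis of mean $\tfrac12\pm\epsilon$ and $\tau=\tfrac12$, up to constants). For $j\in\{1,\dots,K\}$ let $\unu^j$ be the instance in which arm $k$ is $\Ng(-\epsilon,\sigma^2)$ for $k<j$ and $\Ng(+\epsilon,\sigma^2)$ for $k\ge j$. Each such mean sequence is non-decreasing, so $\unu^j\in\Bs$, and its true label vector $Q^j$ is the step vector flipping at $j$. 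Since every arm satisfies $\Delta_k=\epsilon$, \emph{any} misclassification costs exactly $\epsilon$, so $\bR_T^{\unu^j,\pi}=\epsilon\,\mathbb P_j(\hQ\neq Q^j)$. As the $Q^j$ are pairwise distinct, the events $A_j:=\{\hQ= Q^j\}$ are disjoint, and setting $\hat\jmath=j$ on $A_j$ reduces the task to identifying $j$.

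I then bound the minimax regret below by the Bayesian regret under the uniform prior on $j$,
\[
\max_j \bR_T^{\unu^j,\pi}\;\ge\;\epsilon\Big(1-\tfrac1K\textstyle\sum_{j}\mathbb P_j(\hat\jmath=j)\Big),
\]
so it suffices to show $\bar p:=\tfrac1K\sum_j\mathbb P_j(\hat\jmath=j)$ is bounded away from $1$ when $\epsilon\asymp\sqrt{\sigma^2\log K/T}$. The information tool is the divergence decomposition for bandit models: since $\unu^j$ and $\unu^{j'}$ differ only on the arms $k$ with $\min(j,j')\le k<\max(j,j')$, each contributing $\KL(\Ng(\epsilon,\sigma^2)\|\Ng(-\epsilon,\sigma^2))=2\epsilon^2/\sigma^2$, we get $\KL(\mathbb P_j\|\mathbb P_{j'})=\tfrac{2\epsilon^2}{\sigma^2}\sum_{k=\min(j,j')}^{\max(j,j')-1}\EE_j[N_k(T)]$.

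The step I expect to be the main obstacle is extracting the information \emph{without an extra factor of $K$}. Writing $\bar{\mathbb P}=\tfrac1K\sum_{j'}\mathbb P_{j'}$ and $I=\tfrac1K\sum_j\KL(\mathbb P_j\|\bar{\mathbb P})$, convexity of $\KL$ in its second argument and swapping the order of summation give
\[
I \le \frac{1}{K^2}\sum_{j,j'}\KL(\mathbb P_j\|\mathbb P_{j'}) = \frac{2\epsilon^2}{\sigma^2 K^2}\sum_{j}\sum_k \EE_j[N_k(T)]\,\big|\{j':\min(j,j')\le k<\max(j,j')\}\big| \le \frac{2\epsilon^2 T}{\sigma^2},
\]
using the counting bound $|\{\cdots\}|\le K$ together with $\sum_k\EE_j[N_k(T)]=T$. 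This is the binary-search phenomenon in disguise: distant instances are easy to separate but adjacent ones are not, and the whole budget only buys $O(\epsilon^2T/\sigma^2)$ bits, whereas pinpointing $j$ requires $\log K$ bits. This contrasts sharply with \tbp, where the hypercube spreads the same budget so that the analogous averaging carries an extra $1/K$, producing the larger rate.

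Finally I convert this mutual-information bound into one with the $\max(2,\log K)$ factor exactly as in Proposition~\ref{prop:unstr_lo}. With $p_j=\mathbb P_j(A_j)$ and $Y:=\tfrac1K\sum_j\bar{\mathbb P}(A_j)\le 1/K$ (the last inequality because $\sum_j\bar{\mathbb P}(A_j)=\tfrac1K\sum_{j'}\mathbb P_{j'}(\cup_j A_j)\le 1$), data processing on $\mathbbm 1_{A_j}$ and joint convexity of $\kl$ yield $\kl(\bar p,Y)\le \tfrac1K\sum_j\kl(p_j,\bar{\mathbb P}(A_j))\le I$. Applying the refined Pinsker inequality $\kl(x,y)\ge(x-y)^2\max(2,\log(1/y))$ with $\log(1/Y)\ge\log K$ gives
\[
\bar p \le \frac1K + \sqrt{\frac{2\epsilon^2 T}{\sigma^2\,\max(2,\log K)}}.
\]
Choosing $\epsilon^2\asymp \sigma^2\max(2,\log K)/T$ (e.g.\ $\epsilon^2=\sigma^2\max(2,\log K)/(32T)$) makes the right-hand side at most $\tfrac1K+\tfrac14\le \tfrac{7}{12}<1$ for $K\ge 3$, so $1-\bar p$ is a constant and $\max_j\bR_T^{\unu^j,\pi}\gtrsim\sqrt{\sigma^2\max(2,\log K)/T}$; arranging the constants conservatively gives the stated $\tfrac18\sqrt{\sigma^2\max(2,\log K)/(8T)}$. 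The $\max(2,\cdot)$, inherited from the refined Pinsker inequality, is precisely what keeps the bound nontrivial in the small-$K$ regime where $\log K$ alone would be too weak.
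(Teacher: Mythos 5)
Your proof is correct and takes essentially the same route as the paper's: the same hard family of monotone step instances with the jump placed at every possible location, the same Fano-type reduction to identifying the jump, the same refined Pinsker inequality $\kl(x,y)\geq (x-y)^2\max\big(2,\log(1/y)\big)$, and the same choice $\epsilon \asymp \sqrt{\sigma^2\max\big(2,\log K\big)/T}$. The only cosmetic differences are that the paper puts the means at $\{0,\epsilon\}$ with $\tau=\epsilon/2$ and measures information against the fixed reference instance $\unu^0$ (so each $\KL(\mathbb{P}_k\Vert\mathbb{P}_0)\leq T\epsilon^2/(2\sigma^2)$ directly, with no counting needed), whereas you use means $\pm\epsilon$ with $\tau=0$ and the uniform mixture as reference via pairwise divergences -- both give the same $O(\epsilon^2 T/\sigma^2)$ information bound, crucially without the $1/K$ gain that appears in the unstructured case.
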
 
\begin{proof}
We will proceed as in the proof of Proposition~\ref{prop:unstr_lo}. Fix some positive real number $0 < \epsilon < 1$. Without loss of generality we can assume that $\tau = \epsilon/2$. And consider the family of Gaussian bandit problems $\unu^k$ indexed by $k\in\{0,\ldots,K\}$, such that for all $k \in\{0,\ldots,K\}$, $l \in [K]$,
\[
\nu_l^k=
    \begin{cases}
      \Ng(0, \sigma^2) & \text{if}\ k < l \\
      \Ng(\epsilon, \sigma^2) & \text{else}
    \end{cases}\,.
\]
Note that if we wish to consider distributions supported in $[0,1]$ we can consider instead $\tau=1/2+\epsilon/2$ and
\[
\nu_l^k=
    \begin{cases}
      \mathcal B(1/2) & \text{if}\ k < l \\
      \mathcal B(1/2+\epsilon) & \text{else}
    \end{cases}\,.
\]
up to minor adaptations of the constants, and to considering $\tau = 1/2$. 

Note that all these bandit problems belong to the set of structured bandit problems, $\unu^k\in\B$. Following the same steps as in the proof of Proposition~\ref{prop:unstr_lo} one can lower bound the maximum of the expected regrets over all the bandit problems introduced above,
\[
\max_{k\in[K]}\bR_T^{\unu^k,\pi} \geq \frac{\epsilon}{2} \left( 1- \frac{1}{K}\sum_{k=1}^K \EE_k \ind_{\{\hQ=Q^k\}}\right)\,, 
\]
where we denote by $\EE^k$ the expectation and by $Q^k$ the true classification in the problem $\unu^k$. Thanks to the contraction and the convexity of the relative entropy we have 
\begin{align*}
\kl\!\left(\frac{1}{K}\sum_{k=1}^K \EE_{k} \ind_{\{\hQ=Q^k\}},\underbrace{\frac{1}{K}\sum_{k=1}^K \EE_0 \ind_{\{\hQ=Q^k\}}}_{\leq 1/K}\right) &\leq \frac{1}{K}\sum_{k=1}^K  \sum_{l=k}^K\EE_{k}\!\big[N_l(T)\big] \frac{\epsilon^2}{2 \sigma^2}\\
&\leq \frac{T \epsilon^2}{2 \sigma^2}\,.
\end{align*}
Then using a refined Pinsker inequality $\kl(x,y)\geq (x-y)^2 \max\big(2,\log(1/y)\big)$, we obtain
\begin{equation*}
\frac{1}{K}\sum_{k=1}^K \EE_{k} \ind_{\{\hQ=Q^k\}} \leq \frac{1}{K} + \sqrt{\frac{ T\epsilon^2}{2\sigma^2 \max\big(2,\log(K)\big)}}\,.
\end{equation*}
Hence combining the last three inequalities we get 
\[
\max_{k\in[K]}\bR_T^{\unu^k,\pi} \geq \frac{\epsilon}{2} \left( \frac{1}{2}- \sqrt{ \frac{T \epsilon^2}{2\sigma^2 \max\big(2,\log(K)\big)}}\right)\,.
\]
Choosing $\epsilon = \sqrt{\sigma^2 \max\big(2,\log(K)\big)/(8 T)}$ allows us to conclude.
\end{proof}

We next prove the following to proposition to establish an upper bound on the simple regret of the \stb algorithm with high probability and then prove Corollary \ref{cor:mo_exp_up} to establish an upper bound on the expected regret of the \stb algorithm. For Proposition \ref{prop:mo_hiprob_up} we consider a more general set of problems, given $\epsilon > 0$, define, 

$$\Bs^{*,\epsilon} := \{\cB : \left( \min(|\mu_i - \tau|,\epsilon)\sign(\mu_i - \tau)\right)_{k \leq K} \; \mathrm{is\;an\; increasing\;sequence} \}\;. $$

Note that for all $\epsilon > 0$, $\Bs \subset \Bs^{*,\epsilon}$, hence all results will hold also in the unaltered monotone setting.   

\begin{proposition}\label{prop:mo_hiprob_up}
For any $\epsilon > \epsilon_0$ and any problem $\unu \in \Bs^{*,\epsilon}$, and any $T > 6 \log(K)$, the \stb Algorithm will achieve the following bound on simple regret,

\begin{equation*}
    \PP_{\unu}(R_T^{\stb, \unu} \geq \epsilon ) \leq \min\left( \exp\!\Bigg(-\frac{3\log(K)}{4}\right) ,\;
72 \log(K) \exp\!\left(-\frac{T\epsilon^2}{216 \sigma^2 \log(K)}\right) \Bigg)\,. 
\end{equation*}

\end{proposition}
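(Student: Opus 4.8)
The bound to establish is a minimum of two terms, and the plan is to prove each as a separate upper bound on $\PP_\unu(R_T^{\stb,\unu}\ge\epsilon)$. The whole argument rests on the calibration of $\epsilon_0=\sqrt{2\sigma^2\log(48)/T_2}$: by the sub-Gaussian inequality, a single arm sampled $T_2$ times deviates from its mean by more than $\epsilon_0$ with probability at most $2\exp(-T_2\epsilon_0^2/(2\sigma^2))=2/48=1/24$. Since $\epsilon>\epsilon_0$ and \choose is run with $\hepsilon\ge2\epsilon_0$, each of the $\le3T_1$ empirical means feeding a decision of \explore is accurate at scale $\epsilon_0$ with high constant probability. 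The two terms then correspond to two regimes of analysis: a \emph{robust} regime yielding the $\epsilon$-free bound $\exp(-3\log(K)/4)=K^{-3/4}$, in which one only exploits constant per-step reliability, and a \emph{concentrated} regime yielding the $\epsilon$-dependent bound, in which all the estimates are simultaneously accurate.

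For the concentrated term I would fix the event $\mathcal E$ that every empirical mean computed by \explore satisfies $|\hmu_{k,t}-\mu_{v_t(k)}|<\epsilon/\sqrt6$. A union bound of two-sided sub-Gaussian inequalities over the $3T_1\le3\lceil6\log K\rceil$ estimates, together with $T_2\ge T/(3T_1)$ and $\lceil6\log K\rceil\le7\log K$ (valid since $K\ge3$), gives $\PP(\mathcal E^c)\le72\log(K)\exp(-T\epsilon^2/(216\sigma^2\log K))$. On $\mathcal E$ I would argue the algorithm succeeds deterministically: because the clipped signed distances defining $\Bs^{*,\epsilon}$ are increasing, every arm with $|\mu_k-\tau|>\epsilon$ is estimated strictly on the correct side of $\tau$, so \explore executes a faithful binary search at resolution $\epsilon$, reaches the band of near-threshold arms, and \choose only appends arms whose regret is below $\epsilon$; its index-median $\hk$ therefore yields $R_T<\epsilon$. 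Hence $\PP(R_T\ge\epsilon)\le\PP(\mathcal E^c)$, which is the second term.

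For the robust term I would analyse \explore as a walk-with-corrections on the binary tree. Let $v^\star$ be the deepest node whose arm interval straddles the $\epsilon$-band around the threshold crossing, and let $d_t$ be the tree-distance from $v_t$ to $v^\star$; note $d_1=|v^\star|\le\lfloor\log_2K\rfloor+1$ and every step moves to a neighbour, changing $d_t$ by exactly one. Conditioning on the trajectory, at each step all three relevant estimates are accurate at scale $\epsilon_0$ with probability at least $1-3/24=7/8$; on such a \emph{good} step the backtracking rule of \explore decreases $d_t$ whenever $v_t\ne v^\star$ --- either by descending toward $v^\star$ along the search path or by backtracking out of a wrong subtree --- whereas a bad step increases it by at most one. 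A Chernoff bound on the number of bad steps among the $T_1=\lceil6\log K\rceil$ iterations, calibrated so that the exponent lands on $3/4$, then shows that except with probability $\exp(-3\log(K)/4)$ the walk reaches $v^\star$ and a strict majority of the arms appended to $S_{T_1+1}$ give regret below $\epsilon$; the index-median of a set with a good majority is itself good, so $R_T<\epsilon$. Combining the two regimes yields the stated minimum.

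The main obstacle is this last step. One must (i) define $v^\star$ and the potential $d_t$ so that \emph{every} reliable step genuinely makes progress despite the tree geometry and the correction mechanism, (ii) remove the dependence between successive steps --- the position $v_t$ is itself random --- via a martingale or a coupling to independent $7/8$-coins before applying the Chernoff bound, and (iii) upgrade ``the walk reaches $v^\star$'' into ``the index-median candidate is good''. Point (iii) is where the adaptive choice of $\hepsilon$ as the smallest value $\ge2\epsilon_0$ rendering $S_{T_1+1}$ non-empty, together with the two candidate conditions of \choose and the index-median, must be shown to discard the minority of spurious candidates; this, and tuning the Chernoff exponent to exactly $3/4$ through the choice $T_1=6\log K$ and the $7/8$ per-step reliability, are the delicate parts of the proof.
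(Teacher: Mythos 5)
Your proposal follows the same architecture as the paper's proof: a two-regime analysis (per-step reliability at scale $\epsilon_0$ plus a martingale/Azuma--Hoeffding bound calibrated to $e^{-3\log(K)/4}$ for the first term; a simultaneous union bound over all $3T_1$ estimates for the $\epsilon$-dependent term), a tree-distance potential for the binary search with corrections, and an index-median-of-a-majority conclusion. However, there is a genuine gap precisely at the point you flag as obstacle (i), and it is not a technicality to be tuned away --- it is the central difficulty that the paper's Step 0 through Step 4 machinery exists to solve. You fix a \emph{single} target node $v^\star$ (the deepest node straddling the $\epsilon$-band) and claim that on every good step the walk decreases its distance to $v^\star$ whenever $v_t \neq v^\star$. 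This is false in general: the class $\Bs^{*,\epsilon}$ (and already $\Bs$) contains problems with many arms within $\epsilon_0$ of $\tau$, e.g.\ a long flat stretch with $\mu_k = \tau$. At any node all of whose three arms lie in this stretch, the favorable event $|\hmu_{k,t}-\mu_{v_t(k)}|\leq \epsilon_0$ puts no constraint on which side of $\tau$ the estimates fall, so the walk's move (left child, right child, or even parent, since possibly $\tau \notin [\hmu_{l,t},\hmu_{r,t}]$) is arbitrary among the neighbours and need not approach any fixed node. Consequently ``the walk reaches $v^\star$'' is neither provable nor, in fact, what is needed.

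The paper's resolution is structurally different at exactly this point: it replaces the single target by the \emph{set} $Z^\epsilon$ of good nodes, shown in Proposition~\ref{prop:Z} to form a consecutive tree, defines $D_t^\epsilon$ as a path distance to that set, and --- crucially --- couples it with a counter $G_t^\epsilon$ of good arms appended to the list. The key step (Lemma~\ref{lem:dxic}) is the dichotomy: on a favorable step either $D_t^\epsilon$ strictly decreases, or $D_t^\epsilon=0$ and a good arm is appended (the walk may wander inside $Z^\epsilon$, but every such step pays off in the counter). Telescoping (Lemma~\ref{lem:lb_on_Gt}) then gives $G_{T_1+1}^\epsilon \geq \tfrac{3}{4}T_1 - 2\sum_t \ind_{\bxi_t^\epsilon}$, so good arms form a strict majority of the (at most $T_1$) list entries even accounting for arms appended on bad steps; since the good arms $W_{3\epsilon}$ form an index segment by the clipped monotonicity, the median is good. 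Your proposal contains no such per-step accounting, and without it the majority claim does not follow from reachability. Two smaller points: your concentrated-regime constants do not chain (with accuracy $\epsilon/\sqrt{6}$ and $\hepsilon \leq 2\epsilon/\sqrt 6$, appended arms are only within roughly $1.22\,\epsilon$ of $\tau$, not $\epsilon$; the paper instead works at accuracy $\epsilon$, concludes $R_T\leq 3\epsilon$, and rescales --- indeed its own proof ends with $324$ rather than the stated $216$ in the exponent), and your median step implicitly uses the segment structure of the good-arm set, which should be stated since it is where the shape constraint enters.
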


\begin{corollary}\label{cor:mo_exp_up}
For any problem $\unu \in \Bs$ and any $T \geq 12\log(K)$, the \stb algorithm will achieve the following bound on expected regret, 

$$\bR^{\stb, \unu}_T \leq 80 \sqrt{\frac{\sigma^2 \log(K)}{T}}\,.$$

\end{corollary}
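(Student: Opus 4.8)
The plan is to derive the expected-regret bound from the high-probability tail bound of Proposition~\ref{prop:mo_hiprob_up} by integrating the tail, exactly in the spirit of the unstructured computation in Proposition~\ref{prop:unstr_exp_up}. The starting point is the layer-cake identity $\bR_T^{\stb,\unu} = \int_0^\infty \PP_\unu(R_T^{\stb,\unu}\geq\epsilon)\,\dIff\epsilon$. Since $\Bs\subset\Bs^{*,\epsilon}$ for every $\epsilon>0$, Proposition~\ref{prop:mo_hiprob_up} applies to our fixed $\unu\in\Bs$ simultaneously for all $\epsilon>\epsilon_0$, giving the two-sided control $\PP_\unu(R_T\geq\epsilon)\leq\min\big(\exp(-\tfrac34\log K),\,72\log(K)\exp(-T\epsilon^2/(216\sigma^2\log K))\big)$ on that range.

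First I would split the integral at $\epsilon_0$. On $[0,\epsilon_0]$ the proposition does not apply, so I bound the probability trivially by $1$, contributing at most $\epsilon_0=\sqrt{2\sigma^2\log(48)/T_2}$; recalling $T_2=\lfloor T/(3T_1)\rfloor$ and $T_1=\lceil6\log K\rceil$, so that $T_2\asymp T/\log K$ (this is where the hypothesis $T\geq12\log K$ is used to keep $T_2$ of the right order), this term is already $O(\sqrt{\sigma^2\log K/T})$.

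The crux is the tail $\int_{\epsilon_0}^\infty\PP_\unu(R_T\geq\epsilon)\,\dIff\epsilon$, where one must prevent the $\log K$ prefactor in the sub-Gaussian term from inflating the rate to $(\log K)^{3/2}/\sqrt T$. The decisive ingredient is the constant cap $\exp(-\tfrac34\log K)=K^{-3/4}$ in the min. Concretely, I would let $\epsilon_1$ be the point where the two terms of the min coincide, so that $\PP_\unu(R_T\geq\epsilon)\leq K^{-3/4}$ on $[\epsilon_0,\epsilon_1]$ and $\PP_\unu(R_T\geq\epsilon)\leq72\log(K)\exp(-T\epsilon^2/(216\sigma^2\log K))$ on $[\epsilon_1,\infty)$. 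On the first piece the contribution is at most $K^{-3/4}\epsilon_1$, and since $\epsilon_1=O(\sigma\log K/\sqrt T)$ one uses the elementary bound $K^{-3/4}\log K\leq\sqrt{\log K}$ (valid for $K\geq3$) to bring it to $O(\sqrt{\sigma^2\log K/T})$. On the second piece I would write $72\log(K)\exp(-T\epsilon^2/(216\sigma^2\log K))=K^{-3/4}\exp(-T(\epsilon^2-\epsilon_1^2)/(216\sigma^2\log K))$ and, as in Proposition~\ref{prop:unstr_exp_up}, use $\epsilon^2-\epsilon_1^2\geq\epsilon^2/2$ for $\epsilon\geq\sqrt2\epsilon_1$ to reduce to a clean Gaussian integral $\int_0^\infty\exp(-T\epsilon^2/(432\sigma^2\log K))\dIff\epsilon=O(\sigma\sqrt{\log K/T})$, the leftover leading factor $K^{-3/4}$ again only helping. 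A quick shortcut avoiding the explicit $\epsilon_1$ is to use $\min(a,b)\le\sqrt{ab}$, which turns the bound into $\sqrt{72\log K}\,K^{-3/8}\exp(-T\epsilon^2/(432\sigma^2\log K))$ and then invoke $K^{-3/8}\log K\le\sqrt{\log K}$; this gives the right order in one line, at the cost of a worse constant.

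The main obstacle is purely this bookkeeping of the $\log K$ prefactor against the $K^{-3/4}$ cap: the order $\sqrt{\sigma^2\log K/T}$ is robust, but extracting the clean constant $80$ (rather than some larger universal constant) requires the two-region split above together with a careful accounting of $\epsilon_0$, $\epsilon_1$ and the Gaussian integral, and verifying that $T\geq12\log K$ suffices for all the floor functions in $T_1,T_2$ to behave. Setting $\sigma=1$ then yields the stated bound and, combined with Proposition~\ref{prop:mon_lo}, completes the proof of Theorem~\ref{thm:mon_minmax}.
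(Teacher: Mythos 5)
Your proposal is correct and follows essentially the same route as the paper: the paper's proof also integrates the tail bound of Proposition~\ref{prop:mo_hiprob_up}, splitting the integral at $\epsilon_0$ (probability bounded by $1$), then on a middle region $[\epsilon_0,\epsilon_1]$ using the cap $e^{-3\log(K)/4}=K^{-3/4}$, and finally integrating the sub-Gaussian term on $[\epsilon_1,\infty)$, with the paper taking $\epsilon_1=\log(72\log K)\sqrt{324\sigma^2\log(K)/T}$ rather than your crossing point of the two terms in the $\min$ --- a purely cosmetic difference in bookkeeping. Your identification of the $K^{-3/4}$ cap as the mechanism that prevents the $\log K$ prefactor from degrading the rate to $(\log K)^{3/2}/\sqrt{T}$ is exactly the role this term plays in the paper's computation, and your accounting (including the $\sqrt{2}\epsilon_1$ trick borrowed from Proposition~\ref{prop:unstr_exp_up}) does yield a constant within the stated $80$.
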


The proof of Proposition \ref{prop:mo_hiprob_up} and Corollary \ref{cor:mo_exp_up} is structured in several steps which we will first summarise. For a level $\epsilon > 0$ we define a set of ``good nodes" containing ``$\epsilon$-good arms", those which when outputted will achieve the bound $R_T < 2\epsilon$. In Proposition \ref{prop:Z} we prove these nodes form a "consecutive tree", see Definition \ref{def:consecutive}. At time $t$ we say we have a ``favourable event" if all sampled empirical means are within $\epsilon$ of the true mean, In this case we say the algorithm makes a ``good decision'', see \eqref{eq:gdef}. In Lemma \ref{lem:dxic} we prove that on every good decision we move towards the set of good arms or remain within them. Lemma \ref{lem:lb_on_Gt} then shows that provided we make enough good decisions the number of good arms in $S$ is large. We can then bound the probability of making a high proportion of good decisions, see Lemma \ref{lem:fisrt_high_prb_regret_bound}, to give an upper bound on regret. This in combination with a second upper bound, Lemma \ref{lem:second_high_prb_regret_bound}, will give our result. 

\paragraph{Step 0: Definitions and Lemmas}

We will use the following definitions.
\begin{definition}\label{def:subtree}
We define the subtree $ST(v)$ of a node $v$ recursively as follows: $v \in ST(v)$ and
$$\forall \; q \in ST(v), \; L(q), R(q) \in ST(v)\;.$$
\end{definition} 

\begin{definition}\label{def:consecutive} 
A consecutive tree $U$ with root $u_\root$ is a set of nodes such that $u_{\root} \in U$ and 

\[\forall v \in U : v \neq u_{\root},\, P(v) \in U. \]
with the additional condition, 
\[\root \in U \Rightarrow u_{\root} = \root\]
where $\root$ is the root of the entire binary tree.

\end{definition}

We define $Z^\epsilon$, the set of $\epsilon$-good nodes, as the union of the two sets
\begin{equation}\label{eq:z1}
Z_1^\epsilon := \{ v : \exists k \in \{l,m,r\} : |\mu_{v(k)} - \tau| \leq \epsilon\}\;,
\end{equation} 
\begin{equation}\label{eq:z2} 
Z_2^\epsilon := \{ v : v(r) = v(l) + 1;\ \mu_{v(l)} \leq \tau \leq \mu_{v(r)} \}\backslash Z_1^\epsilon \;,
\end{equation}
that is

$$Z^\epsilon := Z_1^\epsilon \; \cup \; Z_2^\epsilon \;.$$ 

It is important to note that

\begin{equation}\label{eq:zempty}
Z_2^\epsilon \neq \emptyset \Rightarrow |Z^\epsilon| = 1\;.
\end{equation}
\begin{proposition}
\label{prop:Z}
$Z^\epsilon$ is a consecutive tree with root $z_{\mathrm{root}}^\epsilon$ the unique element $v \in Z^\epsilon$, such that $P(v) \notin Z^\epsilon$.
\end{proposition}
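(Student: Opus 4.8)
The plan is to verify the two defining properties of a consecutive tree for $Z^\epsilon$, namely closure under taking parents and the root-consistency condition, and then to establish uniqueness of the element $v \in Z^\epsilon$ with $P(v) \notin Z^\epsilon$. I would first dispose of the degenerate case governed by \eqref{eq:zempty}: if $Z_2^\epsilon \neq \emptyset$ then $|Z^\epsilon| = 1$, so $Z^\epsilon$ is trivially a consecutive tree (a singleton whose unique node is its own root), and that single node is vacuously the unique $v$ with $P(v) \notin Z^\epsilon$. Hence the substance of the argument concerns the case $Z^\epsilon = Z_1^\epsilon$.

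The key structural claim is the closure property: for every non-root $v \in Z_1^\epsilon$, its parent $P(v)$ also lies in $Z_1^\epsilon$. Here I would exploit the monotonicity of $(\mu_k)_k$ together with the way the binary tree is constructed. Recall that a node $v = \{L, M, R\}$ has children obtained by splitting at the midpoints $M_l = \floor{(L+M)/2}$ and $M_r = \floor{(M+R)/2}$, so the three arm indices of any child are \emph{contained in the index interval $[L,R]$ of the parent}. If $v \in Z_1^\epsilon$, some arm $v(k)$ with $k \in \{l,m,r\}$ satisfies $|\mu_{v(k)} - \tau| \leq \epsilon$; I must show the parent node $P(v) = \{L', M', R'\}$ contains an arm whose mean is within $\epsilon$ of $\tau$. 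Since the parent's index interval $[L', R']$ contains $v$'s interval, and since $\mu$ is monotone, the value $\mu_{v(k)}$ lies between $\mu_{L'}$ and $\mu_{R'}$; the parent's three indices $L', M', R'$ sample this monotone sequence, and because $v(k)$ falls in one of the two halves $[L', M']$ or $[M', R']$, monotonicity pins $\mu_{v(k)}$ between two consecutive parent-arm means. The step I expect to require the most care is arguing that at least one of these bracketing parent means is itself within $\epsilon$ of $\tau$ --- this needs the monotone interleaving of the indices and a short case analysis on which side of $M'$ the node $v$ descends to, to conclude that either the nearer endpoint is $\epsilon$-close or $\tau$ is squeezed so that $M'$ (or an endpoint) is $\epsilon$-good.

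With closure in hand, the root-consistency condition is immediate: if $\root \in Z^\epsilon$ then by closure every ancestor chain terminates at $\root$, so $\root$ is the (unique) ancestor-minimal element, forcing $z_\root^\epsilon = \root$. For uniqueness of the ancestor-minimal node, I would argue by contradiction: suppose $v_1, v_2 \in Z^\epsilon$ both had parents outside $Z^\epsilon$. By closure, following the parent pointers from any node of $Z^\epsilon$ stays inside $Z^\epsilon$ until one reaches such an ancestor-minimal node; hence $v_1$ and $v_2$ are each the top of their own ancestor chain. But two distinct nodes in a binary tree have a common ancestor, and by the closure property that common ancestor --- being an ancestor of nodes in $Z^\epsilon$ --- can be shown to lie in $Z^\epsilon$ as well (walking up from $v_1$ stays in $Z^\epsilon$, and the common ancestor is on that path or on $v_2$'s path), so it would dominate both, contradicting the minimality of $v_1$ and $v_2$ unless $v_1 = v_2$. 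This yields the unique $z_\root^\epsilon$ and completes the identification of $Z^\epsilon$ as a consecutive tree rooted there. I anticipate the closure lemma for $Z_1^\epsilon$ to be the crux; the uniqueness and root-consistency parts are routine tree bookkeeping once closure is established.
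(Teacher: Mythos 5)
Your plan hinges on a closure lemma --- ``for every non-root $v \in Z_1^\epsilon$, the parent $P(v)$ also lies in $Z_1^\epsilon$'' --- and that lemma is false; the step you flag as ``requiring the most care'' cannot be carried out. Concretely, take $K=7$, $\tau=1/2$, $\epsilon=1/10$ and means $(0,0,0,0,0,1/2,1)$ (monotone, so in $\Bs$). The binary tree has root $\{1,4,7\}$, whose right child is $\{4,5,7\}$, whose right child is $\{5,6,7\}$. Only arm $6$ is $\epsilon$-close to $\tau$, so $\{5,6,7\} \in Z_1^\epsilon$ while its parent $\{4,5,7\}$ and grandparent $\{1,4,7\}$ are not in $Z_1^\epsilon$; yet $\{5,6,7\}$ is not the root of the binary tree, and you cannot excuse it as ``the root of $Z^\epsilon$'' at this stage, since $z_{\mathrm{root}}^\epsilon$ is only \emph{defined} as the node whose parent escapes $Z^\epsilon$ --- identifying and counting such nodes is exactly what the proposition asks. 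In general the parent-escaping node is one whose \emph{middle} arm is $\epsilon$-close while its two endpoint arms are not, and it can sit at any depth. This also takes down your uniqueness argument, which you explicitly derive from closure (``by closure, following the parent pointers \ldots''), so both halves of the proposal collapse together.

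The paper's proof runs in the opposite direction, and that inversion is the real content. It first characterizes any $v\in Z^\epsilon$ with $P(v)\notin Z^\epsilon$: since $v(l)$ and $v(r)$ are by construction also arms of $P(v)$, and $P(v)\notin Z_1^\epsilon$, necessarily $|\mu_{v(l)}-\tau|>\epsilon$, $|\mu_{v(r)}-\tau|>\epsilon$, and $|\mu_{v(m)}-\tau|\leq\epsilon$. It then shows there is at most one such node: because the clipped sequence $\left( \min(|\mu_i - \tau|,\epsilon)\sign(\mu_i - \tau)\right)_{i}$ is increasing, the $\epsilon$-close arms form a contiguous block of indices, and two distinct nodes of the above type, say with $v(m) < q(m)$, would force a non-close arm ($v(r)$ or $q(l)$) to lie between the two close arms $v(m)$ and $q(m)$ --- a contradiction. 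Closure is then a one-line \emph{consequence} of uniqueness: every $q \in Z^\epsilon$ other than this unique node must satisfy $P(q)\in Z^\epsilon$. Note finally that you invoke monotonicity of $(\mu_k)_k$ itself, but the proposition is applied over the larger class $\Bs^{*,\epsilon}$, where only the clipped sequence is monotone; a correct proof has to work under that weaker hypothesis, as the paper's does.
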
 
\begin{proof}
If $Z_2^\epsilon \neq \emptyset$ by \eqref{eq:zempty} we have $|Z^\epsilon| = 1$ and the proposition is trivially verified. Hence we assume $Z^\epsilon = Z_1^\epsilon$. Consider $v \in Z^\epsilon$, such that $P(v) \notin Z^\epsilon$, there is at least one such node. We first prove that $v$ is unique. As $v \in Z^\epsilon = Z_1^\epsilon$ we know that 

\begin{equation}\label{eq:Z}
\exists k \in \{l,m,r\} : \left| \mu_{v(k)} - \tau \right| \leq \epsilon \;.
\end{equation} 
Now since $v(l), v(r) \in P(v)$ and $P(v) \notin Z^\epsilon$, it follows that, thanks to \eqref{eq:Z},

$$ \forall k \in \{l,r\} : \left| \mu_{v(k)} - \tau \right| > \epsilon  \qquad |\mu_{v(m)} - \tau| \leq \epsilon\,.$$
For node $q \neq v$ satisfying the same properties, assume that $v(m) < q(m)$ without loss of generality. With this assumption we have,

\[v(r) \leq v(m) \leq q(l) \leq q(m)\;,\]

however, as the sequence $\left( \min(|\mu_i - \tau|,\epsilon)\sign(\mu_i - \tau)\right)_{k \leq K}$ is increasing we must have $|\mu_{v(r)} - \tau| \leq \epsilon$  and $|\mu_{q(l)} - \tau| \leq \epsilon$, a contradiction. Hence $v = q$, and thus $v$ is unique which implies $\forall q \in Z^\epsilon:\ q \neq v,\, P(q) \in Z^\epsilon$.
\end{proof}

At time $t$ we define $w_t^\epsilon$ as the node of maximum depth whose subtree contains both $v_t$ and an ``$\epsilon$-good node" belonging to $Z^\epsilon$. Formally, for $t \leq T_1$,
\begin{equation*}
w_t^\epsilon := \argmax_{\{w :ST(w) \cap Z^\epsilon \neq \emptyset\; \& \;v_t \in ST(w)\}} |w|\,.
\end{equation*} 

\begin{lemma}\label{lem:d}
The node $w_t^\epsilon$ is unique and 
    \begin{equation}\label{eq:w}
    w_t^\epsilon = \argmin_{\{w :ST(w) \cap Z^\epsilon \neq \emptyset\; \& \;v_t \in ST(w)\}}\left(  \left|v_t\right| - \left|w\right| + \left(|z_{\root}^\epsilon|-|w| \right)^{+} \right) \;. 
\end{equation} 
\end{lemma}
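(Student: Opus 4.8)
The plan is to notice that the constraint set is \emph{identical} in the defining $\argmax$ and in the claimed $\argmin$, and that this set lies along a single \root-to-$v_t$ path, on which both optimisation problems single out the deepest admissible node.

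First I would unpack the constraint set
\[
F := \{w : ST(w) \cap Z^\epsilon \neq \emptyset \ \text{ and } \ v_t \in ST(w)\}.
\]
By Definition~\ref{def:subtree}, the condition $v_t \in ST(w)$ is equivalent to $w$ being an ancestor of $v_t$ (with $v_t$ itself allowed). In the binary tree these ancestors form one path $\root = a_0, a_1, \ldots, a_{|v_t|} = v_t$, where $a_i$ is the unique ancestor of depth $i$; in particular the candidate nodes have pairwise distinct depths. I would then record the monotonicity that makes $F$ a prefix of this chain: if $i \leq i'$ then $a_{i'} \in ST(a_i)$, hence $ST(a_{i'}) \subseteq ST(a_i)$, so the property ``$ST(a_i) \cap Z^\epsilon \neq \emptyset$'' is inherited by every shallower ancestor. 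Consequently $F = \{a_0, \ldots, a_j\}$, where $j$ is the largest depth with $ST(a_j)\cap Z^\epsilon \neq \emptyset$, and $F$ is non-empty since $a_0 = \root$ lies in $F$ whenever $Z^\epsilon \neq \emptyset$.

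For the uniqueness claim, since the depths along $F$ are distinct, $\argmax_{w \in F}|w|$ is attained at the single node $a_j$. This shows $w_t^\epsilon$ is well defined and equals the deepest ancestor of $v_t$ whose subtree still meets $Z^\epsilon$. For the second characterisation, set $z := |z_{\root}^\epsilon|$; because $|v_t|$ is constant over $F$, minimising $|v_t| - |w| + (z - |w|)^+$ is the same as minimising $g(w) := -|w| + (z - |w|)^+$. I would then verify that $g$ is strictly decreasing in the depth $|w|$: for $|w| \geq z$ it equals $-|w|$ (unit steps decrease it by $1$), for $|w| < z$ it equals $z - 2|w|$ (unit steps decrease it by $2$), and the two pieces agree at $|w| = z$, so $g$ strictly decreases at every unit increment of depth. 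Hence $\argmin_{w\in F} g$ is again the node of maximal depth, namely $a_j = w_t^\epsilon$, and the two characterisations coincide.

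There is no serious obstacle here; the work is in the two careful verifications, namely (i) that the constraint $v_t \in ST(w)$ restricts to a chain, so that ``maximal depth'' selects a unique node, and (ii) strict monotonicity of $g$ across the kink at $|w| = z$. I would emphasise that the consecutive-tree structure of Proposition~\ref{prop:Z} is not needed for the proof itself; it is what gives the formula its meaning, namely $f(w_t^\epsilon) = |v_t| - |w_t^\epsilon| + (z - |w_t^\epsilon|)^+$ equals the tree-distance from $v_t$ to the nearest $\epsilon$-good node (up to $z_{\root}^\epsilon$ when $v_t \notin ST(z_{\root}^\epsilon)$, and directly up to the deepest good ancestor otherwise), which is presumably why the potential is recorded in this form for the random-walk analysis to follow.
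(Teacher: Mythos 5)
Your proof is correct and follows essentially the same route as the paper's: both arguments exploit that the feasible nodes form a chain of ancestors of $v_t$ (any two feasible nodes are comparable, one lying in the other's subtree) and that the objective $\left|v_t\right| - \left|w\right| + \left(|z_{\root}^\epsilon|-|w| \right)^{+}$ strictly decreases with depth along that chain, so the deepest feasible node is the unique optimizer of both the depth-argmax and the argmin. If anything, your write-up is the more complete one: the paper's proof only checks that two feasible nodes attaining equal objective values must coincide (with a sign slip, asserting $|q_t^\epsilon| \geq |w_t^\epsilon|$ where $\leq$ is meant), leaving the identification of the depth-argmax with the argmin implicit, which your explicit prefix-chain structure and strict-monotonicity verification across the kink at $|w| = |z_{\root}^\epsilon|$ make rigorous.
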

\begin{proof}

At time $t$ consider, a node $q_t^\epsilon$ which also satisfies \ref{eq:w}, giving

$$\left|v_t\right| - \left|w_t^\epsilon\right| + \left(|z_{\root}^\epsilon|-|w_t^\epsilon| \right)^{+} = \left|v_t\right| - \left|q_t^\epsilon\right| + \left(|z_{\root}^\epsilon|-|q_t^\epsilon| \right)^{+}\;.$$

As $v_t \in ST(w_t^\epsilon)$ and $v_t \in ST(q_t^\epsilon)$ we can assume without loss of generality $q_t^\epsilon \in ST(w_t^\epsilon)$ with $|q_t^\epsilon| \geq |w_t^\epsilon|$. Thus,

$$\left|v_t\right| - \left|q_t^\epsilon\right|  \leq \left|v_t\right| - \left|w_t^\epsilon\right| \;,$$

and therefore,

$$\left(|z_{root}^\epsilon| - |q_t^\epsilon|  \right)^{+} \geq \left( |z_{root}^\epsilon| - |w_t^\epsilon| \right)^{+}\;, $$

which implies, $|q_t^\epsilon| \geq |w_t^\epsilon|$, therefore $|q_t^\epsilon| = |w_t^\epsilon|$ and as $q_t^\epsilon \in ST(w_t^\epsilon)$, we have $q_t^\epsilon =  w_t^\epsilon$.

\end{proof}
For $t\leq T_1$ we define $D_t^\epsilon$ as the distance from $v_t$ to $Z^\epsilon$, it is taken as the length of the path running from $v_t$ up to $w_t^\epsilon$ and then down to an $\epsilon$-good node in $Z^\epsilon$. Formally, we have
\[D_t^\epsilon := \left|v_t\right| - \left|w_t^\epsilon\right| + \left(|z_{\root}^\epsilon|-|w_t^\epsilon| \right)^{+}. \]
Note the following properties of $D_t^\epsilon$ and $w_t^\epsilon$,
\begin{gather}
    ST(v_t) \cap Z^\epsilon \neq \emptyset \Rightarrow v_t = w_t^\epsilon \;,\label{eq:ST} \\
    D_t = 0 \Rightarrow v_t = w_t^\epsilon \; \text{And} \; w_t^\epsilon,v_t \in Z^\epsilon \;. \label{eq:D0}
\end{gather}
Let $S_t^\epsilon$ denote the list produced by an execution of algorithm \choose with parameter $\epsilon \geq \epsilon_0$. We define $W_\epsilon$ as the set of $\epsilon$-good arms
\[
W_\epsilon := \big\{k\in [K]: \Delta_k \leq 3\epsilon \; \mathrm{OR} \; \mu_{k-1} < \tau < \mu_k \big\}\,,
\]
and at time $t$ the counter $G_t^\epsilon$, tracking the number of $3\epsilon$-good arms in $S_t^{2\epsilon}$, 
\begin{equation}\label{eq:gdef}
    G_t^\epsilon := \Big|\big\{k \in S_t^{2\epsilon}:\ k\in W_{3\epsilon}\big\}\Big| \;.
\end{equation}
Note that if $\hk$ belongs to this set then we suffer at most a regret of $3\epsilon$. We define also the favorable event where the estimates the means are close to the true ones for all the arms in $v_t$,
\begin{equation}\label{eq:gooddec}
\xi_t^\epsilon := \left\{\forall k\in \{l,m,r\}, \left|\hat{\mu}_{k,t} - \mu_{v_t(k)}\right| \leq \epsilon\right\}\,.
\end{equation}

\paragraph{Step 2: Actions of the algorithm on all iterations}\label{par:2}

After any execution of algorithm \explore and subsequent execution of algorithm \choose with parameter $\epsilon$, note the following,
\begin{itemize} 
\item for $t \leq T_1$,\; $v_t$ and $v_{t+1}$ are separated by at most one edge, i.e. 
\begin{equation}\label{eq:1edge}
v_{t+1} \in \{L(v_t), R(v_t), P(v_t) \}\,,
\end{equation} 
\item for $t \leq T_1$, 
\begin{equation} \label{eq:S}
|S_t^{2\epsilon}| \leq |S_{t+1}^{2\epsilon}| \leq |S_t^{2\epsilon}| +1 \,. 
\end{equation} 
\end{itemize}

\begin{lemma}\label{lem:allit} 
On execution of algorithm \explore and algorithm \choose with parameter $\epsilon > 0$ for all $t \leq T_1$ we have the following,
\begin{gather}
    D_{t+1}^\epsilon \leq D_t^\epsilon + 1 \label{eq:dxi}, \\
    G_{t+1}^\epsilon \geq G_t^\epsilon \label{eq:bg} \,.
\end{gather}

\end{lemma}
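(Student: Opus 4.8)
The plan is to establish the two claims \eqref{eq:dxi} and \eqref{eq:bg} independently, since they concern unrelated quantities: the first controls how quickly the random walk can drift away from the good set $Z^\epsilon$, while the second merely records that the candidate list built by \choose never shrinks.

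For \eqref{eq:dxi}, the key observation is that $D_t^\epsilon$ is nothing but the graph distance, in the binary tree, from $v_t$ to the fixed set $Z^\epsilon$. Indeed, by Lemma \ref{lem:d} and its characterization \eqref{eq:w}, the quantity $|v_t| - |w_t^\epsilon| + (|z_{\root}^\epsilon| - |w_t^\epsilon|)^+$ equals the length of the unique tree path that climbs from $v_t$ to the meeting node $w_t^\epsilon$ and then descends into $Z^\epsilon$ (reaching $z_{\root}^\epsilon$ when $w_t^\epsilon$ is a strict ancestor of it, and needing $0$ extra steps otherwise, since then $w_t^\epsilon \in Z^\epsilon$ because $Z^\epsilon$ is consecutive by Proposition \ref{prop:Z}). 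Since $Z^\epsilon$ does not depend on $t$, I would then argue as follows: fix a node $z \in Z^\epsilon$ realizing the distance $D_t^\epsilon$ from $v_t$; by \eqref{eq:1edge} the node $v_{t+1}$ is joined to $v_t$ by a single edge, so concatenating that edge with the length-$D_t^\epsilon$ path from $v_t$ to $z$ exhibits a walk of length $D_t^\epsilon + 1$ from $v_{t+1}$ to $Z^\epsilon$. As $D_{t+1}^\epsilon$ is by definition the minimal such length, $D_{t+1}^\epsilon \leq D_t^\epsilon + 1$ follows uniformly over the three possible moves $v_{t+1} \in \{L(v_t), R(v_t), P(v_t)\}$, with no case split required.

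For \eqref{eq:bg}, I would read the monotonicity off directly from the construction of \choose. At each iteration the algorithm first sets $S_{t+1} = S_t$ and then possibly appends a single arm, so the list is append-only and $S_t^{2\epsilon} \subseteq S_{t+1}^{2\epsilon}$, consistent with \eqref{eq:S}. Since $G_t^\epsilon$ merely counts those entries of $S_t^{2\epsilon}$ that lie in the fixed arm set $W_{3\epsilon}$, and the collection of entries can only grow, the count cannot decrease, i.e. $G_{t+1}^\epsilon \geq G_t^\epsilon$.

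Both steps are short, and the only point requiring genuine care is the first, namely the clean identification of $D_t^\epsilon$ as a bona fide shortest-path distance to a fixed target, so that the triangle-inequality argument applies verbatim. This is precisely what Lemma \ref{lem:d} and the consecutiveness of $Z^\epsilon$ (Proposition \ref{prop:Z}) are there to supply; once they are invoked, $D_{t+1}^\epsilon \leq D_t^\epsilon + 1$ is immediate, and I do not expect any obstacle beyond routine bookkeeping.
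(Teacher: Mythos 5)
Your treatment of \eqref{eq:bg} coincides with the paper's: both rest on the observation \eqref{eq:S} that \choose only ever appends to the list, so $S_t^{2\epsilon} \subseteq S_{t+1}^{2\epsilon}$ and the count of entries lying in $W_{3\epsilon}$ cannot drop. (Your phrasing is marginally more careful, since it is the set inclusion, not merely the cardinality bound, that the counting argument actually uses.)

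For \eqref{eq:dxi} your route is genuinely different from the paper's. The paper stays algebraic: since one move changes $|v_t|$ by exactly one, $D_t^\epsilon \geq |v_{t+1}| - |w_t^\epsilon| + \left(|z_{\root}^\epsilon| - |w_t^\epsilon|\right)^{+} - 1$, and then $w_t^\epsilon$ is used as a test point in the minimization \eqref{eq:w} at time $t+1$ to conclude this is at least $D_{t+1}^\epsilon - 1$. You instead promote $D_t^\epsilon$ to the graph distance from $v_t$ to the fixed set $Z^\epsilon$ and invoke the triangle inequality. Once completed, this is cleaner and arguably more robust: the paper's appeal to \eqref{eq:w} silently requires $v_{t+1} \in ST(w_t^\epsilon)$, which fails in the corner case $w_t^\epsilon = v_t$ with $v_{t+1} = P(v_t)$ (there the inequality still holds, but by a separate one-line check), whereas your argument needs no case distinction at all.

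The caveat is that one step of yours is asserted rather than proved, and it is exactly the nontrivial half of the distance identification. ``As $D_{t+1}^\epsilon$ is by definition the minimal such length'' is not the definition: $D_{t+1}^\epsilon$ is defined through the deepest meeting node $w_{t+1}^\epsilon$, and Lemma~\ref{lem:d} only recasts it as a minimum over \emph{meeting nodes} $w$ of $|v_{t+1}| - |w| + \left(|z_{\root}^\epsilon| - |w|\right)^{+}$, not as a minimum over all walks from $v_{t+1}$ into $Z^\epsilon$. You did verify the upper half of the identification (the formula is the length of an actual path into $Z^\epsilon$, using consecutiveness to see that $w_{t+1}^\epsilon \in Z^\epsilon$ when it is not a strict ancestor of $z_{\root}^\epsilon$), but your concluding step needs the lower half: every walk from $v_{t+1}$ to some $z \in Z^\epsilon$ has length at least $D_{t+1}^\epsilon$. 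This is true and short, but it must be said: such a walk has length at least the tree distance $\left(|v_{t+1}| - |u|\right) + \left(|z| - |u|\right)$, where $u$ is the least common ancestor of $v_{t+1}$ and $z$; this $u$ is a legitimate candidate in \eqref{eq:w} because $v_{t+1} \in ST(u)$ and $z \in ST(u) \cap Z^\epsilon$; and $|z| - |u| \geq \left(|z_{\root}^\epsilon| - |u|\right)^{+}$ since $|z| \geq |u|$ and, by Proposition~\ref{prop:Z}, every node of $Z^\epsilon$ is a descendant of $z_{\root}^\epsilon$, so $|z| \geq |z_{\root}^\epsilon|$. With that short argument inserted, your proof is complete.
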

\begin{proof}
As the algorithm moves at most 1 step per iteration, see~\eqref{eq:1edge}, for $t \leq T_1$,  it holds
$$\left| \left|v_{t}\right| - \left|w_{t}^\epsilon\right| \right| \geq \left| \left|v_{t+1}\right| - \left|w_{t}^\epsilon \right| \right| -1\;.$$
Noting that,
\begin{align*}
D_t^\epsilon &= \left| \left|v_{t}\right| - \left|w_{t}^\epsilon\right| \right| + \left(|z_{\root}^\epsilon|-|w_t^\epsilon| \right)^{+} \\
& \geq \left| \left|v_{t+1}\right| - \left|w_{t}^\epsilon\right| \right| + \left(|z_{\root}^\epsilon|-|w_t^\epsilon|\right)^{+} -1\\
& \geq \left| \left|v_{t+1}\right| - \left|w_{t+1}^\epsilon \right| \right| + \left( |z_{\root}^\epsilon| - |w_{t+1}^\epsilon|\right)^{+} -1\\
&= D_{t+1}^\epsilon - 1\;,
\end{align*}
where the third line comes from the definition of $w_{t+1}^\epsilon$, see~\eqref{eq:w}, we obtain $D_{t+1}^\epsilon \leq D_t^\epsilon + 1$.
By~\eqref{eq:S} we have, for $t \leq T_1$, 
$$|S_t^{2\epsilon}| \leq |S_{t+1}^{2\epsilon}| \leq |S_{t}^{2\epsilon}| + 1\,, $$
hence $G_{t+1}^\epsilon \geq G_t^\epsilon$.
\end{proof}

\paragraph{Step 3: Actions of the algorithm on $\xi_t^\epsilon$}\label{par:3}

\begin{lemma}\label{lem:dxic}
On execution of algorithm \explore and algorithm \choose with parameter $\epsilon > 0$ for all $t \leq T_1$, on $\xi_t^\epsilon$, we have the following,
\begin{gather}
    D_{t+1}^\epsilon \leq \max(D_t^\epsilon - 1,0)\;,     \label{eq:dxic}\\
    G_{t+1}^\epsilon \geq G_t^\epsilon + \ind_{\{D_t^\epsilon = 0\}}\,. \label{eq:gxic} 
\end{gather}
\end{lemma}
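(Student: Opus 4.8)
The plan is to establish both inequalities by analyzing the transition $v_t \to v_{t+1}$ on the favorable event $\xi_t^\epsilon$, distinguishing the cases according to which branch of \explore is taken. The key observation is that on $\xi_t^\epsilon$ all three empirical means at $v_t$ are within $\epsilon$ of the true means, so the decisions of \explore based on $\widehat{\mu}$ agree with what the true means $\mu$ would dictate \emph{provided} the relevant true means are not within $\epsilon$ of $\tau$. First I would handle \eqref{eq:dxic}. If $D_t^\epsilon = 0$, then by \eqref{eq:D0} we have $v_t = w_t^\epsilon \in Z^\epsilon$, so $ST(v_t)\cap Z^\epsilon \neq \emptyset$ and by \eqref{eq:ST} this persists enough to force $D_{t+1}^\epsilon = 0$; the real content is the case $D_t^\epsilon \geq 1$, where I must show the algorithm takes a step strictly \emph{towards} $Z^\epsilon$, decreasing the distance by exactly one.

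For the case $D_t^\epsilon \geq 1$, the node $v_t$ is not in $Z^\epsilon$, so none of its arms is $\epsilon$-close to $\tau$; hence on $\xi_t^\epsilon$ the position of $\tau$ relative to the empirical means $\widehat{\mu}_{l,t},\widehat{\mu}_{m,t},\widehat{\mu}_{r,t}$ matches its position relative to the true means $\mu_{v_t(l)},\mu_{v_t(m)},\mu_{v_t(r)}$. I would argue using the structure from Proposition~\ref{prop:Z} that $Z^\epsilon$ is a consecutive tree with root $z_\root^\epsilon$, together with the monotonicity of $\big(\min(|\mu_i-\tau|,\epsilon)\sign(\mu_i-\tau)\big)_k$, that the correct binary-search move (left child, right child, or parent) is precisely the one that moves $v_t$ one edge closer to $Z^\epsilon$ along the path realizing $D_t^\epsilon$. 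Concretely, if the good set lies in the subtree of $w_t^\epsilon$ below $v_t$, the true-mean comparison directs \explore into the correct child, reducing $|v_t|-|w_t^\epsilon|$ or the descent term; if $v_t$ lies strictly below $w_t^\epsilon$ off the good branch, the anomaly $\tau\notin[\widehat\mu_{l,t},\widehat\mu_{r,t}]$ triggers backtracking to $P(v_t)$, again reducing $D_t^\epsilon$ by one. Combining these gives $D_{t+1}^\epsilon \leq D_t^\epsilon - 1$, and together with the $D_t^\epsilon=0$ case yields $D_{t+1}^\epsilon \leq \max(D_t^\epsilon-1,0)$.

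For \eqref{eq:gxic} I would focus on the case $D_t^\epsilon = 0$, since when $D_t^\epsilon \geq 1$ the bound reduces to the already-established monotonicity \eqref{eq:bg} of Lemma~\ref{lem:allit}. When $D_t^\epsilon = 0$ we have $v_t \in Z^\epsilon$, so by the definitions \eqref{eq:z1}--\eqref{eq:z2} some arm of $v_t$ is either within $\epsilon$ of $\tau$, or $v_t$ is the crossing leaf. On $\xi_t^{\epsilon}$ the empirical means at $v_t$ are within $\epsilon$ of the truth, so running \choose with parameter $2\epsilon$ the corresponding arm satisfies $|\widehat\mu_{k,t}-\tau| \leq 2\epsilon$ (or the crossing condition with margin), hence it is appended to $S_{t+1}^{2\epsilon}$. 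I would then verify that this appended arm lies in $W_{3\epsilon}$: being within $2\epsilon$ of $\tau$ empirically and within $\epsilon$ of its true mean gives $\Delta_k \leq 3\epsilon$, or else it satisfies the crossing condition $\mu_{k-1}<\tau<\mu_k$. This strictly increments the good-arm counter, giving $G_{t+1}^\epsilon \geq G_t^\epsilon + 1$ exactly when $D_t^\epsilon = 0$.

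The main obstacle I anticipate is the careful bookkeeping in the $D_t^\epsilon \geq 1$ case of \eqref{eq:dxic}: one must verify that the correct binary-search decision, as dictated by the true means on $\xi_t^\epsilon$, coincides with moving along the geodesic to $Z^\epsilon$ in \emph{every} sub-configuration — in particular when $v_t$ is above $z_\root^\epsilon$ (so the $(|z_\root^\epsilon|-|w|)^+$ descent term is active) versus below it, and when backtracking is triggered. This requires translating the sign/monotonicity structure of the means into statements about which child's interval $[\mu_{v(l)},\mu_{v(m)}]$ or $[\mu_{v(m)},\mu_{v(r)}]$ contains $\tau$, and checking consistency with the consecutive-tree geometry of $Z^\epsilon$ from Proposition~\ref{prop:Z}. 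The argument for \eqref{eq:gxic} is comparatively routine once the membership in $W_{3\epsilon}$ is unwound from the definitions.
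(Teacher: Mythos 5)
Your handling of \eqref{eq:gxic} and of the case $D_t^\epsilon \geq 1$ of \eqref{eq:dxic} follows essentially the same route as the paper: on $\xi_t^\epsilon$ any arm appended to $S_{t+1}^{2\epsilon}$ lies in $W_{3\epsilon}$; when $v_t \in Z^\epsilon$ an arm is indeed appended; and when $v_t \notin Z^\epsilon$ all three means of the node are $\epsilon$-far from $\tau$, so the noisy decision of \explore coincides with the noiseless binary-search decision, which moves one edge along the geodesic (into the correct child when $ST(v_t) \cap Z^\epsilon \neq \emptyset$, i.e.~$w_t^\epsilon = v_t$, and up to the parent otherwise). This is the paper's argument, up to the bookkeeping you correctly flag as the remaining work.

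The genuine gap is your dismissal of the case $D_t^\epsilon = 0$ in \eqref{eq:dxic}. You claim that $v_t = w_t^\epsilon \in Z^\epsilon$ together with \eqref{eq:ST} ``persists enough to force $D_{t+1}^\epsilon = 0$''. But \eqref{eq:ST} is a statement about time $t$ only ($ST(v_t) \cap Z^\epsilon \neq \emptyset \Rightarrow v_t = w_t^\epsilon$); it says nothing about where the algorithm moves next. Since \eqref{eq:dxic} at $D_t^\epsilon = 0$ requires $D_{t+1}^\epsilon = 0$, you must prove $v_{t+1} \in Z^\epsilon$, and this is not automatic: when $v_t \in Z_1^\epsilon$ some arms of $v_t$ are within $\epsilon$ of $\tau$, so on $\xi_t^\epsilon$ the comparisons made by \explore are \emph{not} determined by the true means, and the walk may descend into either child (or, a priori, backtrack); one must check that the node it moves to still contains a near-threshold arm or is the crossing leaf. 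This is precisely where the paper spends the bulk of its proof (its third case): it first notes that when $\tau \in [\mu_{v_t(l)}+\epsilon,\, \mu_{v_t(r)}-\epsilon]$ no backtracking can occur on $\xi_t^\epsilon$; it then splits on $Z_2^\epsilon \neq \emptyset$ (then $Z^\epsilon$ is the single crossing leaf, by \eqref{eq:zempty}, and the algorithm stays put) versus $Z_2^\epsilon = \emptyset$; and in the latter case on whether $|\mu_{v_t(m)} - \tau| \leq \epsilon$ (then both children contain $v_t(m)$, hence lie in $Z_1^\epsilon$, so either descent is safe) or not (then, without loss of generality, $\mu_{v_t(m)} > \tau + \epsilon$, which together with $v_t \in Z_1^\epsilon$ forces $|\mu_{v_t(l)} - \tau| \leq \epsilon$ and forces the move to $L(v_t)$, which contains $v_t(l)$ and hence lies in $Z_1^\epsilon$). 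None of this content is supplied by your appeal to \eqref{eq:ST}; and without it the inequality \eqref{eq:dxic} fails to be established in exactly the case that drives the telescoping argument of Lemma~\ref{lem:lb_on_Gt}, where $\ind_{\{D_t^\epsilon = 0\}}$ is what generates the growth of $G^\epsilon$.
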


\begin{proof}
We first prove~\eqref{eq:gxic}. Note that if the arm $v_t(k)$ is added in $S_{t+1}^{2\epsilon}$ then either $|\hmu_{k,t}-\tau|\leq 2 \epsilon$ or $v_t(k) = v_t(r) = v_t(l)+1$  and $\hmu_{l,t}+\epsilon\leq \tau \leq \hmu_{r,t}$. Thus, on $\xi_t^\epsilon$, we obtain in the first case $\Delta_{v_t(k)}\leq 3\epsilon$ and in the second case 
\[  v_t(k) = v_t(l) = v_t(r)-1 \text{ and } \mu_{v_t(l)}+\epsilon \leq \tau \leq \mu_{v_t(r)} - \epsilon\,. \]
In both case we have $v_t(k)\in W^{3\epsilon}$, hence $G_{t+1}^\epsilon \geq G_t^\epsilon +1$. It remains to prove that, when $D_t=0$, an arm is effectively added in $S_{t+1}^{2\epsilon}$.
If $D_t^\epsilon=0$ then we know $v_t\in Z^\epsilon$. If $v_t \in Z_1^\epsilon$ then under $\xi_t^\epsilon$ there exists $k\in\{l,m,r\}$ such that $|\hmu_{k,t}-\tau|\leq 2\epsilon$. Otherwise we know that 
\[
v_t(l) = v_t(r)-1\text{ and } \mu_{v_t(l)}+\epsilon \leq \tau \leq \mu_{v_t(r)}-\epsilon\,,
\]
which implies on $\xi_t^\epsilon$  that
\[
\hmu_{l,t} \leq \tau \leq \hmu_{r,t}\,.
\]
In both case an arm is added to $S_{t+1}^{2\epsilon}$.

Now we prove~\eqref{eq:dxic}. Note that on the favorable event $\xi_t^\epsilon$, we have $\forall k \in \{l,m,r\}$,
\begin{gather}\label{eq:1}
\mu_{v_t(k)} \geq \tau +\epsilon \Rightarrow \hat\mu_{k,t} \geq \tau\,, \\
\mu_{v_t(k)} \leq \tau - \epsilon \Rightarrow \hat\mu_{k,t} \leq \tau \, \label{eq:2}.
\end{gather} 
We consider the following three cases:
\begin{itemize} 

\item If $\tau \notin \left[\mu_{v_{t}(l)}+ \epsilon,\, \mu_{v_{t}(r)} - \epsilon\right]$. From~\eqref{eq:1} and~\eqref{eq:2}, under $\xi_{t}^{\epsilon}$, we get $\tau \notin \left[\hat{\mu}_{l,t},\hat{\mu}_{r,t}\right]$, and therefore $v_{t+1} = P(v_t)$. Since in this case we are getting closer to the set of $\epsilon$-good nodes by going up in the tree we know that $w_t^\epsilon = w_{t+1}^\epsilon$. Thus thanks to Lemma~\ref{lem:d}, under $\xi_t^\epsilon$,
$$  D_{t+1}^\epsilon = \left|v_{t+1}\right| - \left|w_{t+1}^\epsilon\right| + \left(|z_{\root}^\epsilon|-|w_{t+1}^\epsilon| \right)^{+} = \left|v_{t}\right| -1 - \left|w_{t}^\epsilon\right| + \left(|z_{\root}^\epsilon|-|w_{t}^\epsilon| \right)^{+} = D_t^\epsilon - 1 \,.$$

\item If $\tau \in \left[\mu_{v_{t}(l)}+ \epsilon,\,\mu_{v_{t}(r)} - \epsilon\right]$ and $v_t \notin Z^\epsilon$. Note that in this case $v_t$ can not be a leaf and we just need to go down in the subtree of $v_t$ to find an $\epsilon$-good node, id est $w_t = v_t$. Since $v_t\notin Z^\epsilon$, without loss of generality, we can assume for example $\mu_{v_t(m)} > \tau + \epsilon$. From~\eqref{eq:1} and~\eqref{eq:2}, under $\xi_t^\epsilon$, we then have $\tau\in [\hat{\mu}_{l,t},\,\hat{\mu}_{r,t}]$ and $\hat{\mu}_{m,t} \geq \tau$. Hence algorithm \explore goes to the correct subtree, $v_{t+1} = L(v_t)$. In particular we also have for this node 
$$ \tau \in \left[\mu_{v_{t+1}(l)}- \epsilon,\,\mu_{v_{t}(m)} + \epsilon\right] \,,$$
therefore it holds again $w_{t+1} = v_{t+1}$. Thus combining the previous remarks we obtain thanks to Lemma~\ref{lem:d}, under $\xi_t^\epsilon$,
$$ D_{t+1}^\epsilon = \left(|w_{t+1}| - |z_{\root}^\epsilon| \right)^{+} = \left(|w_t| - |z_{\root}^\epsilon| \right)^{+}-1 =  D_t^\epsilon - 1 \;.$$

\item If $\tau \in \left[\mu_{v_{t}(l)}+ \epsilon,\,\mu_{v_{t}(r)} - \epsilon\right]$ and $v_t \in Z^\epsilon$. We distinguish two cases: $Z_2^\epsilon$ is empty or not. In both cases we will show that, under $\xi_t^{\epsilon}$, $v_{t+1} \in Z^\epsilon$ and thus 
\[D_{t+1}^\epsilon = D_{t}^\epsilon = 0\,.\]
Hence it remains to consider these two cases:
\begin{itemize}
    \item If $Z_2^\epsilon \neq \emptyset$. Via the definition of $Z_2^\epsilon$, see~\eqref{eq:z2}, and the fact $Z_1^\epsilon = \emptyset$, $v_t$ is a leaf with $\mu_{v_t(r)} \leq \tau - \epsilon$ and $\mu_{v_t(l)} \geq  \tau + \epsilon$. Hence from~\eqref{eq:1} and~\eqref{eq:2} we have $\hat{\mu}_{l,t} \leq \tau \leq \hat{\mu}_{r,t}$. Therefore by the action of algorithm \explore we will stay in the same node $v_{t+1} = v_t$.
    
    \item Else $Z_2^\epsilon = \emptyset$. If $\mu_{v_t(m)} \in [ \tau - \epsilon, \tau + \epsilon]$, we have $R(v_t), L(v_t), P(v_t) \in Z^\epsilon$ hence trivially $v_{t+1} \in Z^\epsilon$. Else we have $\mu_{v_t(m)} \notin [ \tau - \epsilon, \tau + \epsilon]$. Without loss of generality we assume $\mu_{v_t(m)} > \tau + \epsilon$. This implies that $\mu_{v_t(r)} > \tau + \epsilon$ and since $v_t\in Z^\epsilon = Z_1^\epsilon$ it holds  $\mu_{v_t(l)} \in [ \tau - \epsilon, \tau + \epsilon]$. Thus, under $\xi_t^c$ we then get as previously  $\tau\in [\hat{\mu}_{l,t},\,\hat{\mu}_{r,t}]$ and $\hat{\mu}_{m,t} \geq \tau$. Therefore by the action of algorithm \explore we will go to the left child $v_{t+1} = L(v_t)\in Z^\epsilon$.
\end{itemize}
\end{itemize}
\end{proof}

\paragraph{Step 4: Lower bound on $G_{T_1+1}^\epsilon$}
We denote by $\bxi_t^\epsilon$ the complement of $\xi_t^\epsilon$.
\begin{lemma}\label{lem:lb_on_Gt} For any execution of algorithm \explore and subsequent execution of \choose with parameter $\epsilon \geq \epsilon_0$,
\[
G_{T_1+1}^{\epsilon} \geq \frac{3}{4} T_1 - 2 \sum_{t=1}^{T_1}\ind_{\bxi_t^\epsilon}\,.
\]
\end{lemma}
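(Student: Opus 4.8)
The plan is to read $G_{T_1+1}^\epsilon$ as a count of the favorable steps at which the walk sits inside the good set $Z^\epsilon$, and then to use the distance process $D_t^\epsilon$ to show that most favorable steps are of this type. First I would telescope $G$ using the two monotonicity facts already established: since $G_{t+1}^\epsilon \ge G_t^\epsilon$ always by \eqref{eq:bg}, and $G_{t+1}^\epsilon \ge G_t^\epsilon + \ind_{\{D_t^\epsilon = 0\}}$ on the favorable event $\xi_t^\epsilon$ by \eqref{eq:gxic}, summing over $t \le T_1$ and using $G_1^\epsilon = 0$ (the list starts empty) gives
\[
G_{T_1+1}^\epsilon \;\ge\; g := \#\{t \le T_1 : \xi_t^\epsilon \text{ holds and } D_t^\epsilon = 0\}.
\]
So it suffices to lower bound $g$, the number of favorable steps spent exactly at distance zero.

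The core is a drift (potential) argument on $D_t^\epsilon$. I would partition the $T_1$ steps into bad steps (where $\bxi_t^\epsilon$ holds), whose number is $B := \sum_{t\le T_1}\ind_{\bxi_t^\epsilon}$, and favorable steps, numbering $T_1 - B$; the latter split further into the $g$ steps with $D_t^\epsilon = 0$ and the $f$ steps with $D_t^\epsilon > 0$. Telescoping $D_{T_1+1}^\epsilon - D_1^\epsilon = \sum_{t\le T_1}(D_{t+1}^\epsilon - D_t^\epsilon)$ and bounding each increment by step type: by \eqref{eq:dxic} a favorable step with $D_t^\epsilon > 0$ contributes at most $-1$, a favorable step with $D_t^\epsilon = 0$ contributes exactly $0$ (since $\max(D_t^\epsilon-1,0)=0$ and $D\ge 0$ force $D_{t+1}^\epsilon = 0$), and by \eqref{eq:dxi} a bad step contributes at most $+1$. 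Hence $D_{T_1+1}^\epsilon - D_1^\epsilon \le -f + B$, and because $D_{T_1+1}^\epsilon \ge 0$ this rearranges to $f \le D_1^\epsilon + B$.

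Combining the two bounds, $g = (T_1 - B) - f \ge T_1 - 2B - D_1^\epsilon$. It then remains to control the initial distance: since $v_1 = \root$ has itself as its only ancestor, $w_1^\epsilon = \root$ and therefore $D_1^\epsilon = |z_{\root}^\epsilon|$, which is at most the maximal depth $H = \floor*{\log_2 K}+1$ of the tree. With $T_1 = \lceil 6 \log K \rceil$ one has $D_1^\epsilon \le H \le \tfrac14 T_1$ (this is precisely where the constant $6$ and the factor $3/4$ are matched), so $G_{T_1+1}^\epsilon \ge g \ge \tfrac34 T_1 - 2B$, as claimed.

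The main obstacle is the drift step: one must check that the three-way classification of each step (bad, favorable-at-zero, favorable-at-positive-distance) is exhaustive and that the per-step bounds from Lemmas~\ref{lem:allit} and~\ref{lem:dxic} are combined without double counting — in particular that a favorable step at distance zero contributes a genuine net $0$ to the potential, so that the telescoped inequality closes. Everything else is bookkeeping, the only quantitative input being the comparison $H \le \tfrac14 T_1$ baked into the choice of $T_1$.
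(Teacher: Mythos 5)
Your proof is correct and is essentially the paper's own argument: the paper telescopes $G$ against the per-step inequality $D_{t+1}^\epsilon \leq D_t^\epsilon + 2\ind_{\bxi_t^\epsilon} - 1 + \ind_{\xi_t^\epsilon}\ind_{\{D_t^\epsilon=0\}}$ (obtained by combining \eqref{eq:dxi} and \eqref{eq:dxic}), which is exactly your three-way step classification written in indicator algebra, and then uses $D_1^\epsilon \leq \log_2 K \leq T_1/4$ just as you do. The only difference is presentational (combinatorial bookkeeping of step types versus summing indicator identities), so there is nothing to add.
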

\begin{proof}
Combining~\eqref{eq:dxic} and~\eqref{eq:dxi} from Lemma~\ref{lem:allit} and Lemma~\ref{lem:dxic} respectively we have
\begin{align*}
D_{t+1}^\epsilon &\leq D_t^\epsilon+\ind_{\bxi_t^\epsilon}-\ind_{\xi_t^\epsilon}\ind_{\{D_t^\epsilon>0\}}\\
&= D_t^\epsilon+2\ind_{\bxi_t^\epsilon}-1+\ind_{\xi_t^\epsilon}\ind_{\{D_t^\epsilon=0\}}\,.
\end{align*}
Using this inequality with~\eqref{eq:gxic} we obtain 
\begin{align*}
G_{T_1+1}^\epsilon &= \sum_{t=1}^{T_1} G_{t+1}^\epsilon-G_{t}^\epsilon \\
&\geq \sum_{t=1}^{T_1}\ind_{\xi_t^\epsilon}\ind_{\{D_t^\epsilon=0\}}\\
&\geq \sum_{t=1}^{T_1} \big(D_{t+1}^\epsilon-D_t^\epsilon -2\ind_{\xi_t^\epsilon}+1\big)\\
&\geq T_1-D_1^\epsilon-2\sum_{t=1}^{T_1}\ind_{\xi_{t,\epsilon}}\\
&\geq \frac{3}{4} T_1 -2\sum_{t=1}^{T_1}\ind_{\xi_{t,\epsilon}}\,,
\end{align*}
where we used in the last inequality the fact that $D_1\leq \log_2(K)$ and that $\log_2(K) \leq T_1/4$ by definition of $T_1$ .
\end{proof}

\paragraph{Step 5: First high probability bound on the regret}
\begin{lemma}\label{lem:fisrt_high_prb_regret_bound}
For all $\epsilon\geq \epsilon_0$, following the execution of algorithm \stb,
\begin{equation}
    \label{eq:fisrt_high_prb_regret_bound}
    \PP( R_T > 3 \epsilon) \leq e^{-3\log(K)/4}\,.
\end{equation}
\end{lemma}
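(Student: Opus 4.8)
The plan is to reduce the event $\{R_T > 3\epsilon\}$ to the event that the random walk run by \explore suffers too many \emph{unfavourable} rounds, and then to control that failure count by concentration. Fix $\epsilon \geq \epsilon_0$. First I would record the one-step probabilistic input. Since each arm of $v_t$ is sampled $T_2$ times and $\epsilon \geq \epsilon_0 = \sqrt{2\sigma^2\log(48)/T_2}$, a sub-Gaussian (Hoeffding) tail together with a union bound over the three arms of $v_t$ gives, conditionally on the history $\mathcal{F}_{t-1}$ up to round $t$, $\PP(\bxi_t^\epsilon \mid \mathcal{F}_{t-1}) \leq 6\exp(-T_2\epsilon^2/(2\sigma^2)) \leq 6\exp(-\log 48) = 1/8$. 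Hence $\sum_{t=1}^{T_1}\ind_{\bxi_t^\epsilon}$ is dominated by a sum of conditionally $\mathrm{Ber}(1/8)$ variables, which is exactly the object I will need to bound in the tail.

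Next I would turn a large regret into a deficit of good arms. On the favourable event $\xi_t^\epsilon$ the only arms \choose appends are $3\epsilon$-good (this is the first half of Lemma~\ref{lem:dxic}), so every bad arm appearing in the final list is created on some round with $\bxi_t^\epsilon$; as at most one arm is appended per round, the number of bad arms is at most $\sum_t \ind_{\bxi_t^\epsilon}$. By monotonicity of $(\mu_k)_k$ the set $W_{3\epsilon}$ is a consecutive block of indices, so (arguing on the sorted list as in Proposition~\ref{prop:Z}) the median returned by \choose is $3\epsilon$-good as soon as the good arms form a strict majority. Combining the bad-arm bound with the lower bound $G_{T_1+1}^{\epsilon} \geq \tfrac34 T_1 - 2\sum_t\ind_{\bxi_t^\epsilon}$ of Lemma~\ref{lem:lb_on_Gt} shows that if the median is \emph{not} $3\epsilon$-good then the number of good arms is at most the number of bad arms, which forces $\sum_{t=1}^{T_1}\ind_{\bxi_t^\epsilon} \geq c\,T_1$ for a universal constant $c$ (the crude accounting above yields $c=1/4$).

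Before concluding I must account for the fact that \stb runs \choose with the data-dependent parameter $\hepsilon$, not with $2\epsilon$. Here I would use the $D_t^\epsilon = 0$ case of Lemma~\ref{lem:dxic}: on the favourable rounds the walk reaches and stays at an $\epsilon$-good node and appends an arm, so $S_{T_1+1}^{2\epsilon}$ is non-empty and, by minimality of $\hepsilon$, $\hepsilon \leq 2\epsilon$; one then checks that the majority-of-good-arms property is inherited by the list actually used by \stb. This matching of the fixed-level bookkeeping ($\xi_t^\epsilon$, $S_t^{2\epsilon}$, $G_t^\epsilon$) to the algorithm's adaptive choice is the step I expect to be the most delicate.

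Finally I would apply a Chernoff bound to $\sum_{t=1}^{T_1}\ind_{\bxi_t^\epsilon}$. Using the conditional mean $1/8$ and the iterated-conditioning estimate $\EE[\exp(\lambda\sum_t\ind_{\bxi_t^\epsilon})] \leq (1 - \tfrac18 + \tfrac18 e^{\lambda})^{T_1}$, optimising over $\lambda>0$ gives $\PP(\sum_t\ind_{\bxi_t^\epsilon} \geq cT_1) \leq e^{-c'T_1}$ with $c'$ the associated relative-entropy rate; since $T_1 = \lceil 6\log K\rceil \geq 6\log K$ this is at most $e^{-3\log(K)/4}$ provided $6c' \geq 3/4$, i.e.\ $c' \geq 1/8$. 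Together with the reduction this yields $\PP(R_T > 3\epsilon) \leq e^{-3\log(K)/4}$. I expect the genuine obstacle to be quantitative rather than structural: the rate $c'$ must clear the threshold $1/8$, which needs the failure fraction $c$ to be pushed strictly above $1/4$ (the entropy rate at $c=1/4$, $p=1/8$ is short of $1/8$). Closing this gap is precisely where the finer structure of \choose and the adaptive $\hepsilon$, rather than the crude $G\le B$ count, must be exploited, and where the numerical constants in the definitions of $T_1$ and $\epsilon_0$ are calibrated.
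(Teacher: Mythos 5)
Your skeleton is the same as the paper's: the one-step bound $\PP(\bxi_t^\epsilon\mid\cF_{t-1})\le 1/8$, the lower bound of Lemma~\ref{lem:lb_on_Gt}, the median-of-a-segment argument, and concentration of the failure count $B:=\sum_{t\le T_1}\ind_{\bxi_t^\epsilon}$. Your bad-arm accounting (a bad arm can only be appended on an unfavourable round, hence there are at most $B$ of them) is actually sharper than the paper's, which only uses $|S_{T_1+1}^{\hepsilon}|\le T_1$ and therefore needs $G_{T_1+1}\ge T_1/2$, i.e.\ $B\le T_1/8$, where you need only $B< T_1/4$. But your treatment of the adaptive $\hepsilon$ has a genuine hole. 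From non-emptiness of $S_{T_1+1}^{2\epsilon}$ and minimality you get $\hepsilon\le 2\epsilon$, hence morally $S^{\hepsilon}_{T_1+1}\subset S^{2\epsilon}_{T_1+1}$; but this inclusion only \emph{upper-bounds the bad arms} in the list actually used, and gives no lower bound on the \emph{good} arms in $S^{\hepsilon}_{T_1+1}$, which is what the median argument needs. If $\hepsilon\ll2\epsilon$, the list $S^{\hepsilon}_{T_1+1}$ can be a short list consisting mostly of bad arms whose empirical means happen to sit very close to $\tau$ (added on unfavourable rounds), so the majority property is not ``inherited.'' The paper sidesteps this entirely: it proves the bound only at the single level $\epsilon=\epsilon_0$, where $\hepsilon\ge2\epsilon_0$ holds by construction and yields the \emph{reverse} inclusion $S^{2\epsilon_0}_{T_1+1}\subset S^{\hepsilon}_{T_1+1}$ --- the direction that lower-bounds the good count --- and then gets every $\epsilon\ge\epsilon_0$ for free from $\PP(R_T>3\epsilon)\le\PP(R_T>3\epsilon_0)$. (Your route can be repaired by observing that $G_{T_1+1}^{\epsilon_0}>0$ forces $S^{2\epsilon_0}_{T_1+1}\neq\emptyset$ and hence, by minimality, $\hepsilon=2\epsilon_0$ and $S^{\hepsilon}_{T_1+1}=S^{2\epsilon_0}_{T_1+1}$; but some such step must be made explicit.)

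The quantitative shortfall you flag at the end is also real, so as written your argument proves only $\PP(R_T>3\epsilon)\lesssim K^{-6\kl(1/4,1/8)}\approx K^{-0.35}$, not $K^{-3/4}$; the proposal therefore does not establish the lemma. However, your diagnosis of where the fix lies (``finer structure of \choose and the adaptive $\hepsilon$'') is not where the paper goes: the paper exploits no finer structure. It uses the cruder accounting above, works on the event $\{B\le T_1/8\}$, and invokes Azuma--Hoeffding (Lemma~\ref{lem:proba_xi_epsilon_zero}) --- and, as written, that step has the same calibration problem you identified, since Azuma centred at the conditional means only controls deviations of $B$ above $\sum_t\PP(\bxi_t^{\epsilon_0}\mid\cF_{t-1})+T_1/4\le 3T_1/8$, and no concentration inequality can force $B\le T_1/8$ with high probability when the conditional failure probabilities are only known to be $\le1/8$. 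The genuine resolution is to drive the one-step failure probability strictly below $1/8$ by enlarging the constant inside the logarithm defining $\epsilon_0$ (or enlarging $T_1$): once, say, $\PP(\bxi_t\mid\cF_{t-1})\le1/24$, your own, sharper accounting closes, because $\kl(1/4,1/24)>1/8$. In short: correct skeleton, two concrete gaps ($\hepsilon$-bookkeeping and constants), and the constants issue is a defect of the calibration of $\epsilon_0$ and $T_1$ rather than something extra hiding in \choose.
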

Before proving Lemma~\ref{lem:fisrt_high_prb_regret_bound} we need to show that the number of times a favorable events $\xi_t^{\epsilon_0}$ occurs is not to small with high probability. Precisely in the following lemma we upper bound the probability of the event
\[
\xi^{\epsilon_0} = \left\{ \sum_{t=1}^{T_1} \ind_{\bxi_t^{\epsilon_0}} \leq \frac{T_1}{8} \right\}\,.
\]
\begin{lemma}\label{lem:proba_xi_epsilon_zero}
For any execution of algorithm \explore and subsequent execution of \choose with parameter $\epsilon_0$,
\[
\PP( \bxi^{\epsilon_0} ) \leq e^{-3 \log(K)/4}\,.
\]
\end{lemma}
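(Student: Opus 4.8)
The plan is to exploit the fact that at each step \explore resamples the three arms of the current node $v_t$ with $T_2$ fresh observations, so that each unfavourable event $\bxi_t^{\epsilon_0}$ has a small \emph{conditional} probability, and then to turn this into a tail bound on their total number by a union bound over subsets of time steps that respects the sequential dependence.

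First I would fix the natural filtration $(\cF_t)_t$ generated by all samples collected up to the end of step $t$, and note that the node $v_t$ --- hence the identity of the three arms sampled at step $t$ --- is $\cF_{t-1}$-measurable, while the $T_2$ samples drawn at step $t$ are independent of $\cF_{t-1}$. Consequently, for each $k\in\{l,m,r\}$, Hoeffding's inequality applied to the empirical mean $\hmu_{k,t}$ of $T_2$ samples, together with the choice $\epsilon_0=\sqrt{2\log(48)/T_2}$, bounds the conditional probability that a single arm is badly estimated by $2\exp(-2T_2\epsilon_0^2)=2\cdot 48^{-4}$. A union bound over the three arms then gives $\PP(\bxi_t^{\epsilon_0}\mid \cF_{t-1})\le q$ with $q:=6\cdot 48^{-4}$, uniformly over the realisation of $\cF_{t-1}$.

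The delicate point is that the indicators $\ind_{\bxi_t^{\epsilon_0}}$ are \emph{not} independent, since the node visited at step $t$ is determined by the past. To avoid an independence-based inequality I would argue directly: for any fixed set $A\subseteq\{1,\dots,T_1\}$, iterated conditioning (the tower rule, peeling off the largest index of $A$ one at a time and applying the conditional bound $q$ at each stage) yields $\PP\big(\bigcap_{t\in A}\bxi_t^{\epsilon_0}\big)\le q^{|A|}$. Since the event $\bxi^{\epsilon_0}=\{\sum_{t=1}^{T_1}\ind_{\bxi_t^{\epsilon_0}}>T_1/8\}$ forces at least $m:=\lceil T_1/8\rceil$ of the indicators to equal one simultaneously, a union bound over all such subsets gives
\[
\PP(\bxi^{\epsilon_0})\le \binom{T_1}{m}q^{m}\le (8e\,q)^{m},
\]
where the last step uses $\binom{T_1}{m}\le (eT_1/m)^m\le(8e)^m$, valid because $m\ge T_1/8$.

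It then remains to check the calibration. Here $8eq=e\cdot 48^{-3}$, so the exponent is strongly negative, and since $m\ge T_1/8\ge \tfrac34\log K$ (as $T_1=\lceil 6\log K\rceil\ge 6\log K$) the quantity $(8eq)^m$ is at most $\exp(-\tfrac34\log K)$ with a large margin, which is precisely $e^{-3\log(K)/4}$. I expect the only genuinely subtle part to be the dependence handling in the subset argument; everything else is the arithmetic around the constant $\log(48)$, which is chosen in the definition of $\epsilon_0$ exactly so that the per-step failure probability $q$ is small enough for this tail bound to close and match the exponent appearing in Lemma~\ref{lem:fisrt_high_prb_regret_bound}.
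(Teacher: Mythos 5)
Your proof is correct in the main-text setting of $[0,1]$-bounded rewards, and its second half takes a genuinely different route from the paper's. The first halves coincide: the paper also conditions on $\cF_{t-1}$, uses that $v_t$ is $\cF_{t-1}$-measurable while the $T_2$ samples at step $t$ are fresh, and applies Hoeffding plus a union bound over the three arms to get $\PP(\bxi_t^{\epsilon_0}\mid\cF_{t-1})\le q$ uniformly (the paper, under the sub-Gaussian normalisation $\epsilon_0=\sqrt{2\sigma^2\log(48)/T_2}$ of Section~\ref{sunGaus}, gets $q\le 1/8$; your bounded-variable Hoeffding with the main-text $\epsilon_0$ gives the far smaller $q=6\cdot 48^{-4}$). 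The divergence is in how the count $\sum_t\ind_{\bxi_t^{\epsilon_0}}$ is controlled. The paper centres the indicators and applies Azuma--Hoeffding to the martingale $\sum_t[\ind_{\bxi_t^{\epsilon_0}}-\PP(\bxi_t^{\epsilon_0}\mid\cF_{t-1})]$, bounding the probability that it exceeds $T_1/4$ by $e^{-T_1/8}\le e^{-3\log(K)/4}$; you instead prove $\PP(\bigcap_{t\in A}\bxi_t^{\epsilon_0})\le q^{|A|}$ by iterated conditioning and take a union bound over subsets of size $m=\lceil T_1/8\rceil$, i.e.\ a multiplicative-Chernoff / binomial-domination bound $\binom{T_1}{m}q^m\le(8eq)^m$. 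Your route is not just different, it is the one that actually closes the claimed exponent: since the paper's $q$ can be as large as $1/8$, the centring term $\sum_t\PP(\bxi_t^{\epsilon_0}\mid\cF_{t-1})$ may itself be as large as $T_1/8$, so the event $\bxi^{\epsilon_0}=\{\sum_t\ind_{\bxi_t^{\epsilon_0}}>T_1/8\}$ only forces the martingale to be positive, not $\ge T_1/4$, and the paper's final implication does not hold as written; moreover, even with $q$ arbitrarily small, plain Azuma at deviation $T_1/8$ yields only exponent $2(T_1/8)^2/T_1=T_1/32$, i.e.\ $K^{-3/16}$, short of $K^{-3/4}$. Exploiting the smallness of $q$ multiplicatively, as you do, is exactly what is needed.

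One caveat: your argument is constant-sensitive in a way worth flagging. It requires $8eq\le e^{-1}$, i.e.\ $q\le 1/(8e^2)$, which holds comfortably for $q=6\cdot 48^{-4}$ but fails under the paper's sub-Gaussian calibration of $\epsilon_0$, where Hoeffding yields only $q\le 1/8$ and $(8eq)^m=e^m$ is vacuous. Since the appendix states and proves this lemma under the sub-Gaussian assumption, covering that case with your method requires enlarging the numerical constant $48$ in the definition of $\epsilon_0$ to any $c$ with $6/c\le 1/(8e^2)$ (e.g.\ $c\ge 48e^2$); nothing else in your argument changes, and the downstream results are affected only in absolute constants.
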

\begin{proof}
Let $\cF_t$ be the information available at and including time $t$. Thanks to the Hoeffding inequality and the choice of $T_2$, we have for all $k\in \{l,m,r\}$,
\begin{equation*} 
\mathbb{P}\left( \left|\hat{\mu}_{k,t} - \mu_{v_t(k)}\right| \geq  \epsilon_0 |\cF_{t-1} \right) \leq 2\exp\!\left(-\frac{T_2\epsilon_0^2}{2\sigma^2}\right) \leq \frac{1}{24}\;,\label{eq:proba} 
\end{equation*} 
hence by a union bound $\PP(\bxi_t^{\epsilon_0}|\cF_{t-1})\leq 1/ 8$. Then the Azuma-Hoeffding inequality applied to the martingale 
\[\sum_{t=1}^{T_1} \left[\ind_{\bxi_t^{\epsilon_0}} - \PP(\bxi_t^{\epsilon_0}|\cF_{t-1})\right]\,,\] 
with respect to the filtration $(\cF_t)_{t\leq T_1}$ allows us to conclude
\begin{equation}\label{eq:hoef}
\mathbb{P}\left( \sum_{t=1}^{T_1} \left[\ind_{\bxi_t^{\epsilon_0}} - \PP(\bxi_t^{\epsilon_0}|\cF_{t-1}) \right]\geq  \frac{T_1}{4} \right) \leq e^{-2 T_1 /16} \leq e^{-3\log(K)/4}\;,
\end{equation} 
where we used that $T_1 = \lceil 6 \log(K) \rceil$.
\end{proof}
We are now ready to prove Lemma~\ref{lem:fisrt_high_prb_regret_bound}.
\begin{proof}[Proof of Lemma~\ref{lem:fisrt_high_prb_regret_bound}] We first prove it for $\epsilon = \epsilon_0$. Thanks to Lemma~\ref{lem:lb_on_Gt} on the event $\xi^{\epsilon_0}$ we have 
\[
G_{T_1+1}^{\epsilon_0} \geq \frac{3}{4} T_1 - 2 \sum_{t=1}^{T_1}\ind_{\bxi_t^{\epsilon_0}} \geq \frac{T_1}{2}\,.
\]
But thanks to the choice of $\hepsilon \geq 2\epsilon_0$ we know that 
\[
S_{T_1+1}^{2\epsilon_0} \subset S_{T_1+1}^{\hepsilon}\,.
\]
Thus there is more than the half of the arms of $S_{T_1+1}^{\hepsilon}$ in $W_{3\epsilon_0}$, since this list is at most of size $T_1$. In particular this implies that $\hk = \median(S_{T_1+1}^{\hepsilon})\in W_{3\epsilon_0}$. Indeed $W_{3\epsilon_0}$ is a segment in $[K]$, see~$\eqref{eq:Z}$. Therefore, on the event $\xi^{\epsilon_0}$ we have 
\[
R_T \leq 3\epsilon_0.
\]
Lemma~\ref{lem:proba_xi_epsilon_zero} allows us to conclude, for $\epsilon\geq \epsilon_0$,
\[
\PP( R_T > 3\epsilon ) \leq \PP( R_T > 3\epsilon_0) \leq e^{-3\log(K)/4}\,.
\]
\end{proof}

\paragraph{Step 6: Second high probability bound on the regret}
\begin{lemma}\label{lem:second_high_prb_regret_bound}
For all $\epsilon\geq \epsilon_0$, following the execution of algorithm \stb,
\begin{equation}
    \label{eq:second_high_prb_regret_bound}
    \PP( R_T > 3 \epsilon) \leq  72 \log(K) \exp\!\left(-\frac{T\epsilon^2}{36 \sigma^2 \log(K)}\right)\,.
\end{equation}
\end{lemma}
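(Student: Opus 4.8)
The plan is to prove the bound by controlling a \emph{single} event, namely the event that all favorable events $\xi_t^\epsilon$, $t\le T_1$, occur simultaneously, and showing that on this event the regret is at most $3\epsilon$. This is the natural counterpart of Lemma~\ref{lem:fisrt_high_prb_regret_bound}: there one could only guarantee that a fraction $3/4$ of the steps are favorable and had to invoke the counting/majority argument through $G_t^\epsilon$, whereas here --- because we aim at an exponential-in-$\epsilon$ decay rather than a scale-free bound --- we can afford to demand that \emph{every} step be favorable, and the argument becomes a clean ``noiseless'' one. Concretely, I would first establish the inclusion
\[
\{R_T > 3\epsilon\}\ \subseteq\ \bigcup_{t=1}^{T_1}\bxi_t^\epsilon ,
\]
and then bound the right-hand side by a union bound. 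The form of the target RHS (a $\log K$ prefactor and a $T/\log K$ term in the exponent) is exactly what a union bound over the $T_1\asymp\log K$ steps of a single Hoeffding event produces, which is what drives this choice.

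For the inclusion I would argue that on $\bigcap_{t\le T_1}\xi_t^\epsilon$ the random walk \explore reproduces the noiseless binary search at resolution $\epsilon$. By Lemma~\ref{lem:dxic}, on each favorable step $D_{t+1}^\epsilon\le\max(D_t^\epsilon-1,0)$, and since $D_1^\epsilon\le\log_2(K)\le T_1/4$ the walk reaches an $\epsilon$-good node of $Z^\epsilon$ (i.e.\ a step with $D_t^\epsilon=0$, see~\eqref{eq:D0}) within the horizon. Being at such a node with accurate estimates forces some sampled arm to satisfy $|\hmu_{k,t}-\tau|\le 2\epsilon$, which makes the candidate list nonempty already at scale $2\epsilon$ and therefore pins down $\hepsilon\le 2\epsilon$. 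Finally, every arm that \choose appends to $S^{\hepsilon}_{T_1+1}$ is appended at a favorable time with $|\hmu_{k,t}-\tau|\le\hepsilon\le 2\epsilon$ (or through the adjacent-bracketing clause), so that under $\xi_t^\epsilon$ its mean obeys $|\mu_{v_t(k)}-\tau|\le 3\epsilon$, i.e.\ the arm lies in $W_{3\epsilon}$. Since the entire list then consists of $3\epsilon$-good arms, its median $\hk$ is $3\epsilon$-good and $R_T\le 3\epsilon$.

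The probabilistic step is immediate: by Hoeffding for $\sigma^2$-sub-Gaussian variables and a union bound over $k\in\{l,m,r\}$, $\PP(\bxi_t^\epsilon)\le 6\exp(-T_2\epsilon^2/(2\sigma^2))$ for each $t$, so a union bound over $t\le T_1$ gives
\[
\PP(R_T>3\epsilon)\ \le\ \sum_{t=1}^{T_1}\PP(\bxi_t^\epsilon)\ \le\ 6T_1\exp\!\Big(-\frac{T_2\epsilon^2}{2\sigma^2}\Big).
\]
Substituting $T_1=\lceil 6\log K\rceil$ and $T_2=\lfloor T/(3T_1)\rfloor$ and using $T>6\log(K)$ together with $K\ge 3$ (so $\log K\ge 1$) bounds $6T_1$ by $72\log K$ and $T_2/2$ below by $T/(36\log K)$ up to the floor/ceiling corrections; this purely arithmetic bookkeeping yields the stated constants (and when $T_2=0$ the bound is vacuous since the prefactor exceeds $1$).

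The main obstacle is the middle paragraph, specifically the interplay with the data-dependent threshold $\hepsilon$: one must check simultaneously that the all-favorable event forces $\hepsilon\le 2\epsilon$ and that, at this resolution, no ``bad'' arm can enter the list $S^{\hepsilon}_{T_1+1}$ --- including the corner case where $\hepsilon=2\epsilon_0$ is triggered by the adjacent-bracketing clause, where one has to trace through the exact conditions of \choose and of the definition of $\hepsilon$ rather than only the magnitude bound, using that the adjacent arms then straddle $\tau$. This case analysis runs parallel to the proofs of Lemma~\ref{lem:dxic} and Lemma~\ref{lem:fisrt_high_prb_regret_bound}; by contrast, the union bound and the constant chasing are routine.
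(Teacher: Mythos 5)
Your proposal is correct and follows essentially the same route as the paper's proof: the paper likewise works on the all-favorable event $\bigcap_{t\le T_1}\xi_t^\epsilon$, deduces (via Lemma~\ref{lem:lb_on_Gt}, which is just the summed form of the descent argument you invoke from Lemma~\ref{lem:dxic}) that $S^{2\epsilon}_{T_1+1}\neq\emptyset$ and consists only of arms in $W_{3\epsilon}$, concludes $\hepsilon\le 2\epsilon$ hence $S^{\hepsilon}_{T_1+1}\subset S^{2\epsilon}_{T_1+1}$ and $R_T\le 3\epsilon$, and finishes with Hoeffding plus a union bound over the $T_1$ steps. The only cosmetic difference is that you unpack the descent dynamics of Lemma~\ref{lem:dxic} directly rather than citing the aggregated counting bound, and you phrase the union bound unconditionally where the paper uses conditional probabilities with respect to $(\cF_t)_t$; both are equivalent here.
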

\begin{proof}
We consider the event where all the favorable events $\xi_t^\epsilon$ occur,
\[
\xi^\epsilon_a := \bigcap_{t=1}^{T_1} \xi_t^\epsilon\,.
\]
On this event $\xi^\epsilon_a$ thanks to Lemma~\ref{lem:lb_on_Gt} we have
\begin{align*}
G_{T_1+1}^{\epsilon} &\geq \frac{3}{4} T_1 - 2 \sum_{t=1}^{T_1}\ind_{\bxi_t^\epsilon}\\
&=  \frac{3}{4} T_1\,,
\end{align*}
hence $S_{T_1+1}^{2\epsilon}\neq \emptyset$ is not empty. Furthermore following the same arguments of the beginning of the proof of Lemma~\ref{lem:dxic} all arms in the list $S_{T_1+1}^{2\epsilon}\neq \emptyset$ are also in $W_{3\epsilon}$.
Then noting that by construction 
\[
\hepsilon = \inf_{\epsilon' \geq 2\epsilon_0:\ S_{T_1+1}^{\epsilon'} \neq \emptyset} \epsilon'
\,,\]
we get $\hepsilon\leq 2\epsilon$ therefore $ S_{T_1+1}^{\hepsilon} \subset S_{T_1+1}^{2\epsilon}$. Thanks to the remarks above we know that $\hk \in W_{3\epsilon}$ thus on $\xi^\epsilon_a$,
\[
R_T \leq 3\epsilon\,.
\]
The Hoeffding inequality in combination with a union bound allows us to conclude,
\begin{align}
    \PP(\bxi^\epsilon_a) \leq \sum_{t=1}^{T_1} \EE\big[\PP(\bxi_t^\epsilon|\cF_{t-1})\big] &\leq 72 \log(K) \exp\!\left(-\frac{T_2\epsilon^2}{2\sigma^2}\right)\\
    &\leq  72 \log(K) \exp\!\left(-\frac{T\epsilon^2}{36\sigma^2 \log(K)}\right)\,.
\end{align}

\end{proof}

\paragraph{Conclusion} The proof of Proposition~\ref{prop:mo_hiprob_up} is straightforward combining Lemma~\ref{lem:fisrt_high_prb_regret_bound} and Lemma~\ref{lem:second_high_prb_regret_bound}. Thus we obtain for all $\epsilon \geq 3\epsilon_0$,
\[
\PP(R_T \geq \epsilon ) \leq \min\left( \exp\!\left(-\frac{3\log(K)}{4}\right) , 72 \log(K) \exp\!\left(-\frac{T\epsilon^2}{324 \sigma^2 \log(K)}\right) \right)\,.
\]
We can integrate the high probability upper bound obtained in Proposition~\ref{prop:mo_hiprob_up} to prove Corollary~\ref{cor:mo_exp_up}.
\begin{proof}[Proof of Corollary~\ref{cor:mo_exp_up}.] Thanks to Proposition~\ref{prop:mo_hiprob_up}, for $\epsilon_1  = \log(72\log(K))\sqrt{324 \sigma^2 \log(K) / T}$, we have
\begin{align*}
\EE[R_T] &\leq  \epsilon_0 + (\epsilon_1 -\epsilon_0) e^{-3\log(K)/4} +\int_{\epsilon=\epsilon_1}^{+\infty} 72\log(K) \exp\!\left(-\frac{T\epsilon^2}{324 \sigma^2 \log(K)}\right)\\
&\leq \sqrt{\frac{36\sigma^2 \log(48)\log(K)}{T}}+ \left( \underbrace{\frac{\log(72\log(K))}{K^{3/4}}}_{\leq 3}  +\frac{\sqrt{\pi}}{2}\right)\sqrt{\frac{324 \sigma^2 \log(K)}{T}}\\
&\leq 80 \sqrt{\frac{\sigma^2 \log(K)}{T}}\,.
\end{align*}
\end{proof}
Setting $\sigma = 1$ Theorem \ref{thm:mon_minmax} follows directly from Proposition \ref{prop:mon_lo} and Corollary \ref{cor:mo_exp_up}.

\section{Proof of Theorem \ref{thm:un_minmax}}\label{app:un_minimax} 
To prove Theorem \ref{thm:un_minmax} we first demonstrate, in Proposition \ref{prop:unimod_lo}, a lower bound on the expected regret of any strategy on the \utbp. We will then show, with Proposition \ref{prop:exp_uni_hi}, that the \UTB achieves said lower bound. The proof of Theorem \ref{thm:un_minmax} will then follow directly. For all proofs during this section we make the assumption that arms are distributed as \texorpdfstring{$\sigma^2$}{TEXT}-sub-Gaussian with $\sigma = 1$. Also, we explain in the proof of the lower bound how it is possible to straightforwardly adapt the proof to the case where the distributions are supported in $[0,1]$.

\begin{proposition}\label{prop:unimod_lo}
For any $T\geq 1$ and any strategy $\pi$, there exists an unimodal bandit problem $\unu \in \Bu$, such that
\[
\bR^{\pi,\unu}_T \geq \frac{1}{8}\sqrt{\frac{K}{T}}\,.
\]
\end{proposition}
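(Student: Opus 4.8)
The plan is to prove the lower bound by a reduction to fixed-budget best-arm identification, using a family of ``single-spike'' unimodal instances. Without loss of generality take $\tau = \epsilon/2$ for a parameter $0 < \epsilon < 1$ to be tuned. For each $j \in [K]$ define the Gaussian instance $\unu^j = (\Ng(\mu_1^j,\sigma^2),\ldots,\Ng(\mu_K^j,\sigma^2))$ with $\mu_l^j = \epsilon\,\ind_{\{l=j\}}$; that is, arm $j$ has mean $\epsilon$ and every other arm has mean $0$. Each such sequence is non-decreasing up to index $j$ and non-increasing afterwards, so $\unu^j \in \Bu$ with mode $k^* = j$. The true classification is $Q^j$ with $Q^j_l = 1 \iff l = j$, and every gap equals $\Delta_l^j = \epsilon/2$, so any misclassification on $\unu^j$ costs regret at least $\epsilon/2$. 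Hence $\bR_T^{\unu^j,\pi} \ge \tfrac{\epsilon}{2}\,\PP_j(\hQ \neq Q^j) = \tfrac{\epsilon}{2}(1 - p_j)$, where $p_j := \PP_j(\hQ = Q^j)$ and $\PP_j$ is the law under $\unu^j$. (For the $[0,1]$-supported version one replaces the Gaussians by $\mathcal B(1/2 + \epsilon\ind_{\{l=j\}})$ and sets $\tau = 1/2 + \epsilon/2$, changing only constants.)

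Lower bounding the maximum by the average, it suffices to show $\tfrac1K\sum_j p_j$ is bounded away from $1$:
$$\max_{j\in[K]} \bR_T^{\unu^j,\pi} \ge \frac1K\sum_{j=1}^K \bR_T^{\unu^j,\pi} \ge \frac{\epsilon}{2}\left(1 - \frac1K\sum_{j=1}^K p_j\right).$$
To control the $p_j$ I introduce the reference instance $\unu^0$ in which every arm has mean $0$ (with law $\PP_0$ and expectation $\EE_0$), which differs from $\unu^j$ only in the distribution of arm $j$. Writing $q_j := \PP_0(\hQ = Q^j)$, the vectors $Q^1,\ldots,Q^K$ are pairwise distinct, so the events $\{\hQ = Q^j\}$ are disjoint and $\sum_{j} q_j \le 1$.

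The heart of the argument is a change of measure in the correct direction. By the data-processing inequality applied to the Bernoulli statistic $\ind_{\{\hQ = Q^j\}}$, together with the bandit divergence decomposition,
$$\kl(q_j, p_j) \le \KL(\PP_0 \,\|\, \PP_j) = \EE_0[N_j(T)]\,\KL(\Ng(0,\sigma^2)\,\|\,\Ng(\epsilon,\sigma^2)) = \EE_0[N_j(T)]\frac{\epsilon^2}{2\sigma^2}.$$
Pinsker's inequality $\kl(x,y) \ge 2(x-y)^2$ then gives $p_j \le q_j + \tfrac{\epsilon}{2\sigma}\sqrt{\EE_0[N_j(T)]}$. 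Summing over $j$, using $\sum_j q_j \le 1$, the concavity of the square root, and --- crucially --- the identity $\sum_{j=1}^K \EE_0[N_j(T)] = T$ (the total number of pulls is $T$ under the single measure $\PP_0$), yields
$$\frac1K\sum_{j=1}^K p_j \le \frac1K + \frac{\epsilon}{2\sigma}\sqrt{\frac1K\sum_{j=1}^K \EE_0[N_j(T)]} = \frac1K + \frac{\epsilon}{2\sigma}\sqrt{\frac{T}{K}}.$$
Choosing $\epsilon$ of order $\sigma\sqrt{K/T}$ makes the right-hand side bounded away from $1$, and substituting back into the displayed regret bound gives $\max_j \bR_T^{\unu^j,\pi} \gtrsim \sigma\sqrt{K/T}$; with $\sigma = 1$ and the optimal tuning of $\epsilon$ this produces the constant $1/8$ (the residual small-$K$ case is absorbed by the slack in this constant).

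The step I expect to be the main obstacle --- and the one that genuinely separates the $\sqrt{K/T}$ rate here from the $\sqrt{\log K/T}$ rate of the monotone case --- is orienting the change of measure correctly. If one instead bounds $\kl(p_j,q_j) \le \KL(\PP_j\,\|\,\PP_0) = \EE_j[N_j(T)]\epsilon^2/(2\sigma^2)$, the occupation counts are taken under the $K$ distinct measures $\PP_j$ and only satisfy $\sum_j \EE_j[N_j(T)] \le KT$, which washes out the gain and yields a vacuous (or merely $\sqrt{1/T}$-type) bound. Referencing against the single null instance $\unu^0$, so that $(\EE_0[N_j(T)])_j$ is a genuine allocation of the budget $T$ across the $K$ candidate spikes, is exactly what converts ``$K$ arms, budget $T$'' into the factor $\sqrt{T/K}$. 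This is the standard obstruction in fixed-budget best-arm lower bounds, and the spike construction reduces \utbp identification to precisely that problem.
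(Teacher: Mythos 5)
Your proposal is correct and follows essentially the same route as the paper's proof: the same single-spike instances $\unu^j$ with the all-zero reference instance $\unu^0$, the same reduction of the max to the average, and the same crucially oriented change of measure giving $\EE_0[N_j(T)]$ with $\sum_j \EE_0[N_j(T)] = T$, followed by Pinsker's inequality and the tuning $\epsilon \asymp \sigma\sqrt{K/T}$. The only (cosmetic) difference is that you apply Pinsker term-by-term and then average via concavity of the square root, whereas the paper first combines everything into a single $\kl$ between averaged quantities using convexity of the relative entropy and applies Pinsker once; both versions also share the same looseness in the final numerical constant.
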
 

\begin{proof}
We will proceed as in the proof of Proposition~\ref{prop:unstr_lo}. Fix some positive real number $0 < \epsilon < 1$. Without loss of generality we can assume that $\tau = \epsilon/2$. And consider the family of Gaussian bandit problems $\unu^k$ indexed by $k\in\{0,\ldots,K\}$, such that for all $k \in\{0,\ldots,K\}$, $l \in [K]$,
\[
\nu_l^k=
    \begin{cases}
      \Ng(\epsilon, \sigma^2) & \text{if}\ k = l \\
      \Ng(0, \sigma^2) & \text{else}
    \end{cases}\,.
\]
Note that if we wish to consider distributions in $[0,1]$ we can consider instead $\tau = 1/2+\epsilon/2$
\[
\nu_l^k=
    \begin{cases}
      \mathcal B(1/2+\epsilon) & \text{if}\ k = l \\
      \mathcal B(1/2) & \text{else}
    \end{cases}\,,
\]
up to minor alterations of the constants, and to considering $\tau = 1/2$.

Note that all these bandit problems belong to the set of unimodal bandit problems, $\unu^k\in\Bu$. Following the same steps as in the proof of Proposition~\ref{prop:unstr_lo} one can lower bound the maximum of the expected regrets over all the bandit problems introduced above,
\[
\max_{k\in[K]}\bR_T^{\unu^k,\pi} \geq \frac{\epsilon}{2} \left( 1- \frac{1}{K}\sum_{k=1}^K \EE_k \ind_{\{\hQ=Q^k\}}\right)\,, 
\]
where we denote by $\EE^k$ the expectation and by $Q^k$ the true classification in the problem $\unu^k$. Thanks to the contraction and the convexity of the relative entropy we have 
\begin{align*}
\kl\!\left(\underbrace{\frac{1}{K}\sum_{k=1}^K \EE_0 \ind_{\{\hQ=Q^k\}}}_{\leq 1/K} ,\frac{1}{K}\sum_{k=1}^K \EE_{k} \ind_{\{\hQ=Q^k\}}\right) &\leq \frac{1}{K}\sum_{k=1}^K\EE_{0}\!\big[N_k(T)\big] \frac{\epsilon^2}{2 \sigma^2}\\
&\leq \frac{T \epsilon^2}{2 K \sigma^2}\,.
\end{align*}
Then using the Pinsker inequality $\kl(x,y)\geq 2(x-y)^2$, we obtain
\begin{equation*}
\frac{1}{K}\sum_{k=1}^K \EE_{k} \ind_{\{\hQ=Q^k\}} \leq \frac{1}{K} + \sqrt{\frac{ T\epsilon^2}{4\sigma^2K}}\,.
\end{equation*}
Hence combining the last three inequalities we get 
\[
\max_{k\in[K]}\bR_T^{\unu^k,\pi} \geq \frac{\epsilon}{2} \left( \frac{1}{2}- \sqrt{ \frac{T \epsilon^2}{4\sigma^2 K}}\right)\,.
\]
Choosing $\epsilon = \sqrt{4\sigma^2  K/ T}$ allows us to conclude.
\end{proof}

\begin{proposition}
\label{prop:exp_uni_hi}
There exists a universal constant $c_{\mathrm{uni}}>0$ such that for any unimodal bandit problem $\unu \in \Bu$, \UTB satisfies \vspace{-1mm}
$$\bR_T^{\ctb, \unu} \leq  c_{\mathrm{uni}}\sqrt{\frac{ K}{n}}\;.\vspace{-3mm}$$

\end{proposition}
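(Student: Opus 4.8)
The plan is to bound the expected regret by $c\big(\sqrt{K/T}+\mathbb{E}[\mu^*-\mu_{\hat m}]\big)$ and then invoke the \SR guarantee $\mathbb{E}[\mu^*-\mu_{\hat m}]\le 2c_{\SR}\sqrt{K/T}$ (recall \SR runs with budget $\lfloor T/4\rfloor$). Throughout I write $\epsilon:=\sqrt{K/T}$, condition on the output $\hat m$ of \SR, and treat $D:=\mu^*-\mu_{\hat m}\ge 0$ as fixed. I would introduce a good event $\mathcal G$ on which (i) the empirical means of the five arms $\hat m,\hat l,\hat r,\hat l-1,\hat r+1$ computed from the final $\lfloor T/10\rfloor$ samples are all within $\epsilon$ of their true means, and (ii) whenever the left sub-problem on $\{1,\dots,\hat m\}$ lies in $\Bs^{*,\epsilon}$, and respectively the reflected right sub-problem on $\{\hat m,\dots,K\}$, the call to \stb, respectively \dstb, has regret below $\epsilon$. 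By Hoeffding over the five arms and by Proposition~\ref{prop:mo_hiprob_up} applied with budget $\lfloor T/8\rfloor$ (whose internal scale satisfies $\epsilon_0\lesssim\sqrt{\log K/T}\le\epsilon$, so that the proposition is applicable at level $\epsilon$), the probability $\PP(\mathcal G^c)$ is exponentially small and of smaller order than $\epsilon$ in the relevant range of $T$.

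The first key step is a structural lemma saying \emph{when} the sub-problems are monotone at resolution $\epsilon$. I claim that if $\mu_{\hat m}\ge\tau+3\epsilon$ then both sub-problems lie in the relevant class $\Bs^{*,\epsilon}$. If $\hat m\le k^*$ this is immediate, as $\{1,\dots,\hat m\}$ is genuinely increasing; if $\hat m>k^*$, unimodality forces every $j\in[k^*,\hat m]$ to satisfy $\mu_j\in[\mu_{\hat m},\mu^*]\subseteq[\tau+3\epsilon,\mu^*]$, so the clipped--signed sequence $\min(|\mu_j-\tau|,\epsilon)\sign(\mu_j-\tau)$ equals the constant $+\epsilon$ on the decreasing stretch and is increasing on the stretch up to $k^*$, hence non-decreasing overall. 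The symmetric statement holds on the right. This is where unimodality is used most essentially: it lets \stb and \dstb run on sets that are only \emph{effectively} monotone at scale $\epsilon$.

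Next I would split on the value of $\mu_{\hat m}$, working on $\mathcal G$. In the comfortable regime $\mu_{\hat m}\ge\tau+3\epsilon$, the structural lemma and (ii) give that $\hat l,\hat r$ misclassify only arms of gap below $\epsilon$ on their sub-problems; since $\hat l\le\hat m\le\hat r$ by construction, I would verify that the \UTB acceptance test passes, either through the crossing clause $\hmu_{\hat l-1}<\tau<\hmu_{\hat l}$ when the crossing is sharp, or through $|\hmu_{\hat l}-\tau|\le\hmu_{\hat m}-\tau$ when $\mu_{\hat l}$ is within $\epsilon$ of $\tau$, using $\hmu_{\hat m}-\tau\ge 2\epsilon$, and conclude $R_T\le c\epsilon$ (wrongly included arms controlled by the unimodal minimum, wrongly excluded above-threshold arms controlled by the \stb guarantee). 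In the marginal regime $\mu_{\hat m}<\tau+3\epsilon$ one has $\mu^*<\tau+3\epsilon+D$, so by unimodality \emph{every} above-threshold arm has gap below $3\epsilon+D$; thus returning $\emptyset$ already costs at most $3\epsilon+D$. If instead the test passes, the two acceptance clauses combined with (i) force $\mu_{\hat l},\mu_{\hat r}>\tau-c(\epsilon+D)$, and since a unimodal sequence attains its minimum over $[\hat l,\hat r]$ at an endpoint, every included arm obeys $\mu_k\ge\min(\mu_{\hat l},\mu_{\hat r})>\tau-c(\epsilon+D)$; hence any wrongly included arm has gap below $c(\epsilon+D)$ and again $R_T\le c(\epsilon+D)$.

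Finally I would assemble the bound: on $\mathcal G$ we have shown $R_T\le c(\epsilon+D)$ in both regimes, while off $\mathcal G$ we use the crude bound $R_T\le 1$. Taking conditional expectation given $\hat m$ and then averaging over $\hat m$ gives $\mathbb{E}[R_T]\le c\,\epsilon+c\,\mathbb{E}[D]+\PP(\mathcal G^c)\le c_{\mathrm{uni}}\sqrt{K/T}$, using the \SR expectation bound and the negligibility of $\PP(\mathcal G^c)$. I expect the main obstacle to be the analysis of the acceptance test in the marginal regime: one must simultaneously show that the test never wrongly rejects when $\mu^*$ is safely above $\tau$ (otherwise a large gap is forfeited) and never wrongly accepts an interval containing a deep below-threshold arm, and both directions hinge on comparing $\hmu_{\hat l},\hmu_{\hat r}$ against $\hmu_{\hat m}$ and propagating through unimodality, with the parameter $D$ threaded carefully through the constants.
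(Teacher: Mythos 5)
Your proposal follows the paper's broad outline (split the regret into subproblem regret plus the \SR error $\hat\Delta=\mu^*-\mu_{\hat m}$, and use the ``effective monotonicity'' of the two subproblems when $\mu_{\hat m}$ is safely above $\tau$ --- your structural lemma is exactly the observation the paper uses to invoke Corollary~\ref{cor:hireg}), but it has two genuine gaps. The first is structural and fatal to your architecture: the claim that $\PP(\mathcal G^c)$ is ``of smaller order than $\epsilon$'' is false, in every range of $T$. Once you set the level $\epsilon=\sqrt{K/T}$, all of your failure probabilities depend on $K$ alone, because $T\epsilon^2=K$: Hoeffding for the five arms sampled $\lfloor T/10\rfloor$ times at deviation $\epsilon$ gives order $\exp(-K/5)$, and Proposition~\ref{prop:mo_hiprob_up} at level $\epsilon$ with budget $\lfloor T/8\rfloor$ gives $\min\big(\exp(-\tfrac{3}{4}\log K),\,72\log K\exp(-cK/\log K)\big)$; in particular the $K^{-3/4}$ term never decays in $T$. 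Your assembled bound $\EE[R_T]\le c(\epsilon+\EE[D])+\PP(\mathcal G^c)\cdot 1$ therefore carries an additive term that is constant in $T$, which cannot be below $c_{\mathrm{uni}}\sqrt{K/T}$ for fixed $K$ and $T\to\infty$; no single choice of deviation level repairs this, since any level large enough to make the failure probability $o(\sqrt{K/T})$ exceeds $\sqrt{K/T}$ itself. This is precisely why the paper never pairs a fixed-level good event with the crude bound $R_T\le 1$: it proves the capped decomposition $R_T\le (R^{(l)}\lor R^{(r)})\land(\mu_{\hat m}-\tau)_+ + 2\hat\epsilon+\hat\Delta$ and then bounds $\EE\big[(R^{(l)}\lor R^{(r)})\land(\mu_{\hat m}-\tau)_+\big]$ by \emph{integrating} the high-probability bound over all levels from $\epsilon_0$ up to $(\mu_{\hat m}-\tau)_+$ --- this multi-scale use is exactly why Proposition~\ref{prop:mo_hiprob_up} holds simultaneously for all $\epsilon\ge\epsilon_0$ --- so every failure mode is charged against a quantity of order $1/\sqrt{T}$, never against a constant.

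The second gap is in your ``comfortable regime.'' Your event (ii) only records that \stb's regret is at most $\epsilon$; on an effectively increasing subproblem this merely forces $\mu_{\hat l-1}\le\tau+\epsilon$ and $\mu_{\hat l}\ge\tau-\epsilon$, and does \emph{not} imply your dichotomy ``sharp crossing or $|\mu_{\hat l}-\tau|\le\epsilon$.'' Concretely, take a unimodal instance whose left subproblem has means $(\dots,\tau-1,\;\tau+\epsilon/2,\;\tau+1/2,\dots,\tau+1/2)$ with $\mu_{\hat m}=\mu^*=\tau+1/2$, so $D=0$: outputting $\hat l=$ the jump arm has regret $\epsilon/2\le\epsilon$ and is thus permitted by (ii), yet with probability bounded below by a universal constant the noise keeps $\hmu_{\hat l-1}\ge\tau$ (killing the crossing clause) and makes $|\hmu_{\hat l}-\tau|>\hmu_{\hat m}-\tau$ (killing the comparison clause, since $\mu_{\hat l}=\mu_{\hat m}$); then the test rejects, $\hat S=\emptyset$, and $R_T=\mu^*-\tau=1/2$, which is not $O(\epsilon+D)$. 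What your argument actually requires is the stronger structural guarantee of Corollary~\ref{cor:hireg}: the returned arm satisfies $|\mu_{\hat k}-\tau|\le\epsilon$ \emph{or} sits at a genuine crossing $\mu_{\hat k-1}+\epsilon\le\tau\le\mu_{\hat k}-\epsilon$, which rules out the jump-arm output and is what the paper invokes. This second gap is repairable by redefining (ii) through that corollary; the first one is not repairable within your good-event-plus-crude-bound scheme and forces the paper's route of a pointwise capped decomposition followed by tail integration.
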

\begin{proof}

{\bf Step 1: Definitions} Write
$$\hat \Delta = \mu^* - \mu_{\hat m},$$
and
$$\hat \epsilon = |\hmu_{\hat l} - \mu_{\hat l}| \lor |\hmu_{\hat r} - \mu_{\hat r}|\lor |\hmu_{\hat m} - \mu_{\hat m}| \lor |\hmu_{\hat r+1} - \mu_{\hat r+1}| \lor |\hmu_{\hat l-1} - \mu_{\hat l-1}|.$$
and we write $R^{(l)}$ for the regret of \stb on $\{1,\ldots, \hat m\}$ when played by algorithm \UTB, and $R^{(r)}$ for the regret of \dstb on $\{ \hat m, \ldots, K\}$ when played by algorithm \UTB.
Let us also write $R_T = R_T^{\UTB,\nu}$ for the regret associated to the outputted set $\hat S$.
$$\mathcal E^{(l)} = \{|\hmu_{\hat l}-\tau| \leq \hmu_{\hat m} - \tau \} \cup \{\hmu_{\hat l-1}\leq \tau \leq\hmu_{\hat l}\},$$
and define similarly $\mathcal E^{(r)}$ replacing $l$ by $r$. Define
$$\mathcal E = \{\mathcal E^{(l)} \cap \mathcal E^{(r)}\}.$$

\textbf{ Step 2: Bound on the regret on the events}
Assume without loss of generality that $R^{(l)} \geq R^{(r)}$. By definition of the algorithm this implies under this condition that
$$R_T = R^{(l)}  \ind_{\{\mathcal E\}} + (\mu^* - \tau)_+ \mathbf \ind_{\{\mathcal E^C\}},$$
which implies directly
\begin{equation}\label{eq:RTdec}
R_T \leq R^{(l)} \mathbf \ind_{\{\mathcal E   \}} + (\mu_{\hat m} - \tau)_+ \mathbf 1\{\mathcal E^C  \} + \hat \Delta .    
\end{equation}
Note that
$$\mathcal E \subset \{|\mu_{\hat l}-\tau| \leq \mu_{\hat m} - \tau  + 2 \hat \epsilon\} \cup \{\mu_{\hat l-1} - \hat \epsilon  \leq \tau \leq \mu_{\hat l} + \hat \epsilon \}.$$
And so since $R^{(l)} \leq |\mu_{\hat l}-\tau|$, we have 
\begin{equation}\label{eq:RTvombi1}
     R^{(l)} \mathbf \ind_{\{\mathcal E   \}} \leq R^{(l)}\land (\mu_{\hat m} - \tau)_+ + 2\hat \epsilon.
\end{equation}
Note also that on $\mathcal E^C$ and under our condition $R^{(l)} \geq R^{(r)}$, we have that
$$\mathcal E^C \cap \{R^{(l)} \geq R^{(r)}\} \subset \{|\mu_{\hat l}-\tau|  \geq \mu_{\hat m} - \tau - 2 \hat \epsilon\}.$$
And on $\mathcal E^C \cap \{R^{(l)} \geq R^{(r)}\}$, we have that $R^{(l)} \geq (\mu_{\hat l}-\tau)_+ - 2\hat \epsilon$, which leads to under our assumption $R^{(l)} \geq R^{(r)}$
\begin{equation}\label{eq:RTvombi2}
(\mu_{\hat m} - \tau)_+ \mathbf \ind_{\{\mathcal E^C  \}}  \leq R^{(l)} \land (\mu_{\hat m} - \tau)_+ + 2\hat \epsilon .
\end{equation}
So we have combining \eqref{eq:RTvombi1} and \eqref{eq:RTvombi2} all cases in \eqref{eq:RTdec} that if $R^{(l)} \geq R^{(r)}$
$$R_T \leq (R^{(l)} ) \land (\mu_{\hat m} - \tau)_+  + 2\hat \epsilon + \hat \Delta.$$
Considering similarly the case $R^{(r)} \geq R^{(l)}$ gives
$$R_T \leq (R^{(l)} \lor R^{(r)} ) \land (\mu_{\hat m} - \tau)_+  + 2\hat \epsilon + \hat \Delta.$$

{\bf Step 3: Integration of the regret}
Consider $\epsilon_0 = 4 c_{\SR}\sqrt{\frac{K}{n}}$. Consider the event where $(\mu_{\hat m} - \tau)_+ = \tilde \epsilon \geq \epsilon_0$. On this event, and since the sequence of arms's means is unimodal, \stb satisfies the assumptions of Corollary~\ref{cor:hireg} for $\tilde \epsilon$ and a set of arms $\{1, \ldots, \hat m\}$, and integrating over the tail probability between $\epsilon_0$ and $\tilde \epsilon$ - conditional  to  we know that there exists an absolute constant $C>0$ such that
$$\mathbb E[R^{(l)} \land \tilde \epsilon| (\mu_{\hat m} - \tau)_+ = \tilde \epsilon] \leq C \sqrt{\frac{\log K + 1}{n}}.$$
Similarly
$$\mathbb E[R^{(r)} \land \tilde \epsilon| (\mu_{\hat m} - \tau)_+ = \tilde \epsilon] \leq C \sqrt{\frac{\log K + 1}{n}}.$$
And so
$$\mathbb E\Big[(R^{(l)} \lor R^{(r)} ) \land (\mu_{\hat m} - \tau)_+ \Big] \leq C \sqrt{\frac{\log K + 1}{n}}.$$
combining this with the sub-Gaussian properties of the means which give that
$$\mathbb E \hat \epsilon \leq c \sqrt{\frac{1}{T}},$$
where $c>0$ is some absolute constant, and with the minimax optimality of \SR which gives
$$\mathbb E \hat \Delta \leq  4c_{\SR}\sqrt{\frac{K}{T}},$$
this provides the result.

\end{proof}

\section{Proof of Theorem \ref{thm:con_minmax}}\label{app:con_minimax}
For the proof of Proposition \ref{prop:exp_con_hi} we make the assumption that the distribution of all arms is bounded on the $[0,1]$ interval. In the case of the lower bound we consider \texorpdfstring{$\sigma^2$}{TEXT}-sub-Gaussian distributions. Also, we explain in the proof of the lower bound how it is possible to straightforwardly adapt the proof to the case where the distributions are supported in $[0,1]$.

\begin{proposition}\label{prop:lb_cvx}
 For any $T\geq 1$, $K\geq e^{12}$ and any strategy $\pi$, there exists a structured bandit problem $\unu \in \Bc$, such that
\[
\bR^{\pi,\unu}_T \geq \frac{1}{8}\sqrt{\frac{\sigma^2 \max\big(2,\log(\log(K)-1)\big)}{8T}}\,.
\]
\end{proposition}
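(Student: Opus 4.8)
The plan is to mirror the lower-bound argument of Proposition~\ref{prop:mon_lo}, but to replace the $K$ monotone alternatives by a family of only $n\asymp\log K$ \emph{concave} alternatives, so that the refined Pinsker inequality produces $\log n\asymp\loglog K$ instead of $\log K$. Concretely, I would fix $\epsilon\in(0,1)$, put the threshold at $\tau=\epsilon/2$ without loss of generality, and build problems $\unu^0,\dots,\unu^n\in\Bc$ with $n:=\floor{\log_2 K}-1$, indexed by the geometric \emph{scale} at which the concave mean sequence crosses $\tau$. The arms would be grouped into blocks $B_j=\{2^j,\dots,2^{j+1}-1\}$ of geometrically growing size, and in problem $\unu^k$ the mean sequence would cross $\tau$ inside block $B_k$. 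This makes the induced classifications $Q^k$ pairwise distinct, and confusing scale $k$ with a neighbouring scale forces a misclassified arm of gap $\gtrsim\epsilon$, hence regret $\gtrsim\epsilon$.

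Given such a family, the information-theoretic part is essentially identical to Propositions~\ref{prop:unstr_lo} and~\ref{prop:mon_lo}. First I would lower bound the minimax regret by the Bayesian regret under the uniform prior on $\{\unu^k\}_{k\le n}$, obtaining $\max_{k}\bR_T^{\unu^k,\pi}\ge \tfrac{\epsilon}{2}\big(1-\tfrac1n\sum_k \EE_k\ind_{\{\hQ=Q^k\}}\big)$. Then, using the contraction and convexity of the relative entropy~\citep{gerchinovitz2017fano} against the reference problem $\unu^0$, I would bound the relevant $\kl$ by $\tfrac1n\sum_k(\text{pulls on the arms where }\unu^k\text{ and }\unu^0\text{ differ})\cdot\tfrac{\epsilon^2}{2\sigma^2}$. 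The crucial design requirement is that consecutive problems differ \emph{only} on a nested set of arms and by at most $\epsilon$ per arm, exactly as the prefixes do in the monotone construction; this yields the bound $\le T\epsilon^2/(2\sigma^2)$ with no spurious factor of $n$. Applying the refined Pinsker inequality $\kl(x,y)\ge (x-y)^2\max(2,\log(1/y))$ with $y\le 1/n$ gives $\tfrac1n\sum_k\EE_k\ind_{\{\hQ=Q^k\}}\le \tfrac1n+\sqrt{T\epsilon^2/(2\sigma^2\max(2,\log n))}$, and optimising $\epsilon\asymp\sqrt{\sigma^2\max(2,\log n)/T}$ produces the claimed $\sqrt{\loglog K/T}$ rate; the hypothesis $K\ge e^{12}$ guarantees that $n$ is large enough that the term $\log(\log K-1)$ dominates the constant $2$ and that enough scales exist to run the construction.

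The main obstacle lies entirely in the concave construction, and this is precisely where the $\log$-versus-$\loglog$ distinction is born. Unlike the monotone step functions of Proposition~\ref{prop:mon_lo}, a concave sequence cannot rise and then plateau, so a crossing with per-step gap $\epsilon$ forces the sequence to move by $\Theta(\epsilon)$ per arm on one side; this is exactly what prevents more than $\Theta(\log K)$ mutually-confusable crossing scales from coexisting while keeping all pairwise differences bounded by $\epsilon$. I would therefore choose the local slope in block $B_j$ to be of order $\epsilon/2^{j}$, which is decreasing in $j$ and hence compatible with concavity, so that across each geometric block the sequence moves by $\Theta(\epsilon)$, the crossing is localised to a single block, and two problems differ by $O(\epsilon)$ only on the nested block-union where they disagree. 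The remaining work is to verify (i) $\unu^k\in\Bc$, i.e.\ the second differences are non-positive at every arm, \emph{including the block junctions}, (ii) that the differing arms are nested in $k$ so the $\kl$ sum telescopes to $T\epsilon^2/(2\sigma^2)$, and (iii) that confusing adjacent scales costs regret $\ge \epsilon/2$. For the bounded-in-$[0,1]$ variant I would, as in Proposition~\ref{prop:mon_lo}, replace the Gaussians by Bernoullis of means $1/2+O(\epsilon)$ and absorb the change into the constants. Checking concavity at the junctions of the geometrically-coarsening grid, where the slope must decrease monotonically, is the step I expect to require the most care.
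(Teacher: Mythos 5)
Your high-level architecture is exactly the paper's: a family of $\Theta(\log K)$ concave alternatives whose crossings of $\tau$ sit at the geometrically spaced positions $2^j$, a reduction to a Bayesian/Fano bound, KL contraction and convexity against a reference problem $\unu^0$, the refined Pinsker inequality applied with $1/y\geq n\asymp \log K$ (which is where $\loglog K$ is born), and the choice $\epsilon\asymp\sqrt{\sigma^2\loglog K/T}$; the hypothesis $K\geq e^{12}$ plays the same role in both. The genuine gap is in the construction itself, and it is not the block-junction detail you flagged as delicate — it is a structural obstruction to the whole per-block-slope design. You put all alternatives on a \emph{common} block structure with slope of order $\epsilon/2^j$ inside $B_j$, ``so that across each geometric block the sequence moves by $\Theta(\epsilon)$'' (and indeed your concavity argument, slopes ``decreasing in $j$'', only makes sense for such a shared profile). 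But then two alternatives cannot stay $O(\epsilon)$-close: if problem $k$ crosses $\tau$ inside $B_k$, it sits within $O(\epsilon)$ of $\tau$ at the left edge of $B_k$; whereas problem $k'$ with $k'>k$ must still rise by $\Theta(\epsilon)$ in each of the blocks $B_k,\dots,B_{k'-1}$ before crossing, so at that same arm it sits at $\tau-\Theta\big((k'-k)\epsilon\big)$. Hence the pairwise sup-norm differences are $\Theta(|k-k'|\epsilon)$, not $O(\epsilon)$, your ``crucial design requirement'' fails, $\KL(\unu^{k'},\unu^0)=\Theta(Tk'^2\epsilon^2/\sigma^2)$ rather than $O(T\epsilon^2/\sigma^2)$, and the Fano step then only allows $\epsilon\lesssim\sqrt{\sigma^2\loglog K/T}/\log K$ — the claimed rate is lost by a $\log K$ factor. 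More generally, any concave sequence that moves by $\Theta(\epsilon)$ across \emph{every} block cannot remain in an $O(\epsilon)$-band around $\tau$, yet remaining in such a band is precisely what a bounded per-problem KL requires.

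The fix is to give each alternative one single slope instead of per-block slopes, which is what the paper does: with $\tau=\epsilon$, set $\mu^l_k=\min(k/2^l,\,2)\,\epsilon$, i.e.\ problem $l$ is linear with slope $\epsilon/2^l$ from arm $1$ until it hits the \emph{common cap} $2\epsilon$ at arm $2^{l+1}$, and is flat afterwards. Concavity is then immediate (a single slope decrease to zero, no junctions to check), every profile lives in $[0,2\epsilon]$ so all per-arm differences are at most $2\epsilon$ and the per-problem KL is $O(T\epsilon^2/\sigma^2)$, and the geometric spacing gives, in problem $l$, a gap of at least $\epsilon/2$ at every arm $2^{l'}$ with $l'\notin\{l,l+1\}$. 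One further correction to your requirement (iii): adjacent scales are \emph{not} $\epsilon/2$-separated, since the crossing arm $2^l$ has mean exactly $\tau$ and hence gap $0$; as in the paper one must run the Fano count on a scale estimator $\hl$ and tolerate $\hl\in\{l,l+1\}$, which only costs a factor $2$ in the counting term. With this construction substituted for yours, the information-theoretic part of your proposal goes through essentially verbatim and yields the stated bound.
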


\begin{proof}
We will proceed as in the previous proofs but with a different alternative set. Fix some positive real number $\epsilon$ in $[0,1]$ and without loss of generality set $\tau = \epsilon$. And consider the family of Gaussian bandit problems $\unu^l$ indexed by $l\in\{0,\ldots,L:=\lfloor \log_2(K) \rfloor\}$  defined by $\unu^l = \Ng(\mu^l,1)$ with 
\[
\mu_k^l=
    \begin{cases}
      \frac{k}{k_l}\epsilon &\text{ if } k\leq 2 k_l:=  2^{l+1}\\
      2\epsilon &\text{else}\,.
      
    \end{cases}\,.
\]
Note that if we want to consider distributions supported in $[0,1]$ we can consider $\unu^l_k = \mathcal B(1/2+\mu^l_k)$ and $\tau =1/2+\epsilon$ instead of the Gaussian distributions, up to minor adaptations of the constants, and to considering $\tau = 1/2+\epsilon$.

Note that all these bandit problems belong to the set of convex bandit problems, $\unu^k\in\Bc$. We will lower bound the maximum of the expected regrets over all the bandit problems introduced above,
\[
\max_{l\in[L]}\bR_T^{\unu^l,\pi} =\max_{l\in[L]} \EE_l\!\left[\max_{k\in[K]}\Delta_k^l \ind_{\{\hQ_k\neq Q_k^l\}}\right]\,, 
\]
where we denote by $\EE^l$ the expectation and by $Q^l$ the true classification in the problem $\unu^l$. In particular we have $Q^l = [-1,\ldots,-1,1,\ldots,1]$ where the first one is at position $k_l$. 
Let $\hl = \argmin\{j\in[L]:\ \forall i \geq j,\ \hQ_{k_i} = 1\}$ be an estimate for the index of the problem with the convention $\hl=L$ if the set is empty. Then we have 
\[
\max_{k\in[K]}\Delta_k^l \ind_{\{\tQ_k \neq Q_k^l\}} \geq \frac{\epsilon}{2} \ind_{\big\{\hl \notin\{l,l+1\}\big\}}\,.
\]
Indeed if $\hl<l$ then we know that $\hQ_{k_{\hl}} = 1\neq -1 = Q_{k_{\hl}}$, thus we obtain
\[
\max_{k\in[K]}\Delta_k^l \ind_{\{\hQ_k \neq Q_k^l\}} \geq \Delta_{k_{\hl}}^l = \epsilon - \frac{k_{\hl}}{k_l}\epsilon \geq \frac{\epsilon}{2}\,.
\]
Else $\hl>l+1$, and similarly we get, because $\hQ_{k_{\hl-1}} = -1 $ and $\hl-1>l$ (or $\hQ_{k}=-1$ for some $k>\hl$ if we choose $\hl=L$ in the case where the set defining $\hl$ is empty),
\[
\max_{k\in[K]}\Delta_k^l \ind_{\{\hQ_k\neq Q_k^l\}} \geq \Delta_{k_{\hl-1}}^l =\min\!\left(\frac{k_{\hl-1}}{k_l}\epsilon - \epsilon ,\epsilon\right)\geq \frac{\epsilon}{2}\,.
\]
Using the previous inequality we obtain 
\[
\max_{l\in[L]}\bR_T^{\unu^l,\pi} \geq \frac{\epsilon}{2} \max_{l\in[L]}\EE_l [1-\ind_{\{\hl=l\}}- \ind_{\{\hl=l+1\}}]\geq  \frac{\epsilon}{2}\left(1 -\frac{2}{L}\sum_{l\in [L]} \EE_l\ind_{\{\hl = l\}} \right)\,.
\]
We can conclude as previously. Thanks to the contraction and the convexity of the relative entropy we have 
\begin{align*}
\kl\!\left(\frac{1}{L}\sum_{l=1}^L \EE_{l} \ind_{\{\hl=l\}},\underbrace{\frac{1}{L}\sum_{l=1}^L \EE_0 \ind_{\{\hl=l\}}}_{\leq 1/L}\right) &\leq \frac{1}{L}\sum_{l=1}^L  \sum_{k=1}^K\EE_{l}\!\big[N_k(T)\big] \frac{\epsilon^2}{2 \sigma^2}\\
&\leq \frac{T \epsilon^2}{2 \sigma^2}\,.
\end{align*}
Then using a refined Pinsker inequality $\kl(x,y)\geq (x-y)^2 \max\big(2,\log(1/y)\big)$, we obtain
\begin{equation*}
\frac{1}{L}\sum_{l=1}^L \EE_{l} \ind_{\{\hl=l\}} \leq \frac{1}{L} + \sqrt{\frac{ T\epsilon^2}{2\sigma^2 \max\big(2,\log(L)\big)}}\,.
\end{equation*}
Hence combining the last three inequalities we get 
\[
\max_{l\in[L]}\bR_T^{\unu^l,\pi} \geq \frac{\epsilon}{2} \left(1- \frac{2}{L}- 2\sqrt{ \frac{T \epsilon^2}{2\sigma^2 \max\big(2,\log(L)\big)}}\right)\,.
\]
Choosing $\epsilon = \sqrt{\sigma^2 \max\big(2,\log(L)\big)/(8 T)}$ allows us to conclude.
\end{proof}

\begin{proposition}\label{prop:exp_con_hi}
There exists a universal constant $c_{\mathrm{conv}}>0$ such that for any convex bandit problem $\unu \in \Bc$, \CTB satisfies \vspace{-1mm}
$$\bR_T^{\ctb, \unu} \leq  c_{\mathrm{conv}}\sqrt{\frac{\log\log K \lor 1}{T}}\;.$$
\end{proposition}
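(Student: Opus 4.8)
The plan is to run an induction over the $M$ phases of \CTB, showing that on a suitable high-probability event the nested intervals $\{l_i,\ldots,r_i\}$ track the two points where the concave mean sequence crosses $\tau$, with per-phase target accuracy $\epsilon_i=(7/8)^i$, and then to convert this into a bound on the expected regret by integrating over the phase-wise failure probabilities $\delta_i=2^{i-M}$. First I would pin down the scale. Using $T_2^{(i)}\asymp\epsilon_i^{-2}\log\log K\cdot\log(1/\delta_i^2)$ and writing $r:=(8/7)^2$, the budget constraint $6\sum_{i\le M}T_2^{(i)}\le T$ gives $\sum_{i\le M}T_2^{(i)}\asymp\log\log K\cdot r^{M}$ (the geometric sum being dominated by its largest terms), hence $r^{M}\asymp T/\log\log K$ and the finest achievable accuracy is $\epsilon_M=(7/8)^M=r^{-M/2}\asymp\sqrt{\log\log K/T}$. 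This is exactly the target rate, so the whole proof reduces to showing that \CTB attains regret $O(\epsilon_M)$ on a good event whose complement contributes only $O(\epsilon_M)$ in expectation.

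For the concentration, at phase $i$ I would define a good event $\mathcal G_i$ requiring that (i) the five directly sampled arms $l_i-1,l_i,m_i,r_i,r_i+1$ have empirical means within $O(\epsilon_i)$ of their true means (Hoeffding, using the $T_2^{(i)}$ pulls), and (ii) the calls to \stb on $\mathcal S^{\log}_{l_i,r_i}$ and to \dstb on $-\mathcal S^{\log}_{-r_i,-l_i}$ each return an arm with monotone regret at most $\epsilon_i/4$ for the shifted threshold $\tau_i=\tau-\tfrac34\epsilon_i$. The crucial accounting is that each log-set contains at most $\lfloor\log_2 K\rfloor+O(1)$ arms, so Proposition~\ref{prop:mo_hiprob_up} applied with $N\le\log_2 K$ arms and budget $T_2^{(i)}$ produces the dependence $\sqrt{\log\log K/T_2^{(i)}}$; the calibration of $T_2^{(i)}$ makes this at most $\epsilon_i/4$ while forcing the failure probability down to $O(\delta_i)$ (up to $\log\log K$ prefactors that are absorbed by the geometric summation). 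This is precisely where $\log\log K$, rather than $\log K$, enters.

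The main obstacle is the inductive step that justifies restricting attention to the log-sets via concavity. I would prove a deterministic lemma stating that, for a concave sequence and an interval $\{l_i,\ldots,r_i\}$ bracketing the above-threshold region with $\mu_{l_i},\mu_{r_i}$ within $O(\epsilon_i)$ of $\tau$, the restriction of the problem to $\mathcal S^{\log}_{l_i,r_i}$ lies in the relaxed monotone class $\Bs^{*,\epsilon}$ of Proposition~\ref{prop:mo_hiprob_up} (once values are clipped at scale $\epsilon_i$ around $\tau_i$), so that \stb is applicable there; and that, because concavity forces the increments of the sequence to be monotone, the geometric spacing of the log-set resolves the left crossing finely enough that the arm it returns lies within $O(\epsilon_{i+1})$ of $\tau$ in mean value. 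Combining the two sides gives the invariant $\mu_{l_{i+1}},\mu_{r_{i+1}}=\tau\pm O(\epsilon_{i+1})$, closing the induction. Verifying the relaxed monotonicity — in particular that the concave sequence, after rising above $\tau+\epsilon_i$ on the left, stays above it throughout the left half-interval so that the clipped-signed sequence is genuinely increasing on the log-set — is the delicate geometric point, and is where the shift $\tau_i$ and the midpoint checks $\hmu_{m,i}\ge\tau+2\epsilon_i$ are used.

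Finally I would turn the invariant into the regret bound through the decision rule. On $\bigcap_i\mathcal G_i$ the directly sampled means certify, for every phase that succeeded, that $l_i\in\mathcal I_l$, $r_i\in\mathcal I_r$ and (when the peak is genuinely above threshold) $m_i\in\mathcal I_m$, so $\hat l,\hat r$ are selected from the finest successful scale and $\hat S=\{\hat l,\ldots,\hat r\}$ misclassifies only arms within $O(\epsilon_M)$ of $\tau$, giving $R_T=O(\epsilon_M)$. Letting $i^\star$ be the first phase whose good event fails, the decision rule falls back to the boundaries of phase $i^\star-1$, so $R_T=O(\epsilon_{i^\star-1})$ there; bounding $\EE[R_T]\le O(\epsilon_M)+\sum_i\epsilon_i\,\PP(\mathcal G_i^c)\lesssim\epsilon_M+\sum_i\epsilon_i\delta_i$ and noting $\sum_i\epsilon_i\delta_i=\sum_i(7/4)^i2^{-M}\asymp(7/8)^M=\epsilon_M$, together with the trivial bound $R_T\le 1$ on the negligible total-failure event of probability $O(\delta_0)=O(2^{-M})$, yields $\EE[R_T]=O(\sqrt{\log\log K/T})$, which is the claim.
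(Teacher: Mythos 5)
Your proposal follows the same architecture as the paper's proof: phase-wise refinement with \stb and \dstb run on the log-sets, a concavity lemma showing that the clipped-and-signed sequence is monotone there so that Proposition~\ref{prop:mo_hiprob_up} applies to a problem with at most $\log K$ arms (which is exactly where $\log\log K$ enters), a decision rule certified by the directly sampled arms, and the geometric summation $\sum_i \epsilon_i\delta_i \asymp \epsilon_M \asymp \sqrt{\log\log K/T}$. However, there is a genuine gap in your inductive step. The invariant you propose, $\mu_{l_{i+1}},\mu_{r_{i+1}} = \tau \pm O(\epsilon_{i+1})$, and the claim that the arm returned by \stb ``lies within $O(\epsilon_{i+1})$ of $\tau$ in mean value'', are false in general: nothing prevents a concave sequence from jumping across the threshold (increments $(0.9,\,0.1,\,\ldots)$ with $\tau=1/2$, say), in which case no arm is within $O(\epsilon_{i+1})$ of $\tau$ and \stb returns the \emph{bracketing} alternative of Corollary~\ref{cor:hireg}, namely $\mu_{\hk-1}+\epsilon \leq \tau \leq \mu_{\hk}-\epsilon$, rather than a near-threshold arm. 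The induction as you state it therefore cannot close. The paper's invariant $\xi_i^{(L)}$ is one-sided — $\mu_{l_i} \geq \tau - \epsilon_i$ together with the exclusion $\mu_k \leq \tau - \epsilon_i/2$ for all $k < l_i$ — and the bracketing output is handled by a separate argument (Proposition~\ref{prop:xiLR}, Case 1) combining concavity with the geometric spacing of $\mathcal S^{\log}_{l_i,r_i}$ to force the bracketing pair to be $(l_i, l_i+1)$; this step is absent from your sketch and is precisely where the log-grid geometry is needed beyond mere monotonicity.

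Two further holes. First, your induction presumes the peak stays well above the threshold; when $\mathcal E_i$ fails (i.e.\ $\mu_{m_i} < \tau_i + \epsilon_i/8$), there is no crossing to track and the invariant must switch to the ``all arms below $\tau - \tfrac18\epsilon_i$'' branch, which the paper maintains via $\xi_i^{(A)}$ (Lemma~\ref{lem:con_analyitc2}, Proposition~\ref{prop:xiA}) and which forces $\hat S = \emptyset$ at the decision stage (Proposition~\ref{prop:empty}); your parenthetical ``when the peak is genuinely above threshold'' does not cover this case. Second, your failure accounting $\EE[R_T] \lesssim \epsilon_M + \sum_i \epsilon_i \PP(\mathcal G_i^c)$ silently assumes that on the event ``first failure at phase $i^\star$'' the regret is $O(\epsilon_{i^\star-1})$. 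But the decision rule aggregates empirical means from \emph{all} phases, including post-failure ones, and with your own calibration $\delta_i = 2^{i-M}$ the fine-scale events of late phases cannot jointly hold with high probability ($\sum_i \delta_i = \Theta(1)$, and indeed $\delta_M = 1$, so $T_2^{(M)}$ gives essentially no concentration at scale $\epsilon_M$). The paper resolves this by demanding of each phase $i > I$ only concentration at the coarser scale $\epsilon_I$ (the events $\eta_i^I$ with deviation $\tfrac{1}{16}\epsilon_i \lor \epsilon_I$), which costs only an additional $(M-I)\delta_I$ and keeps the stratified bound $\PP\big[R_T \geq 4\epsilon_I\big] \lesssim (M-I+164)\,2^{I-M}$ summable. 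Both repairs follow the paper's route, but without them your argument does not go through.
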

Before going on to prove Proposition \ref{prop:exp_con_hi} we first show the following. 

\begin{lemma}\label{lem:con_analytic_1}
Consider $1 \leq p \leq q \leq K$, $\tilde \epsilon>0, \tilde \tau \in \mathbb R$. Consider any $1 \leq p <q \leq K$, such that, 
\begin{equation}\label{eq:middlarge}
    \mu_{\lfloor \frac{p+q}{2}\rfloor} \geq \tilde \tau + \frac{1}{8}\tilde \epsilon \;.
\end{equation}

Then
 \begin{equation*} 
\left( \min(|\mu_k - \tilde \tau|,\frac{1}{8}\tilde \epsilon)\sign(\mu_k - \tilde \tau)\right)_{k}\;,
\end{equation*} 
is monotonically increasing on $ [p:\lfloor\frac{p+q}{2}\rfloor]$ and monotonically decreasing on $[\lfloor\frac{p+q}{2}\rfloor: q]$.

\end{lemma}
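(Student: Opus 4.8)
The plan is to reduce the statement to two facts: that the truncation map is monotone (hence preserves order), and that a concave sequence is unimodal, so that the hypothesis~\eqref{eq:middlarge} forces the sequence to be ``pinned'' at the truncation ceiling exactly on the stretch where the naive monotonicity would otherwise fail. Write $c := \tfrac18\tilde\epsilon$, $m := \lfloor (p+q)/2\rfloor$, and $g_k := \min(|\mu_k - \tilde\tau|, c)\sign(\mu_k - \tilde\tau)$. First I would note that $g_k = \clip(\mu_k - \tilde\tau, -c, c) = \max(-c, \min(c, \mu_k - \tilde\tau))$, so $g_k$ is the image of $\mu_k$ under the non-decreasing map $x \mapsto \max(-c,\min(c, x - \tilde\tau))$. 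In particular $\mu_j \leq \mu_k$ implies $g_j \leq g_k$: the truncation can never reverse an inequality, so wherever $(\mu_k)_k$ is monotone, so is $(g_k)_k$ with the same sense.

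Next I would exploit concavity. By definition of $\Bc$ the first differences $d_k = \mu_{k+1} - \mu_k$ are non-increasing, so the set $\{k : d_k \geq 0\}$ is a prefix and there is a peak index $k^*$ with $(\mu_k)_k$ non-decreasing on $[1,k^*]$ and non-increasing on $[k^*, K]$; in particular $\mu_{k^*} = \max_k \mu_k \geq \mu_m \geq \tilde\tau + c$ by~\eqref{eq:middlarge}. To prove monotone increase on $[p,m]$ I would split on the location of $k^*$. If $k^* \geq m$, then $[p,m] \subseteq [p,k^*]$, so $(\mu_k)$ is non-decreasing there and the monotone-clipping observation gives that $(g_k)$ is non-decreasing. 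If $k^* < m$, then on $[k^*, m]$ the sequence is non-increasing, whence $\mu_k \geq \mu_m \geq \tilde\tau + c$ for every $k \in [k^*, m]$, so $g_k = c$ is constant on $[k^*, m]$; meanwhile on $[p, k^*]$ the sequence is non-decreasing, so $(g_k)$ is non-decreasing there and terminates at $g_{k^*} = c$, the maximal value. Concatenating the two stretches yields that $(g_k)$ is non-decreasing on all of $[p,m]$.

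The decreasing claim on $[m,q]$ is handled by the mirror-image argument: if $k^* \leq m$ then $[m,q] \subseteq [k^*,K]$ gives non-increase directly, while if $k^* > m$ the values on $[m,k^*]$ satisfy $\mu_k \geq \mu_m \geq \tilde\tau + c$ so $g_k = c$ is pinned, and $(g_k)$ then decreases on $[k^*, q]$ starting from $c$. I expect the only delicate point to be the case where the peak $k^*$ lands strictly inside the half-interval (i.e.\ $k^* < m$ for the left side, $k^* > m$ for the right): here the underlying $(\mu_k)$ genuinely fails to be monotone, and the whole argument hinges on verifying that~\eqref{eq:middlarge} together with unimodality keeps $\mu_k$ at or above $\tilde\tau + c$ throughout that stretch, so that truncation flattens it to the constant $c$ and no violation of monotonicity of $(g_k)$ occurs.
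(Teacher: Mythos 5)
Your proof is correct and follows essentially the same route as the paper's: concavity gives a unimodal sequence with peak $k^*$, a case split on the location of $k^*$ relative to $\lfloor (p+q)/2\rfloor$ handles the two regimes, and hypothesis~\eqref{eq:middlarge} pins the values at the truncation ceiling on any stretch where the raw sequence decreases before the midpoint (symmetrically for the right half). Your explicit formulation of the clipping as a monotone map and the treatment of both halves are slightly more detailed than the paper's argument, but the underlying decomposition and key observations are identical.
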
 

\begin{proof}
We just prove that the sequence is monotonically increasing on $ [p:\lfloor\frac{p+q}{2}\rfloor]$, the other case is proven similarly.

Since $(\mu_k)_{k\leq K}$ is concave, we know that there exists $k^*\in \{1,\ldots, K\}$ such that $(\mu_k)_{k\leq k^*}$ is increasing and $(\mu_k)_{k\geq k^*}$ is decreasing. 
\begin{itemize}
\item If $k^* \in [p,\lfloor\frac{p+q}{2}\rfloor]$, and since \eqref{eq:middlarge} holds, we have that $\forall k \in [k^*,\lfloor\frac{p+q}{2}\rfloor]$, $\mu_k - \tilde \tau \geq \tilde \epsilon/8$. This implies the result.
\item If $k^* \not\in [p:\lfloor\frac{p+q}{2}\rfloor]$, we have either (i) that $\mu_k$ is increasing on the interval which implies the result or (ii) that $\mu_k$ is decreasing on the interval. In case (ii), we know by \eqref{eq:middlarge} that $\forall k \in [p,\lfloor\frac{p+q}{2}\rfloor]$, $\mu_k - \tilde \tau \geq \tilde \epsilon/8$. This implies the result.
\end{itemize}

\end{proof}
\begin{lemma}\label{lem:con_analyitc2}
Let $\tilde \epsilon>0, \tilde \tau \in \mathbb R$. For any $1  \leq p \leq q \leq K$, such that, 
\begin{gather*}
\mu_{p} \land \mu_{q} \geq \tilde \tau -  \tilde \epsilon\;,\\
\mu_{\lfloor \frac{p+q}{2} \rfloor } \leq \tilde \tau - \frac{5}{8} \tilde \epsilon \;,
\end{gather*}
we have that, $\forall k \in \{p,\ldots,q\}$ that $\mu_k \leq \tilde \tau - \frac{1}{8}\epsilon$.
\end{lemma}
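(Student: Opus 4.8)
The plan is to exploit concavity in its chord form: a concave sequence satisfies, for any indices $a \le b \le c$, the inequality $\mu_b \ge \frac{c-b}{c-a}\mu_a + \frac{b-a}{c-a}\mu_c$. I would bound each $\mu_k$ for $k \in \{p,\ldots,q\}$ by placing the low midpoint $m := \lfloor (p+q)/2\rfloor$ as the interior point of a well chosen chord and then solving for the value at $k$. Since the hypotheses say $\mu_m$ is small while the relevant endpoint is large, the resulting inequality pushes $\mu_k$ down below $\tilde\tau - \frac18\tilde\epsilon$.

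First I would treat the left half $p \le k \le m$. Applying the chord inequality with $(a,b,c) = (k,m,q)$ and isolating $\mu_k$ — legitimate since the coefficient $\frac{q-m}{q-k}$ is positive — gives $\mu_k \le \frac{(q-k)\mu_m - (m-k)\mu_q}{q-m}$. Substituting $\mu_m \le \tilde\tau - \frac58\tilde\epsilon$ and $\mu_q \ge \tilde\tau - \tilde\epsilon$, which is valid because the coefficient of $\mu_m$ is nonnegative and that of $\mu_q$ is nonpositive, collapses the estimate to $\mu_k \le \tilde\tau + \frac{(m-k) - \frac58(q-k)}{q-m}\tilde\epsilon$. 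It then remains to verify the purely arithmetic bound $\frac{(m-k)-\frac58(q-k)}{q-m} \le -\frac18$; after clearing the positive denominator this reads $\frac78 m \le \frac38 k + \frac12 q$, whose left-hand expression is decreasing in $k$, so the worst case is $k=p$, where using $m \le (p+q)/2$ it reduces to $p \le q$.

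Symmetrically, for the right half $m \le k \le q$ I would apply the chord inequality with $(a,b,c) = (p,m,k)$, isolate $\mu_k$, and substitute $\mu_p \ge \tilde\tau - \tilde\epsilon$ together with $\mu_m \le \tilde\tau - \frac58\tilde\epsilon$; the condition to check becomes $\frac38 k + \frac12 p \le \frac78 m$, worst at $k=q$. The case $k=m$ is covered by either computation (the coefficient is $-\frac58 \le -\frac18$). Combining the two halves then yields $\mu_k \le \tilde\tau - \frac18\tilde\epsilon$ for every $k \in \{p,\ldots,q\}$, which is the claim.

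The main obstacle is the floor in the definition of $m$: when $p+q$ is odd the midpoint sits strictly to the left of center, which loosens the left-half inequality but tightens the right-half one, so the clean reduction to $p \le q$ does not hold verbatim on the right. I would absorb this off-by-one using the gap between the constants $\frac58$ and $\frac18$, i.e. a slack of $\frac12\tilde\epsilon$, which is comfortably enough provided the interval $\{p,\ldots,q\}$ is not too short — precisely the regime in which \CTB invokes the lemma. I expect the real care to lie in tracking this discretization error through the index arithmetic, rather than in the concavity step, which is routine.
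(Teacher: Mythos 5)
Your computation is the paper's argument run forwards rather than by contradiction: the paper assumes some $k$ to the left of the midpoint has $\mu_k>\tilde\tau-\frac18\tilde\epsilon$, deduces from the three-point concavity inequality through $(k,m,q)$ (together with the fact that the floored midpoint lies weakly left of center, so $q-m\ge m-k$) that $\mu_m-\mu_q\ge\mu_k-\mu_m>\frac12\tilde\epsilon$, and contradicts $\mu_q\ge\tilde\tau-\tilde\epsilon$; you solve the same chord inequality for $\mu_k$ and track the constants explicitly. Where your proof completes — the whole left half, and the right half when $p+q$ is even — it is correct.

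The obstacle you flag at the end is, however, not a discretization nuisance that finer index arithmetic will dissolve: the lemma as stated is false in exactly that regime, and the paper's own proof conceals the same hole inside its ``without loss of generality assume $k<\frac{p+q}{2}$''. That reduction is not a WLOG: $\lfloor(p+q)/2\rfloor$ is off-center when $p+q$ is odd, and the mirrored argument for $k>\frac{p+q}{2}$ needs $m-p\ge k-m$, which fails at $k=q$. Concretely, take $K=4$ and the affine (hence concave) means $\mu_j=\tilde\tau+(j-2)\frac58\tilde\epsilon$, with $p=1$, $q=2$, so $m=1$: both hypotheses hold, since $\mu_1\wedge\mu_2=\tilde\tau-\frac58\tilde\epsilon\ge\tilde\tau-\tilde\epsilon$ and $\mu_m\le\tilde\tau-\frac58\tilde\epsilon$, yet $\mu_q=\tilde\tau>\tilde\tau-\frac18\tilde\epsilon$. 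Your right-half condition $\frac38q+\frac12p\le\frac78m$ quantifies the failure exactly: for odd $p+q$ it is equivalent to $q-p\ge7$, so your proposed repair (a lower bound on the interval length) is the mathematically correct one. But deferring its validity to ``the regime in which \CTB invokes the lemma'' is itself an unproven claim: the intervals $\{l_i,\ldots,r_i\}$ shrink across the phases of \CTB, so the invocations in Propositions~\ref{prop:xiA} and~\ref{prop:segmlow} can occur on short intervals. A complete fix must add the hypothesis $q-p\ge7$ (or change the midpoint convention or the constants) to the lemma, and then patch its downstream uses accordingly.
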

\begin{proof}

We assume $\exists k \in \{p,\ldots, q\}$ such that $\mu_k > \tau - \frac{1}{8} \tilde \epsilon$ and aim to prove by contradiction.  Without loss of generality assume $k < \frac{p+q}{2}$, in combination with the assumptions of Lemma~\ref{lem:con_analyitc2} we have $(\mu_k - \mu_{\lfloor\frac{p+q}{2}\rfloor})  > \frac{1}{2}\tilde \epsilon$. However, via the convex property $(\mu_k - \mu_{\lfloor \frac{p+q}{2} \rfloor})  \leq (\mu_{\lfloor \frac{p+q}{2} \rfloor} - \mu_{q})$, a contradiction as it implies with the forelast equation that $\mu_{q} < \tilde \tau - \frac{1}{8}\tilde \epsilon$. 
\end{proof}

We now define the event, 

\begin{align*}\label{lem:con_good_conditions}
\xi_i &:= \Big(\xi_i^{(L)} \cap \xi_i^{(R)}\Big)\cup \xi_i^{(A)}\\
&:= \Bigg(\Bigg\{ \mu_{l_i} \geq \tau - \epsilon_i\;, \forall k < l_i : \mu_k \leq \tau - \frac{1}{2} \epsilon_i\;\Bigg\}\\
&\cap \Bigg\{\mu_{r_i} \geq \tau - \epsilon_i\;,\forall k > r_i: \mu_k \leq  \tau - \frac{1}{2} \epsilon_i\;\Bigg\}\Bigg)\\
&\cup \Bigg\{ \forall k\leq K, \mu_k \leq \tau - \frac{1}{8} \epsilon_i\Bigg\}.
\end{align*}

Consider the event 
\begin{equation}
\mathcal E_i = \{\mu_{m_i} \geq \tau_i + \frac{1}{8} \epsilon_i\}.
\end{equation}

\begin{proposition}\label{prop:applystb}
Let $i\leq M$ and set
$$\delta_i' = \min\left( \exp\!\Bigg(-\frac{3\log\log(K)}{4}\right) ,\, 72 \log\log(K) \exp\!\left(-\frac{T_2^{(i)}\epsilon_i^2}{216 \times 64 \log\log(K)}\right) \Bigg).$$
Let $l'_{i+1}$ be the largest arm smaller than $l_{i+1}$ in $\mathcal S^{\log}_{l_i, r_i}$. It holds that
$$\mathbb P\Bigg(|\mu_{l_{i+1}}-\tau_i| \leq \epsilon_i/8~~OR~~\mu_{l'_{i+1}}+ \epsilon_i/8 < \tau_i < \mu_{l_{i+1}}- \epsilon_i/8   \Bigg| \mathcal E_i\Bigg) \geq 1-\delta'_i.$$
Also for $r'_{i+1}$ be the smallest arm smaller than $r_i$ in $- \mathcal S^{\log}_{-r_i, l_i}$.
$$\mathbb P\Bigg(|\mu_{r_{i+1}}-\tau_i| \leq \epsilon_i/8~~OR~~\mu_{r'_{i+1}}+ \epsilon_i/8 < \tau_i < \mu_{r_{i+1}}- \epsilon_i/8 \Bigg| \mathcal E_i\Bigg) \geq 1 - \delta'_i.$$
\end{proposition}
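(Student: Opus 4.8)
The plan is to read each half-phase of \CTB as a single invocation of the monotone solver \stb and to transport its high-probability regret guarantee, Proposition~\ref{prop:mo_hiprob_up}, into the concave setting. I will only argue the $l$-side, i.e.\ the output $l_{i+1}$ of \stb run on $\mathcal S^{\log}_{l_i,r_i}$ with threshold $\tau_i$; the $r$-side is literally the same statement after the relabelling $k\mapsto -k$, which is exactly what \dstb performs on $-\mathcal S^{\log}_{-r_i,-l_i}$, so the second displayed inequality follows verbatim by symmetry. Throughout I condition on the sigma-algebra generated by the first $i$ phases, so that $l_i,r_i,m_i$ are fixed and the inner \stb uses fresh, independent samples.

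First I would use concavity to certify that the instance fed to \stb is monotone in the clipped sense. Since $m_i=\lfloor(l_i+r_i)/2\rfloor$, the conditioning event $\mathcal E_i=\{\mu_{m_i}\geq\tau_i+\tfrac18\epsilon_i\}$ is precisely hypothesis \eqref{eq:middlarge} of Lemma~\ref{lem:con_analytic_1} with $\tilde\tau=\tau_i,\ \tilde\epsilon=\epsilon_i,\ p=l_i,\ q=r_i$. Hence the sequence $\big(\min(|\mu_k-\tau_i|,\tfrac18\epsilon_i)\sign(\mu_k-\tau_i)\big)_k$ is increasing on $[l_i:m_i]$. As $\mathcal S^{\log}_{l_i,r_i}\subset[l_i:m_i]$, the restriction of this clipped sequence to the log-set is still increasing, so the sub-instance supported on $\mathcal S^{\log}_{l_i,r_i}$ with threshold $\tau_i$ belongs to $\Bs^{*,\epsilon_i/8}$. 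This is the single place where concavity is essential: it is what allows a monotone solver to be run on the $O(\log K)$ arms of the log-set rather than on all $r_i-l_i$ arms.

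Next I would apply Proposition~\ref{prop:mo_hiprob_up} to this sub-instance at regret level $\epsilon=\epsilon_i/8$, with the number of arms equal to $|\mathcal S^{\log}_{l_i,r_i}|$, which is at most of order $\log K$, so that the factor $\log K$ there is replaced by $\log\log K$. One must check that $\epsilon_i/8$ exceeds the internal $\epsilon_0$ of this inner \stb call; this holds because $T_2^{(i)}$ is of order $\epsilon_i^{-2}\log\log K\,\log(1/\delta_i^2)$, which makes the inner per-arm budget large enough. Substituting $T_2^{(i)}$ into the bound of Proposition~\ref{prop:mo_hiprob_up}, and using $(\epsilon_i/8)^2=\epsilon_i^2/64$ to produce the $216\times64$ in the denominator, reproduces exactly the failure probability $\delta_i'$. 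This yields, conditionally on $\mathcal E_i$, that with probability at least $1-\delta_i'$ the classification induced by $l_{i+1}$ on the log-set has simple regret with respect to $\tau_i$ strictly below $\epsilon_i/8$.

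Finally I would convert this regret statement into the stated two-sided conclusion. As $l_{i+1}$ is declared above $\tau_i$ and its log-set predecessor $l'_{i+1}$ below, regret smaller than $\epsilon_i/8$ gives at once $\mu_{l_{i+1}}>\tau_i-\epsilon_i/8$ and $\mu_{l'_{i+1}}<\tau_i+\epsilon_i/8$. If moreover $\mu_{l_{i+1}}\leq\tau_i+\epsilon_i/8$, then the first disjunct $|\mu_{l_{i+1}}-\tau_i|\leq\epsilon_i/8$ holds; otherwise one must upgrade to the well-separated crossing $\mu_{l'_{i+1}}+\epsilon_i/8<\tau_i<\mu_{l_{i+1}}-\epsilon_i/8$. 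This last step is the main obstacle: the scalar regret bound by itself does not rule out $l'_{i+1}$ sitting just below threshold, so one must invoke the finer description of \stb's output, namely that $\hk$ lies in the good set $W$ and is the \emph{median} of the candidate list $S^{\hepsilon}$. The clipped monotonicity established above guarantees that the good arms form a contiguous block of the log-set, so the median lands inside that block; I would then argue that a near-threshold arm immediately below $l_{i+1}$ would itself be a candidate and would pull the median onto it, which lands in the first disjunct, while its absence forces the clean crossing with margins. Carrying out this dichotomy rigorously, rather than merely citing the regret value, is where the real work lies.
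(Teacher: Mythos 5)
Your proposal is correct and follows essentially the same route as the paper's own proof: under $\mathcal E_i$ you use Lemma~\ref{lem:con_analytic_1} to get clipped monotonicity on $[l_i:m_i]\supset \mathcal S^{\log}_{l_i,r_i}$, then apply the \stb guarantee of Proposition~\ref{prop:mo_hiprob_up} to a problem with at most $\log K$ arms at level $\epsilon_i/8$ and budget $T_2^{(i)}$ to recover $\delta_i'$, handling the $r$-side by symmetry. The regret-to-identification conversion that you flag as \emph{the real work} is exactly what the paper packages as Corollary~\ref{cor:hireg}, and its justification is precisely the internal mechanism you sketch (the returned arm is the median of the candidate list and therefore lies in the good segment $W_{3\epsilon}$, i.e.\ it is either close to $\tau_i$ or the upper endpoint of a crossing with margins), so citing that corollary completes your dichotomy.
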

\begin{proof}
A straightforward corollary of Proposition~\ref{prop:mo_hiprob_up} is as follows.

\begin{corollary}\label{cor:hireg}
Consider a problem $\unu \in \cB(K)$ and $\epsilon \geq \sqrt{\frac{ 2\log(48)6\log(K)}{T}}$, such that $(\min(|\mu_k - \tau|, \epsilon) \mathrm{sign}(\mu_k - \tau) +\tau)_k$ is increasing with $k$. Then the \stb Algorithm will allow us to identify and arm $\hk$ such that, 
\[|\mu_{\hk} - \tau| \leq \epsilon \; \OR \; \mu_{\hk-1} +\epsilon \leq \tau \leq \mu_{\hk-1} - \epsilon\]

 with probability greater than, 
\begin{equation*}
   1 - \min\left( \exp\!\Bigg(-\frac{3\log(K)}{4}\right),\,
72 \log(K) \exp\!\left(-\frac{T\epsilon^2}{216  \log(K)}\right) \Bigg)\,. 
\end{equation*}
\end{corollary}

The result of the proposition follows by applying this corollary and noting that
\begin{itemize}
\item in any case, $|\mathcal S^{\log}_{l_i, r_i}| \leq \log K$ so that we apply $\stb$ on a problem that has less than $\log K$ arms,
\item that on $\mathcal E_i$, we have that $(\min(|\mu_k - \tau_i|,  \epsilon_i/8) \mathrm{sign}(\mu_k - \tau_i))_{k \in [l_i, m_i]}$ is increasing (respectively, $(\min(|\mu_k - \tau_i|,  \epsilon_i/8) \mathrm{sign}(\mu_k - \tau_i))_{k \in [m_i, r_i]}$ is decreasing) - see Lemma~\ref{lem:con_analytic_1}. 
\item Moreover $\epsilon_i \geq \epsilon_M \geq \sqrt{\frac{2\log(48)6\log\log(K)}{T}}$. And so since $\mathcal S^{\log}_{l_i, r_i} \subset [l_i, m_i]$ (resp.~$-\mathcal S^{\log}_{-r_i, -l_i} \subset [m_i, r_i]$) and $\mathcal |S^{\log}_{l_i, r_i}| \leq \log(K)$, the conditions of Corollary~\ref{cor:hireg} are satisfied, for the set $\mathcal S^{\log}_{l_i, r_i}$ of arms.
\end{itemize}

Therefore we can apply Corollary \ref{cor:hireg} to show that when running \stb($\mathcal S^{\log}_{l_i,r_i}$, $\tau_i$, $T_2^{(i)}$) we are able to identify an arm $\hk$ such that setting $l_{i+1} = \hk$ satisfies our result with probability greater than $1 - \delta_i'$.

\end{proof}

\begin{proposition}\label{prop:xiLR}
We have that for $i \leq M$
$$\mathbb P\Big(\xi_{i+1}^{(L)}\Big| \xi_i  \cap \mathcal E_i\Big) \geq 1 - \delta'_i,$$
and
$$\mathbb P\Big(\xi_{i+1}^{(R)}\Big| \xi_i  \cap \mathcal E_i\Big) \geq 1 - \delta'_i.$$
\end{proposition}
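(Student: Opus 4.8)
The plan is to condition throughout on the event $\xi_i \cap \mathcal E_i$ and prove $\xi_{i+1}^{(L)}$; the argument for $\xi_{i+1}^{(R)}$ is identical after replacing \stb by \dstb and reflecting the arm indices, so I would only write the left case. The one random object that enters phase $i$ here is the output $l_{i+1}$ of \stb run on $\mathcal S^{\log}_{l_i,r_i}$ with threshold $\tau_i$, so the first move is simply to invoke Proposition~\ref{prop:applystb}: on $\mathcal E_i$, with probability at least $1-\delta_i'$, the returned arm $l_{i+1}$ either (a) satisfies $|\mu_{l_{i+1}}-\tau_i|\leq \epsilon_i/8$, or (b) is the upper end of a log-set-consecutive straddling pair, $\mu_{l'_{i+1}} < \tau_i - \epsilon_i/8$ and $\mu_{l_{i+1}} > \tau_i + \epsilon_i/8$. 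Everything else is deterministic: on $\xi_i \cap \mathcal E_i$ and on this good event I must verify the two defining inequalities of $\xi_{i+1}^{(L)}$.

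The lower bound $\mu_{l_{i+1}} \geq \tau - \epsilon_{i+1} = \tau - \tfrac78\epsilon_i$ is immediate from the choice of constants: since $\tau_i = \tau - \tfrac34\epsilon_i$, case (a) yields $\mu_{l_{i+1}} \geq \tau_i - \epsilon_i/8 = \tau - \tfrac78\epsilon_i$, and case (b) the stronger $\mu_{l_{i+1}} > \tau_i + \epsilon_i/8 = \tau - \tfrac58\epsilon_i$. The substance is the upper bound $\mu_k \leq \tau - \tfrac12\epsilon_{i+1} = \tau - \tfrac{7}{16}\epsilon_i$ for every $k < l_{i+1}$, which I would establish by splitting $\{1,\dots,l_{i+1}-1\}$ into three regions. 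For $k < l_i$ the bound is inherited from the conditioning event: on the branch $\xi_i^{(L)}\cap\xi_i^{(R)}$ one has $\mu_k \leq \tau - \tfrac12\epsilon_i \leq \tau - \tfrac{7}{16}\epsilon_i$ directly. For $l_i \leq k \leq l'_{i+1}$ I would use that $\mathcal E_i$ activates Lemma~\ref{lem:con_analytic_1}, so the clipped sequence $\big(\min(|\mu_k-\tau_i|,\epsilon_i/8)\,\sign(\mu_k-\tau_i)\big)_k$ is nondecreasing on $[l_i,m_i]$; its value at such $k$ is then at most its value at $l'_{i+1}$, which in case (b) is saturated at $-\epsilon_i/8$, forcing $\mu_k \leq \tau_i - \epsilon_i/8 = \tau - \tfrac78\epsilon_i$. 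In case (a) the same monotonicity already gives $\mu_k \leq \mu_{l_{i+1}} \leq \tau - \tfrac58\epsilon_i$ for all $k<l_{i+1}$, as long as the clipped value at $l_{i+1}$ is not saturated.

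The genuinely delicate region — and what I expect to be the main obstacle — is the set of arms strictly between the two log-set-consecutive indices $l'_{i+1}$ and $l_{i+1}$ in case (b), since \stb certifies only the log-set points while the gap $G:=l_{i+1}-l'_{i+1}$ can be as large as $2^a$. These gap arms escape the clipped-monotonicity argument, because there the clipped value may saturate at $+\epsilon_i/8$. I would dispatch them by contradiction, using \emph{global} concavity. Suppose some gap arm $k^\dagger$ had $\mu_{k^\dagger} \geq \tau_i + \epsilon_i/8$; then $\mu$ rises by more than $\tfrac14\epsilon_i$ from $l'_{i+1}$ to $k^\dagger$ over fewer than $G$ arms, so the discrete slope on $[l'_{i+1},k^\dagger]$ exceeds $\epsilon_i/(4G)$. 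Concavity makes the discrete slopes nonincreasing, and the log-set geometry gives $l'_{i+1}-l_i = G$ for a nonempty gap (consecutive offsets $2^a,2^{a+1}$ from $l_i$); hence $\mu_{l'_{i+1}}-\mu_{l_i} > \tfrac{\epsilon_i}{4G}\cdot G = \tfrac14\epsilon_i$. Combined with $\mu_{l_i}\geq \tau - \epsilon_i$ from $\xi_i^{(L)}$, this gives $\mu_{l'_{i+1}} > \tau - \tfrac34\epsilon_i = \tau_i$, contradicting the case-(b) inequality $\mu_{l'_{i+1}} < \tau_i - \epsilon_i/8$. So no gap arm is saturated, every gap arm obeys $\mu_k < \tau_i + \epsilon_i/8 = \tau - \tfrac58\epsilon_i \leq \tau - \tfrac{7}{16}\epsilon_i$, and $\xi_{i+1}^{(L)}$ follows.

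The remaining bookkeeping is minor: the boundary instance of case (a) in which the clipped value at $l_{i+1}$ equals exactly $\epsilon_i/8$ is absorbed by the same concavity argument, the branch $\xi_i^{(A)}$ of the conditioning event is handled analogously (invoking Lemma~\ref{lem:con_analyitc2} to propagate the global ``all arms low'' bound), and the entire derivation is then transcribed to the reflected set $-\mathcal S^{\log}_{-r_i,-l_i}$, with \dstb in place of \stb, to obtain the stated bound on $\mathbb P\big(\xi_{i+1}^{(R)}\mid \xi_i \cap \mathcal E_i\big)$.
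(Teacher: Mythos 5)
Your proposal reproduces the paper's proof in all essentials: condition on $\xi_i\cap\mathcal E_i$, invoke Proposition~\ref{prop:applystb} to reduce to the two cases, treat case (a) via the clipped monotonicity of Lemma~\ref{lem:con_analytic_1}, treat case (b) via a chord-slope comparison (concavity plus the doubling structure of $\mathcal S^{\log}_{l_i,r_i}$) played against $\mu_{l_i}\geq\tau-\epsilon_i$ from $\xi_i^{(L)}$, inherit the bound for $k<l_i$ from $\xi_i$, and transpose by symmetry for the right-hand statement. The only real difference is how case (b) is closed. The paper applies the slope comparison to $l_{i+1}$ itself: the rise $\mu_{l_{i+1}}-\mu_{l'_{i+1}}>\epsilon_i/4$ over a gap of length $G$, together with $l'_{i+1}-l_i\geq G$ and concavity, forces $\mu_{l_i}<\tau-\tfrac{9}{8}\epsilon_i$, contradicting $\xi_i^{(L)}$ unless $l'_{i+1}=l_i$; hence in case (b) necessarily $l_{i+1}=l_i+1$, and your ``genuinely delicate region'' of gap arms is simply empty. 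You instead keep the gap and rule out a high gap arm by the very same inequality. Both versions are correct and rest on identical ingredients; the paper's is a bit more economical, since the structural conclusion $l_{i+1}=l_i+1$ makes your third region (and the separate clipped-monotonicity treatment of $[l_i,l'_{i+1}]$) unnecessary.

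Two of the auxiliary claims you add do not hold as stated, although they concern degeneracies that the paper's own write-up also leaves unaddressed, so they are not gaps relative to the paper. First, the saturated boundary of case (a), $\mu_{l_{i+1}}=\tau_i+\epsilon_i/8$, is not ``absorbed by the same concavity argument'': that argument needs the low anchor $\mu_{l'_{i+1}}<\tau_i-\epsilon_i/8$, which case (a) does not provide. A workable fix anchors on the right via $\mathcal E_i$ instead: if some $k<l_{i+1}$ had $\mu_k>\mu_{l_{i+1}}$, concavity would give $\mu_{m_i}<\mu_{l_{i+1}}=\tau_i+\epsilon_i/8$ whenever $l_{i+1}<m_i$, contradicting $\mathcal E_i$; when $l_{i+1}=m_i$ even this degenerates, a configuration on which the paper's Case 2 is equally silent. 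Second, on the branch $\xi_i^{(A)}$ of the conditioning event, your pointer to Lemma~\ref{lem:con_analyitc2} cannot work: under $\mathcal E_i$ the middle arm is high, so that lemma's hypotheses (a low middle arm) cannot be instantiated, and in any case $\xi_i^{(A)}$ only yields $\mu_k\leq\tau-\epsilon_i/8$ for $k<l_i$, which is strictly weaker than the bound $\tau-\tfrac{7}{16}\epsilon_i$ demanded by $\xi_{i+1}^{(L)}$. The paper shares this blind spot: both of its cases rely on $\xi_i^{(L)}$ (the lower bound $\mu_{l_i}\geq\tau-\epsilon_i$ and the control of arms left of $l_i$), i.e.\ the argument tacitly runs on the branch $\xi_i^{(L)}\cap\xi_i^{(R)}$ only.
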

\begin{proof}

We prove this proposition only for $\xi_{i+1}^{(L)}$ as the proof for $\xi_{i+1}^{(R)}$ is similar. Consider the high probability event of Proposition~\ref{prop:applystb}, where we just have two possibilities for the mean of $l_{i+1}$ which we summarize below.

\paragraph{Case 1} Consider the case where $\stb$ outputs $l_{i+1}$ such that,
\begin{equation} 
\mu_{l'_{i+1}}+ \epsilon_i/8 < \tau_i < \mu_{l_{i+1}}- \epsilon_i/8\;,
\end{equation} 
where $l'_{i+1}$ is defined in Proposition~\ref{prop:applystb}. Since $(\mu_k)_{k<K}$ is concave and since by definition of the concave grid $\mathcal S^{\log}_{l_i, r_i}$ we have that for $l'_{i+1} \neq l_i$, 
$$\mu_{l'_{i+1}} - \mu_{l_{i+1}} \geq \frac{\epsilon_i}{4}\;.$$
However this would imply $$\mu_{l_i} < \tau_i - \frac{\epsilon_i}{8} - \frac{\epsilon_i}{4} < \tau - \epsilon_i\;,$$ contradicting $\xi_i$, hence  $l_i = l'_{i+1}$ and therefore via choice of $l'_{i+1}$, $l_i + 1 = l_{i+1}$. Therefore as $\mu_{k<K}$ is concave, 
$$\forall k < l_{i+1}, \mu_k \leq \mu_{l_{i+1}}\;. $$
The property $\mu_{l_{i+1}} \geq \tau - \epsilon_{i+1}$ follows directly from (4), we have $\xi^{(L)}_{i+1}$
\paragraph{Case 2} 
Consider the case where $\stb$ outputs $l_{i+1}$ such that,
$$|\mu_{l_{i+1}}-\tau_i| \leq \epsilon_i/8.$$
From Lemma \ref{lem:con_analytic_1} we have that the sequence $(\mu_k)_{k<K}$ is increasing  on $[\tau_i - \frac{1}{8}\epsilon_i, \tau_i + \frac{1}{8}\epsilon_i]$ Therefore $\forall k < l_{i+1}, \mu_k \leq \mu_{l_{i+1}}$. Hence $\xi^{L}_{i+1}$ holds. \\\par

And so we have as desired that
$$\xi_{i+1}^{(L)} \cap \xi_i \cap \mathcal E_i \subset \{|\mu_{l_{i+1}}-\tau_i| \leq \epsilon_i/8~~OR~~\mu_{l'_{i+1}}+ \epsilon_i/8 < \tau_i < \mu_{l_{i+1}}- \epsilon_i/8\} \cap \xi_i \cap \mathcal E_i.$$
This concludes the proof.

\end{proof}

\begin{proposition}\label{prop:xiA}
We have that for $i \leq M$
$$\mathbb P\Big(\xi_{i+1}^{(A)}\Big| \xi_i \cap \mathcal E_i^c\Big) = 1.$$
\end{proposition}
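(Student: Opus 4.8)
The plan is to establish the deterministic set inclusion $\xi_i \cap \mathcal E_i^c \subseteq \xi_{i+1}^{(A)}$. Since $\xi_{i+1}^{(A)} = \{\forall k \le K,\ \mu_k \le \tau - \tfrac18 \epsilon_{i+1}\}$ is an event concerning only the fixed mean vector $(\mu_k)_k$ and not the randomness generated by the algorithm, such an inclusion immediately yields $\mathbb P\big(\xi_{i+1}^{(A)} \mid \xi_i \cap \mathcal E_i^c\big) = 1$. I would therefore reduce the whole statement to this purely analytic inclusion.

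First I would decompose $\xi_i = (\xi_i^{(L)} \cap \xi_i^{(R)}) \cup \xi_i^{(A)}$ and treat the two pieces separately. On $\xi_i^{(A)}$ the inclusion is immediate: $\xi_i^{(A)}$ already asserts $\mu_k \le \tau - \tfrac18\epsilon_i$ for all $k$, and since $\epsilon_{i+1} = \tfrac78\epsilon_i \le \epsilon_i$ we have $\tau - \tfrac18\epsilon_i \le \tau - \tfrac18\epsilon_{i+1}$, so $\xi_{i+1}^{(A)}$ holds on this part with no use of the conditioning on $\mathcal E_i^c$.

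The substantive case is $\xi_i^{(L)} \cap \xi_i^{(R)} \cap \mathcal E_i^c$. Here I would first rewrite $\mathcal E_i^c$ as a condition on the middle arm: using $\tau_i = \tau - \tfrac34\epsilon_i$ and $\mathcal E_i = \{\mu_{m_i} \ge \tau_i + \tfrac18\epsilon_i\}$, the complement reads $\mu_{m_i} < \tau - \tfrac58\epsilon_i$, where $m_i = \lfloor (l_i+r_i)/2\rfloor$. Together with $\mu_{l_i} \land \mu_{r_i} \ge \tau - \epsilon_i$, which is exactly the endpoint part of $\xi_i^{(L)} \cap \xi_i^{(R)}$, these are precisely the hypotheses of Lemma~\ref{lem:con_analyitc2} applied with $p = l_i$, $q = r_i$, $\tilde\tau = \tau$, $\tilde\epsilon = \epsilon_i$. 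The lemma then gives $\mu_k \le \tau - \tfrac18\epsilon_i$ for every $k \in \{l_i, \ldots, r_i\}$.

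It remains to control the arms outside $[l_i, r_i]$, which is where I would invoke the remaining content of $\xi_i^{(L)}$ and $\xi_i^{(R)}$: for $k < l_i$ and $k > r_i$ these give $\mu_k \le \tau - \tfrac12\epsilon_i \le \tau - \tfrac18\epsilon_i$. Since $\{1,\ldots,K\} = \{k < l_i\} \cup \{l_i,\ldots,r_i\} \cup \{k > r_i\}$, combining the three bounds yields $\mu_k \le \tau - \tfrac18\epsilon_i$ for all $k$, and finally $\tau - \tfrac18\epsilon_i \le \tau - \tfrac18\epsilon_{i+1}$ delivers $\xi_{i+1}^{(A)}$, completing the inclusion. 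I do not expect a genuine obstacle: the only points requiring care are the arithmetic translation of $\mathcal E_i^c$ into the hypothesis $\mu_{m_i} \le \tau - \tfrac58\epsilon_i$ of Lemma~\ref{lem:con_analyitc2}, and the observation that $\xi_{i+1}^{(A)}$ is a statement about the means alone, so that the deterministic set inclusion yields the conditional probability exactly one.
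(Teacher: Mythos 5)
Your proof is correct and follows essentially the same route as the paper: both derive the hypotheses of Lemma~\ref{lem:con_analyitc2} from $\mathcal E_i^c$ (giving $\mu_{m_i} \leq \tau - \frac{5}{8}\epsilon_i$) and the endpoint conditions in $\xi_i$, and then conclude $\mu_k \leq \tau - \frac{1}{8}\epsilon_i$ for all $k$, hence $\xi_{i+1}^{(A)}$. You are in fact somewhat more careful than the paper's own terse proof, which leaves implicit the $\xi_i^{(A)}$ branch of the union defining $\xi_i$, the use of $\xi_i^{(L)}$ and $\xi_i^{(R)}$ to bound the arms outside $\{l_i,\ldots,r_i\}$ (the lemma by itself only covers the interval $\{p,\ldots,q\}$), and the monotonicity $\epsilon_{i+1} \leq \epsilon_i$ needed to pass from scale $i$ to scale $i+1$.
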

\begin{proof}
On $\xi_i \cap \mathcal E_i^c$, we know that $m_i = \lfloor \frac{l_i+r_i}{2}\rfloor$ and
$$\mu_{m_i} \leq \tau_i +\frac{1}{8} \epsilon_i = \tau - \frac{5}{8} \epsilon_i,$$
and
$$\mu_{l_i}\lor \mu_{r_i} \geq \tau - \epsilon_i,$$
and so by Lemma~\ref{lem:con_analyitc2} we conclude that for any $k\leq K$, $\mu_k <\tau - \frac{1}{8} \epsilon_i$. And so $\xi_{i+1}^{(A)}$ holds.

\end{proof}

\begin{corollary}\label{cor:xirec}
We have that
$$\mathbb P(\xi_{i+1}| \xi_i ) \geq 1 - 2\delta'_i$$
\end{corollary}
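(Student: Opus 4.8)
The plan is to condition on the auxiliary event $\mathcal E_i = \{\mu_{m_i} \geq \tau_i + \tfrac{1}{8}\epsilon_i\}$ and invoke the law of total probability, splitting $\mathbb P(\xi_{i+1} \mid \xi_i)$ into its contribution on $\xi_i \cap \mathcal E_i$ and on $\xi_i \cap \mathcal E_i^c$. The starting observation is that, by the very definition of the events, $\xi_{i+1} = (\xi_{i+1}^{(L)} \cap \xi_{i+1}^{(R)}) \cup \xi_{i+1}^{(A)}$, so in particular $\xi_{i+1} \supseteq \xi_{i+1}^{(L)} \cap \xi_{i+1}^{(R)}$ and $\xi_{i+1} \supseteq \xi_{i+1}^{(A)}$. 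This lets me lower bound the probability of $\xi_{i+1}$ by the probability of whichever of the two constituent events is appropriate on each branch of the conditioning.

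On the event $\xi_i \cap \mathcal E_i$ I would use the first inclusion and a union bound: $\mathbb P(\xi_{i+1} \mid \xi_i \cap \mathcal E_i) \geq \mathbb P(\xi_{i+1}^{(L)} \cap \xi_{i+1}^{(R)} \mid \xi_i \cap \mathcal E_i) \geq 1 - \mathbb P\big((\xi_{i+1}^{(L)})^c \mid \xi_i \cap \mathcal E_i\big) - \mathbb P\big((\xi_{i+1}^{(R)})^c \mid \xi_i \cap \mathcal E_i\big)$. Proposition~\ref{prop:xiLR} bounds each of these two complement probabilities by $\delta'_i$, giving $\mathbb P(\xi_{i+1} \mid \xi_i \cap \mathcal E_i) \geq 1 - 2\delta'_i$. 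On the complementary branch $\xi_i \cap \mathcal E_i^c$ I would use the second inclusion and Proposition~\ref{prop:xiA}, which yields $\mathbb P(\xi_{i+1} \mid \xi_i \cap \mathcal E_i^c) \geq \mathbb P(\xi_{i+1}^{(A)} \mid \xi_i \cap \mathcal E_i^c) = 1 \geq 1 - 2\delta'_i$.

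To conclude, I would write $\mathbb P(\xi_{i+1} \mid \xi_i) = \mathbb P(\xi_{i+1} \mid \xi_i \cap \mathcal E_i)\,\mathbb P(\mathcal E_i \mid \xi_i) + \mathbb P(\xi_{i+1} \mid \xi_i \cap \mathcal E_i^c)\,\mathbb P(\mathcal E_i^c \mid \xi_i)$. Since both conditional factors are at least $1 - 2\delta'_i$ and the two weights $\mathbb P(\mathcal E_i \mid \xi_i)$ and $\mathbb P(\mathcal E_i^c \mid \xi_i)$ sum to one, their weighted average is at least $1 - 2\delta'_i$, which is the claim. This argument is essentially bookkeeping and I do not expect a genuine obstacle; the only points requiring care are that one must condition on the full intersection $\xi_i \cap \mathcal E_i$ (not on $\mathcal E_i$ alone) so that the hypotheses of Propositions~\ref{prop:xiLR} and~\ref{prop:xiA} are actually met, and that one should note the degenerate cases where $\mathbb P(\mathcal E_i \mid \xi_i)$ or $\mathbb P(\mathcal E_i^c \mid \xi_i)$ equals zero, in which the undefined conditional probability simply drops out of the sum and the bound is unaffected.
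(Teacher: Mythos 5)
Your proposal is correct and is essentially the paper's own argument: the paper's proof simply says the corollary follows by combining Propositions~\ref{prop:xiLR} and~\ref{prop:xiA}, and your write-up supplies exactly the bookkeeping that combination requires (conditioning on $\mathcal E_i$ versus $\mathcal E_i^c$, a union bound for $\xi_{i+1}^{(L)} \cap \xi_{i+1}^{(R)}$ on the first branch, and the law of total probability to merge the branches). The care you take about conditioning on the full intersection $\xi_i \cap \mathcal E_i$ and about degenerate zero-probability branches is sound and consistent with how the paper's propositions are stated.
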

\begin{proof}
This holds by combining Propositions~\ref{prop:xiLR} and Proposition~\ref{prop:xiA}.
\end{proof}

Hence by Corollary~\ref{cor:xirec} and for any $I \leq M$ we have, 
$$\mathbb P(\cap_{i \leq I}\xi_i) \geq \prod_{i\leq I} (1 - 2\delta'_i)\geq 1 - 2\sum_{i=1}^I \delta'_i.$$

For $I, i \leq M$ consider the event 

\begin{equation}\label{eq:con_eta}
\begin{split}
\eta_i^I := \Bigg\{\;|\hmu_{m,i} - \mu_{m_i}| \lor |\hmu_{l,i} - \mu_{l_i}| \lor |\hmu_{r,i} - \mu_{r_i}| \lor \\|\hmu_{l-1, i} - \mu_{l_i-1}| \lor |\hmu_{r+1,i} - \mu_{r_i+1}| \leq \frac{1}{16}\epsilon_i \lor \epsilon_I\Bigg\},
\end{split} 
\end{equation} 
which via Azuma's martingale inequality occurs with probability greater than, 
\begin{equation}\label{eq:prob_eta} 
1- 10\exp\left(- \frac{1}{2}T_2^{(i)} \epsilon_i^2\right) \geq 1- 10\delta_i.
\end{equation}

\begin{proposition}\label{prop:containxi}
Fix $I \leq M$ and assume that there exists $k$ such that $\mu_k > \tau - \frac{1}{8}\epsilon_I$. On $\xi_I$, we have that $\{k : \mu_k \geq \tau \} \subset \{l_I, \ldots, r_I\} \subset \{k : \mu_k \geq \tau - \epsilon_I\}$.
\end{proposition}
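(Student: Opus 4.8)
The plan is to reduce to the sub-event $\xi_I^{(L)} \cap \xi_I^{(R)}$ and then establish the two inclusions separately. First I would observe that the standing hypothesis---that some arm satisfies $\mu_k > \tau - \frac{1}{8}\epsilon_I$---is directly incompatible with the event $\xi_I^{(A)} = \{\forall k \leq K,\ \mu_k \leq \tau - \frac{1}{8}\epsilon_I\}$. Since by definition $\xi_I = (\xi_I^{(L)} \cap \xi_I^{(R)}) \cup \xi_I^{(A)}$, it follows that on $\xi_I$ and under the hypothesis we are necessarily on $\xi_I^{(L)} \cap \xi_I^{(R)}$. I would then work exclusively with the four inequalities this event provides: $\mu_{l_I} \geq \tau - \epsilon_I$, $\mu_{r_I} \geq \tau - \epsilon_I$, $\mu_k \leq \tau - \frac{1}{2}\epsilon_I$ for all $k < l_I$, and $\mu_k \leq \tau - \frac{1}{2}\epsilon_I$ for all $k > r_I$.

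For the first inclusion $\{k : \mu_k \geq \tau\} \subset \{l_I, \ldots, r_I\}$, I would argue by contraposition. If $k < l_I$ or $k > r_I$, then the corresponding boundary clause gives $\mu_k \leq \tau - \frac{1}{2}\epsilon_I < \tau$; hence $\mu_k \geq \tau$ forces $l_I \leq k \leq r_I$. This step is immediate and uses only the two ``$\forall k$'' clauses of $\xi_I^{(L)} \cap \xi_I^{(R)}$.

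For the second inclusion $\{l_I, \ldots, r_I\} \subset \{k : \mu_k \geq \tau - \epsilon_I\}$, I would invoke concavity. Since $\unu \in \Bc$, the sequence $(\mu_k)_k$ lies above each of its chords, so for any integer $k$ with $l_I \leq k \leq r_I$ the value $\mu_k$ dominates the linear interpolation between $\mu_{l_I}$ and $\mu_{r_I}$, namely the convex combination $\mu_{l_I} + \frac{k - l_I}{r_I - l_I}(\mu_{r_I} - \mu_{l_I})$. As both endpoints are at least $\tau - \epsilon_I$ on $\xi_I^{(L)} \cap \xi_I^{(R)}$, so is the convex combination, and therefore $\mu_k \geq \tau - \epsilon_I$, as required.

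The argument is short, and the only genuine subtlety is the concavity step: one must convert the midpoint concavity defining $\Bc$ into the chord inequality $\mu_k \geq \mu_{l_I} + \frac{k - l_I}{r_I - l_I}(\mu_{r_I} - \mu_{l_I})$ for integer $k \in [l_I, r_I]$. I would record this as the standard fact that a discrete concave sequence has non-increasing increments and therefore lies above its secant lines; notably, no case analysis on the location of the mode $k^*$ is needed here, in contrast to Lemmas~\ref{lem:con_analytic_1} and~\ref{lem:con_analyitc2}.
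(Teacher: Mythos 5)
Your proof is correct and follows essentially the same route as the paper's: the hypothesis rules out $\xi_I^{(A)}$, the first inclusion comes from the boundary clauses of $\xi_I^{(L)} \cap \xi_I^{(R)}$, and the second comes from concavity of $(\mu_k)_k$ together with the endpoint bounds $\mu_{l_I} \wedge \mu_{r_I} \geq \tau - \epsilon_I$. If anything, your writeup is more explicit than the paper's terse argument, which detours through the mode $k^*$ of Lemma~\ref{lem:con_analytic_1} for the first inclusion and leaves the chord-above-secant concavity step for the second inclusion implicit.
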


\begin{proof}
First note that under the condition $\mu_k > \tau - \frac{1}{8}\epsilon_I$ we have that $\xi_I^{(L)} \cap \xi_I^{(R)}$ holds. Therefore the second inclusion holds, see Corollary~\ref{cor:xirec} and the definition of $\xi_I$. Now assume $\{k : \mu_k = \tau \} \neq \emptyset$. Let $k^*$ be as in the proof of Lemma~\ref{lem:con_analytic_1}. By definition of $\xi_I$ and since $(\mu_k)_k$ is concave, it is clear that $l_I \leq k^* \leq r_I$. The first inclusion then follows again by definition of $\xi_I$. In the case where $\{k : \mu_k = \tau \} = \emptyset$ the first inclusion is obvious. 
\end{proof}

\begin{proposition}\label{prop:empty}
Fix $I \leq M$ and assume that for all $k$, $\mu_k \leq  \tau - \frac{1}{8}\epsilon_I$. On $\xi_I \cap (\cap_{i\leq M}\eta_i^I)$, we have that $\hat S = \emptyset$.
\end{proposition}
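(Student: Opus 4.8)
The plan is to reduce the claim to the single statement $\mathcal I_m = \emptyset$. Reading off the \textbf{if} statement in \CTB, the algorithm returns $\hat S = \emptyset$ exactly when $\mathcal I_m = \emptyset$, and otherwise returns a nonempty interval $\{\hat l,\ldots,\hat r\}$; so it is enough to show that under the hypothesis $\mu_k \leq \tau - \tfrac18\epsilon_I$ for all $k$, and on the event $\xi_I \cap \big(\bigcap_{i\leq M}\eta_i^I\big)$, no phase $i\leq M$ places its middle arm $m_i$ into $\mathcal I_m$. By definition of $\mathcal I_m$ this amounts to proving $\hmu_{m,i} < \tau + 2\epsilon_i$ for every $i\leq M$.

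To this end I would fix a phase $i\leq M$ and estimate $\hmu_{m,i}$ deterministically on $\eta_i^I$. The hypothesis applied to the middle arm gives $\mu_{m_i} \leq \tau - \tfrac18\epsilon_I$, while $\eta_i^I$ bounds $|\hmu_{m,i} - \mu_{m_i}|$ by the phase accuracy $\rho_i$ appearing in its definition, a small multiple of $\epsilon_i \lor \epsilon_I$. Summing the two estimates gives $\hmu_{m,i} \leq \tau - \tfrac18\epsilon_I + \rho_i$, so the whole proposition comes down to the elementary inequality $\rho_i < \tfrac18\epsilon_I + 2\epsilon_i$.

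I would verify this inequality by splitting on which of the two scales dominates. When $\epsilon_i \geq \epsilon_I$ the accuracy is measured against the coarse scale, $\rho_i = \tfrac1{16}\epsilon_i$, and the bound is immediate since $\tfrac1{16}\epsilon_i < 2\epsilon_i$; in fact $\hmu_{m,i} \leq \tau + \tfrac1{16}\epsilon_i$. When $\epsilon_i < \epsilon_I$ the accuracy is $\rho_i = \tfrac1{16}\epsilon_I$, so that $\hmu_{m,i} \leq \tau - \tfrac18\epsilon_I + \tfrac1{16}\epsilon_I = \tau - \tfrac1{16}\epsilon_I < \tau$, and again $\hmu_{m,i} < \tau + 2\epsilon_i$. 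Since this holds for all $i\leq M$ we obtain $\mathcal I_m = \emptyset$, whence $\hat S = \emptyset$.

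The step I expect to be the only real obstacle is the scale book-keeping in the second regime: because $\epsilon_i = (7/8)^i$ decays geometrically, $2\epsilon_i$ becomes far smaller than $\epsilon_I$ once $i$ is much larger than $I$, so one cannot afford to measure the fine-phase deviation against $\epsilon_i$. The argument crucially uses that $\eta_i^I$ guarantees accuracy at the level of $\epsilon_I$ (not $\epsilon_i$) at those fine phases, so that the fixed margin $\tfrac18\epsilon_I$ coming from the hypothesis absorbs the deviation uniformly in $i$. Everything else is a one-line substitution.
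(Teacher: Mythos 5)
Your proof is correct and follows essentially the same route as the paper's own (one-line) argument: the hypothesis applied to the middle arms gives $\mu_{m_i} \leq \tau - \tfrac{1}{8}\epsilon_I$ for all $i$, and combined with the deviation bound defining $\eta_i^I$ this forces $\hmu_{m,i} < \tau + 2\epsilon_i$ for every $i \leq M$, hence $\mathcal I_m = \emptyset$ and the decision rule returns $\hat S = \emptyset$; you merely make explicit the $\epsilon_i \lor \epsilon_I$ case split that the paper leaves implicit. One minor remark: your closing intuition is slightly backwards, since replacing the fine-phase accuracy $\tfrac{1}{16}\epsilon_I$ by the tighter $\tfrac{1}{16}\epsilon_i$ would only make this particular upper bound on $\hmu_{m,i}$ easier (the coarser $\lor\,\epsilon_I$ in $\eta_i^I$ is needed for the probability estimate, not for this proposition), but this comment lies outside the proof and does not affect its validity.
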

\begin{proof}
Under the conditions of the proposition we have that $\mu_{m_i} \leq  \tau - \frac{1}{8}\epsilon_I$, for all $i$ and this implies the result by definition of the $\eta_i^I$ and $\mathcal I_m$.
\end{proof}

\begin{proposition}\label{prop:segm}
Fix $I \leq M$. On $\xi_I \cap \Big(\cap_{i\leq M} \eta_i^I\Big)$, we have that
$$ \mathcal I_m \subset \{l_I, \ldots, r_I\},$$
and also
$$l_I \in \mathcal I_l~~~r_I\in \mathcal I_r.$$
\end{proposition}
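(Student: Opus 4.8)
The plan is to establish the two inclusions separately, in each case first converting the empirical-mean conditions defining $\mathcal I_m,\mathcal I_l,\mathcal I_r$ into statements about the true means via the concentration event $\eta_i^I$, and then locating the relevant arms inside $\{l_I,\ldots,r_I\}$ using the structural event $\xi_I$ together with Proposition~\ref{prop:containxi}. For the inclusion $\mathcal I_m \subset \{l_I,\ldots,r_I\}$, I would take any $m_i \in \mathcal I_m$, so that $\hmu_{m,i} \geq \tau + 2\epsilon_i$, and split on the sign of $i-I$. The key preliminary observation is that the interval update is \emph{deterministically nested}: since $l_{i+1}$ is returned by \stb run on $\mathcal S^{\log}_{l_i,r_i}\subseteq\{l_i,\ldots,m_i\}$ and $r_{i+1}$ by \dstb run on $-\mathcal S^{\log}_{-r_i,-l_i}\subseteq\{m_i,\ldots,r_i\}$, one always has $l_i \leq l_{i+1}\leq m_{i+1}\leq r_{i+1}\leq r_i$, hence $\{l_i,\ldots,r_i\}\subseteq\{l_I,\ldots,r_I\}$ for every $i\geq I$. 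Consequently, for $i\geq I$ the membership $m_i\in\{l_I,\ldots,r_I\}$ is immediate from $m_i\in\{l_i,\ldots,r_i\}$, with no recourse to $\eta$ or $\xi$.

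For the remaining indices $i < I$, I would use $\eta_i^I$ to transfer $\hmu_{m,i}\geq\tau+2\epsilon_i$ into $\mu_{m_i}\geq\tau+2\epsilon_i-\mathrm{dev}$, where the deviation $\mathrm{dev}$ is at most a small fraction of $\epsilon_i$ and in particular strictly less than $2\epsilon_i$ (here it helps that $\epsilon_i>\epsilon_I$ for $i<I$, so the floor in $\eta_i^I$ is harmless). This forces $\mu_{m_i}>\tau$, and in particular $\mu_{m_i}>\tau-\tfrac18\epsilon_I$, so the hypothesis of Proposition~\ref{prop:containxi} is met; its conclusion $\{k:\mu_k\geq\tau\}\subseteq\{l_I,\ldots,r_I\}$ then places $m_i$ in the target interval, completing the first inclusion.

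For $l_I\in\mathcal I_l$ and $r_I\in\mathcal I_r$, I would work on the branch $\xi_I^{(L)}\cap\xi_I^{(R)}$ of $\xi_I$; the other branch $\xi_I^{(A)}$, where all means lie below $\tau-\tfrac18\epsilon_I$, is exactly the regime handled by Proposition~\ref{prop:empty} and does not pertain to a non-empty decision. On $\xi_I^{(L)}$ we have $\mu_{l_I}\geq\tau-\epsilon_I$ and $\mu_{l_I-1}\leq\tau-\tfrac12\epsilon_I$; feeding these through $\eta_I^I$ yields $\hmu_{l,I}\geq\mu_{l_I}-\mathrm{dev}\geq\tau-2\epsilon_I$ and $\hmu_{l-1,I}\leq\mu_{l_I-1}+\mathrm{dev}\leq\tau-\tfrac14\epsilon_I$, which are precisely the two inequalities defining membership of $l_I$ in $\mathcal I_l$. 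The claim $r_I\in\mathcal I_r$ follows by the symmetric argument from $\xi_I^{(R)}$.

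The step I expect to be delicate is the bookkeeping of constants in this last part: the margin $\tfrac12\epsilon_I$ supplied by $\xi_I^{(L)}$ must strictly dominate the sum of the deviation allowed by $\eta_I^I$ and the $\tfrac14\epsilon_I$ threshold in the definition of $\mathcal I_l$, so it is essential that at phase $i=I$ the concentration radius in $\eta_I^I$ sits at the scale $\tfrac{1}{16}\epsilon_I$ rather than $\epsilon_I$. A secondary point is keeping the case split on $i$ versus $I$ watertight in the first inclusion, namely checking that nesting alone settles $i\geq I$ so that the concentration argument (and Proposition~\ref{prop:containxi}) is invoked only for $i<I$, where $\epsilon_i>\epsilon_I$ renders the estimation error negligible against $2\epsilon_i$.
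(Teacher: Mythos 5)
Your proposal is correct and follows essentially the same route as the paper's proof: concentration through the $\eta_i^I$ events, Proposition~\ref{prop:containxi} to place arms with $\mu_k \geq \tau$ inside $\{l_I,\ldots,r_I\}$, the (implicit in the paper) nesting $\{l_{i+1},\ldots,r_{i+1}\}\subseteq\{l_i,\ldots,r_i\}$ for phases $i \geq I$, and the explicit constant check on $\xi_I^{(L)}\cap\xi_I^{(R)}$ for the memberships $l_I\in\mathcal I_l$, $r_I\in\mathcal I_r$ --- your per-element witnessing of the hypothesis of Proposition~\ref{prop:containxi} is only a mild streamlining of the paper's case split via Proposition~\ref{prop:empty}. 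The caveat you flag is real but shared with the paper: on the branch where only $\xi_I^{(A)}$ holds, the claim $l_I\in\mathcal I_l$ can genuinely fail (e.g.\ all means far below $\tau$), and the paper's own proof likewise establishes the second claim only in the non-degenerate regime, deferring the all-below-threshold case to Proposition~\ref{prop:empty}, where $\mathcal I_m=\emptyset$ makes it irrelevant downstream.
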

\begin{proof}
On $\cap_{i\leq I} \eta_i^I$, we have that $\mathcal I_m \subset \{k : \mu_k \geq \tau \} \cup \{l_I, \ldots, r_I\}$, and so from Propositions~ and~\ref{prop:containxi} and~\ref{prop:empty}, we have on $\cap_{i \leq I} \eta_i^I \cap \xi_I$, that $\mathcal I_m \subset \{l_I, \ldots, r_I\}$.

The proof that $l_I \in \mathcal I_l$ on $\xi_I\cap \eta_I^I$ - as well as the fact that $r_I\in \mathcal I_r$ - follows immediately by combining the definition of $\mathcal I_l$ - resp.~$\mathcal I_r$ - with Proposition~\ref{prop:containxi} and~\ref{prop:empty}, and the definition of $\eta_I^I$.
\end{proof}

\begin{proposition}\label{prop:segmlow}
Fix $I \leq M$, and assume that $m_I \not\in \mathcal I_m$. On $\xi_I\cap \Big(\cap_{i \leq I} \eta_i^I\Big)$, we have that $\{k : \mu_k \geq \tau + 4 \epsilon_i \} \subset \emptyset \subset \{\hat l, \ldots, \hat r\} \subset \{k : \mu_k \geq \tau - \epsilon_I \}$.
\end{proposition}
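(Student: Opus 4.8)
The plan is to verify the two non-trivial inclusions of the chain separately: first that $\{k:\mu_k\ge\tau+4\epsilon_I\}=\emptyset$, and then that $\{\hat l,\ldots,\hat r\}\subset\{k:\mu_k\ge\tau-\epsilon_I\}$ (the middle inclusion $\emptyset\subset\{\hat l,\ldots,\hat r\}$ being vacuous). Throughout I would split according to the definition $\xi_I=(\xi_I^{(L)}\cap\xi_I^{(R)})\cup\xi_I^{(A)}$, since the event $\xi_I^{(A)}$ trivializes the first inclusion.

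For the first inclusion, on $\xi_I^{(A)}$ every mean is at most $\tau-\tfrac18\epsilon_I$, so the set is empty immediately. Assume then $\xi_I^{(L)}\cap\xi_I^{(R)}$. The hypothesis $m_I\notin\mathcal I_m$ reads $\hmu_{m,I}<\tau+2\epsilon_I$, and on $\eta_I^I$ this transfers to the true mean as $\mu_{m_I}\le\tau+2\epsilon_I+\tfrac1{16}\epsilon_I$. On $\xi_I^{(L)}\cap\xi_I^{(R)}$ I also have $\mu_{l_I}\wedge\mu_{r_I}\ge\tau-\epsilon_I$ and $\mu_k\le\tau-\tfrac12\epsilon_I$ for every $k\notin\{l_I,\ldots,r_I\}$, so it only remains to bound $\max_{l_I\le k\le r_I}\mu_k$. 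Since $m_I=\lfloor(l_I+r_I)/2\rfloor$, concavity yields the lever-arm inequality $\mu_k\le\mu_{m_I}+\tfrac{m_I-k}{r_I-m_I}(\mu_{m_I}-\mu_{r_I})$ for $k\le m_I$ and the mirror bound for $k\ge m_I$; feeding in the three endpoint estimates and using $m_I-l_I\le r_I-m_I$ controls the interior maximum at the required level, so $\{k:\mu_k\ge\tau+4\epsilon_I\}=\emptyset$.

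For the last inclusion, if $\mathcal I_m=\emptyset$ the algorithm outputs $\hat S=\emptyset$ and there is nothing to prove. Otherwise $\hat l,\hat r$ are well defined, and Proposition~\ref{prop:segm} (valid on $\xi_I\cap\bigcap_{i\le I}\eta_i^I$) gives $\mathcal I_m\subset\{l_I,\ldots,r_I\}$ together with $l_I\in\mathcal I_l$ and $r_I\in\mathcal I_r$. Because $\hat l=\max\{k\in\mathcal I_l:k\le\min\mathcal I_m\}$ and $l_I\le\min\mathcal I_m\le r_I$, I obtain $l_I\le\hat l\le r_I$, and symmetrically $l_I\le\hat r\le r_I$. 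To see $\mu_{\hat l}\ge\tau-\epsilon_I$, note that $\min\mathcal I_m=m_j$ for some phase $j$ with $\hmu_{m,j}\ge\tau+2\epsilon_j$, so $\eta_j^I$ forces $\mu_{m_j}\ge\tau$; combined with $\mu_{l_I}\ge\tau-\epsilon_I$, the chord property of concave sequences on $[l_I,\min\mathcal I_m]$ gives $\mu_k\ge\tau-\epsilon_I$ throughout, in particular at $\hat l$, and symmetrically $\mu_{\hat r}\ge\tau-\epsilon_I$. A final application of the chord bound on $[\hat l,\hat r]$ then gives $\mu_k\ge\tau-\epsilon_I$ for all $k\in\{\hat l,\ldots,\hat r\}$.

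The main obstacle is the concavity estimate in the first inclusion: unlike a monotone sequence, a concave sequence can peak strictly between the sampled indices $l_I,m_I,r_I$, so the interior maximum cannot be read off the three endpoint values directly and must be controlled through the lever-arm form of concavity. Pinning the multiplicative constant there requires care with the floor in $m_I=\lfloor(l_I+r_I)/2\rfloor$, which makes the two half-widths $m_I-l_I$ and $r_I-m_I$ unequal, and with the precise deviation level granted by $\eta_I^I$. The remaining inclusion is comparatively routine once Proposition~\ref{prop:segm} and the chord property are in hand.
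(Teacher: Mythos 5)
Your overall architecture is the same as the paper's: Proposition~\ref{prop:segm} to place $\hat l,\hat r$ inside $\{l_I,\ldots,r_I\}$, and a lever-arm concavity argument around the low midpoint $m_I$ for the first inclusion. Indeed, your direct lever-arm computation is exactly the content of the paper's Lemma~\ref{lem:con_analyitc2}, which is what the paper cites at this point; and the constant-chasing you worry about (the floor in $m_I$, the deviation level granted by $\eta_I^I$) is equally loose in the paper, whose own proof takes $\mu_{m_I}\leq \tau+\frac{17}{8}\epsilon_I$ and does not literally produce the constant $4$ either. So the first inclusion is fine, up to universal-constant sloppiness shared with the paper.

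The genuine gap is in your argument for the last inclusion. You bound $\mu_{\hat l}$ by writing $\min\mathcal I_m=m_j$ and invoking $\eta_j^I$ to conclude $\mu_{m_j}\geq\tau$. But $\mathcal I_m$ collects midpoints from \emph{all} phases $i\leq M$, while the event you (and the proposition) condition on only contains $\eta_i^I$ for $i\leq I$. Since the later phases keep refining inside $\{l_I,\ldots,r_I\}$, the minimizing index $\min\mathcal I_m$ can perfectly well be some $m_j$ with $j>I$ (e.g.\ when the interval shrinks to the left), and then nothing in the assumed event controls $|\hmu_{m,j}-\mu_{m_j}|$, so the step ``$\eta_j^I$ forces $\mu_{m_j}\geq\tau$'' fails. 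The detour is also unnecessary: once $\hat l,\hat r\in\{l_I,\ldots,r_I\}$, the chord property applied to the whole interval, whose endpoints satisfy $\mu_{l_I}\land\mu_{r_I}\geq\tau-\epsilon_I$ on $\xi_I^{(L)}\cap\xi_I^{(R)}$, already gives $\mu_k\geq\tau-\epsilon_I$ for every $k\in\{l_I,\ldots,r_I\}$; this is precisely what the paper cites as Proposition~\ref{prop:containxi}. Relatedly, your case split leaves one branch uncovered: if only $\xi_I^{(A)}$ holds but $\mathcal I_m\neq\emptyset$, then $\mu_{l_I}\geq\tau-\epsilon_I$ is unavailable and your chord arguments have no starting point; the paper closes this branch by citing Proposition~\ref{prop:empty}, which forces $\hat S=\emptyset$ whenever all means lie below $\tau-\frac{1}{8}\epsilon_I$. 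Both repairs are available to you with ingredients you already have, but as written these two steps do not go through.
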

\begin{proof}
On $\xi_I \cap \Big(\cap_{i \leq I} \eta_i^I\Big)$ we have from Proposition~\ref{prop:segm} that 
$\mathcal I_m \subset \{l_I, \ldots, r_I\}$ and that $l_I \in \mathcal I_l, r_I \in \mathcal I_r$.  This implies that on $\xi_I \cap \Big(\cap_{i \leq I} \eta_i^I\Big)$, $\{\hat l, \ldots, \hat r\} \subset \{l_I, \ldots, r_I\}$. Together with Propositions~\ref{prop:containxi} and~\ref{prop:empty} this implies that on $\xi_I \cap \Big(\cap_{i \leq I} \eta_i^I\Big)$ we have $\{\hat l, \ldots, \hat r\} \subset \{k : \mu_k \geq \tau - \epsilon_I\}$.

Moreover, on $\eta_I^I$, we have by the assumption of Proposition~\ref{prop:segmlow} that $\mu_{m_I} \leq \tau +\frac{17}{8} \epsilon_I$. Together with Proposition~\ref{prop:containxi} and~\ref{prop:empty} and Lemma~\ref{lem:con_analyitc2}, this implies that on $\xi_I\cap \eta_i^I$, $\forall k \leq K, \mu_k \leq \tau + 4\epsilon_I$. This concludes the proof with the fact that $\{\hat l, \ldots, \hat r\} \subset \{l_I, \ldots, r_I\}$.
\end{proof}

\begin{proposition}\label{prop:segmhigh}
Fix $I \leq M$, and assume that $m_I \in \mathcal I_m$. On $\Big(\cap_{i \leq I} \xi_i\Big)\cap \Big(\cap_{i \leq M} \eta_i^I\Big)$, we have that $\{k : \mu_k \geq \tau + \epsilon_I \} \subset \{\hat l, \ldots, \hat r\} \subset \{k : \mu_k \geq \tau - \epsilon_I\}$.
\end{proposition}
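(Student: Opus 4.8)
The plan is to establish the two inclusions separately, handling the right-hand (upper) inclusion exactly as in Proposition~\ref{prop:segmlow} and devoting the bulk of the work to the left-hand (lower) inclusion, which is the genuinely new feature of the case $m_I \in \mathcal{I}_m$.

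For the upper inclusion $\{\hat l,\ldots,\hat r\}\subset\{k:\mu_k\geq\tau-\epsilon_I\}$, I would first invoke Proposition~\ref{prop:segm}, which is valid on $\xi_I\cap(\cap_{i\leq M}\eta_i^I)$, to obtain $\mathcal{I}_m\subset\{l_I,\ldots,r_I\}$ together with $l_I\in\mathcal{I}_l$ and $r_I\in\mathcal{I}_r$. Because every element of $\mathcal{I}_m$ is then at least $l_I$, we have $l_I\leq\min_i\mathcal{I}_m$, so $l_I$ competes in the maximum defining $\hat l$ and hence $l_I\leq\hat l$; symmetrically $\hat r\leq r_I$, giving $\{\hat l,\ldots,\hat r\}\subset\{l_I,\ldots,r_I\}$. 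Next, the hypothesis $m_I\in\mathcal{I}_m$ gives $\hmu_{m,I}\geq\tau+2\epsilon_I$, and since $\eta_I^I$ controls $|\hmu_{m,I}-\mu_{m_I}|$ by $\tfrac{1}{16}\epsilon_I\lor\epsilon_I=\epsilon_I$, we get $\mu_{m_I}\geq\tau+\epsilon_I>\tau-\tfrac18\epsilon_I$. Thus the hypothesis of Proposition~\ref{prop:containxi} is met, and that proposition yields $\{l_I,\ldots,r_I\}\subset\{k:\mu_k\geq\tau-\epsilon_I\}$, which closes the upper inclusion.

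For the lower inclusion I would first record that, by concavity of $(\mu_k)_k$, the superlevel set $\{k:\mu_k\geq\tau+\epsilon_I\}$ is an interval $[a,b]$, and that the computation just made places $m_I$ inside it, i.e. $a\leq m_I\leq b$. It then suffices to prove $\hat l\leq a$ and, symmetrically, $\hat r\geq b$. I would argue $\hat l\leq a$ by contradiction: suppose $\hat l>a$ and write $\hat l=l_j$ for the phase $j$ realising it. Since $\hat l\leq\min_i\mathcal{I}_m\leq m_I$, we have $a\leq\hat l-1<m_I\leq b$, so $\hat l-1$ lies in $[a,b]$ and hence $\mu_{\hat l-1}\geq\tau+\epsilon_I$. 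On the other hand, $l_j\in\mathcal{I}_l$ forces $\hmu_{l-1,j}\leq\tau-\epsilon_j/4$, so on $\eta_j^I$ we obtain $\mu_{\hat l-1}\leq\tau-\epsilon_j/4+(\tfrac{1}{16}\epsilon_j\lor\epsilon_I)$. Comparing the two estimates of $\mu_{\hat l-1}$ gives the contradiction, since $(\tfrac{1}{16}\epsilon_j\lor\epsilon_I)-\epsilon_j/4<\epsilon_I$ in either branch of the maximum. The inequality $\hat r\geq b$ is symmetric, using the definition of $\mathcal{I}_r$ together with $\hat r\geq\max_i\mathcal{I}_m\geq m_I$.

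The main obstacle is the constant bookkeeping in this last contradiction: one must check that, regardless of which term attains the maximum in the tolerance $\tfrac{1}{16}\epsilon_j\lor\epsilon_I$ of $\eta_j^I$, the slack $-\epsilon_j/4$ coming from the definition of $\mathcal{I}_l$ (resp. $\mathcal{I}_r$) still makes the upper estimate of $\mu_{\hat l-1}$ strictly smaller than the lower estimate $\tau+\epsilon_I$. In the branch where the maximum equals $\tfrac{1}{16}\epsilon_j$ the bound is negative, and in the branch where it equals $\epsilon_I$ the term $-\epsilon_j/4$ alone provides the needed strictness; so the constants in $\mathcal{I}_l$, $\mathcal{I}_r$, and $\eta_i^I$ are precisely calibrated to make this work, and I would isolate this elementary inequality before assembling the contradiction.
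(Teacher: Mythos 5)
Your proof is correct and follows essentially the same route as the paper's: the upper inclusion is obtained exactly as in Proposition~\ref{prop:segmlow} (via Propositions~\ref{prop:segm} and~\ref{prop:containxi}), and your contradiction argument for the lower inclusion is just the contrapositive of the paper's direct one, which likewise establishes $\mu_{\hat l-1},\mu_{\hat r+1}<\tau+\epsilon_I$ and $\mu_{m_I}\geq\tau+\epsilon_I$ and then concludes by concavity of the superlevel set. If anything, your bookkeeping is slightly more careful than the paper's, which loosely invokes $\eta_I^I$ where the deviation of $\hmu_{l-1,j}$ at the phase $j$ realising $\hat l$ is actually controlled by $\eta_j^I$ --- an event available here since the proposition assumes $\cap_{i\leq M}\eta_i^I$.
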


\begin{proof}
As in the proof of Proposition~\ref{prop:segmlow}, we have on $\xi_I \cap \Big(\cap_{i \leq I} \eta_i^I\Big)$ that it holds that $\{\hat l, \ldots, \hat r\} \subset \{k : \mu_k \geq \tau - \epsilon_I\}$. Under the event $\eta_I^I$ as  $\hat l \in \mathcal I_l,\hat r \in \mathcal I_r$ we have that, 
$$\mu_{\hl -1} < \tau + \epsilon_I \; \& \; \mu_{\hr+1} < \tau + \epsilon_I.$$ 

Moreover, on $\eta_I^I$, we have by the assumption of Proposition~\ref{prop:segmhigh} that $\mu_{m_I} \geq \tau +\frac{15}{8} \epsilon_I$. Therefore, as $\mu_{m_I} \in \{\hl -1, \ldots, \hr + 1\}$  via the concavity of $(\mu_k)_{k<K}$ we have that $\{k : \mu_k \geq \tau +\epsilon_I\} \subset \{\hat l, \ldots, \hat r\}$. This concludes the proof.
\end{proof}

\begin{proof}[Proof of Proposition \ref{prop:exp_con_hi}]
Let $I\leq M$. Combining Propositions~\ref{prop:segmlow} and~\ref{prop:segmhigh}, we have on $\Big(\cap_{i \leq I} \xi_i\Big)\cap \Big(\cap_{i \leq M} \eta_i^I\Big)$ that
$$\{k : \mu_k \geq \tau + 4\epsilon_I \} \subset \{\hl, \ldots, \hr\} \subset \{k : \mu_k \geq \tau - \epsilon_I\}.$$
Note that
$$\mathbb P\Bigg[\Big(\cap_{i \leq I} \xi_i\Big)\cap \Big(\cap_{i \leq M} \eta_i^I\Big)\Bigg] \geq 1  - 10\sum_{i \leq I} \delta_i - \sum_{i \leq I} \delta_i' - (M-I) \delta_I.$$
We have by definition of $\delta_i',T_2^{(i)}$ that 
$$\delta_i' \leq \min\Bigg( \frac{1}{\log(K)^{3/4}},\, 72 \log\log(K) \delta_i^2\Bigg),$$
and also we have that $\delta_i = 2^{i - M}$
so that whenever $M-i \geq \log\log\log(K)$, we have that $\log\log(K) \delta_i^2\leq \delta_i$. And so
$$\sum_{i\leq I}\delta_i' \leq  144 \delta_i,$$
since when $M-i \geq \log\log\log(K)$, we have $72\delta_i \geq  \frac{1}{\log(K)^{3/4}}$.
And so
$$\mathbb P\Bigg[\Big(\cap_{i \leq I} \xi_i\Big)\cap \Big(\cap_{i \leq M} \eta_i^I\Big)\Bigg] \geq 1  - 164 \delta_I - (M-I) \delta_I = 1-  (M-I+164)2^{I-M}.$$    
Thus for any $i\in \{0, \ldots, M\}$ we have
$$\mathbb P\Bigg[R_T\geq 4 \Big(\frac{7}{8}\Big)^{M - i}\Bigg] \leq (i+164)2^{-i} \leq 200\Big(\frac{2}{3}\Big)^{i}.$$
This concludes the proof by summing over $I$ for finding the expected regret, and noting that there exists a universal constant $C>0$ such that $ \Big(\frac{7}{8}\Big)^{M} = \epsilon_M \leq C \sqrt{\frac{\log\log K}{T}}$, by definition of $M$.
\end{proof}

\section{Extension of results to fixed confidence setting}\label{sec:fixed_con}

\paragraph{Fixed confidence setting.}In this section we extend our results to the fixed confidence setting for the \stbp and \tbp. In this case, we define $\delta,\epsilon>0$, to be respectively the target confidence, and target precision of our algorithm. We say that a strategy $\pi$ is $(\epsilon,\delta)$-PAC if it stops sampling at some stopping time $\sT^{\pi}_{\epsilon,\delta}$ of its choice, and satisfies that with probability larger than $1-\delta$, $R_{T}^{\unu,\pi} \leq \epsilon$. In this setting the aim is to find a $(\epsilon,\delta)$-PAC strategy that minimises the expected stopping time $\EE_{\unu}[\sT^\pi_{\epsilon,\delta}].$
The following Corollaries are an immediate consequence of our previous results, thus we omit proofs.
\subsection{Lower Bounds}
The following corollary is a direct extension to Proposition \ref{prop:unstr_lo} which provides a lower bound in the unstructured case.
\begin{corollary}\label{cor:tbppaclb}
Let $\epsilon, \delta>0$. It holds that for any strategy $\pi$ that stops at a stopping time $\sT^\pi_{\epsilon, \delta}$ and that is $(\epsilon, \delta)$-PAC, there exists a unstructured bandit problem $\unu \in \B$, such that
\[
\mathbb E_{\unu} [\sT^\pi_{\epsilon, \delta}] \geq \frac{2\sigma^2 K \max(\log(K),2) (1 - K^{-1} - \delta)^2}{\epsilon^2}\,.
\]
\end{corollary}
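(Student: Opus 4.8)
The plan is to transpose the fixed-budget lower bound of Proposition~\ref{prop:unstr_lo} to the stopping-time horizon, the only genuinely new ingredient being the handling of a random number of samples. I would reuse verbatim the same hypercube family: for $Q\in\{-1,1\}^K$ set $\unu^Q=(\Ng(Q_1\epsilon,\sigma^2),\dots,\Ng(Q_K\epsilon,\sigma^2))$ with $\tau=0$, so that every gap equals $\Delta_k=\epsilon$ and the tolerance-$\epsilon$ correct classification of $\unu^Q$ is exactly $Q$. For the bounded case one instead takes $\mathcal B(1/2+Q_k\epsilon)$ with $\tau=1/2$, up to constants, exactly as in Proposition~\ref{prop:unstr_lo}.

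First I would run the identical Fano computation: writing $Q^k$ for the coordinate-$k$ flip of $Q$, I would apply contraction and convexity of the relative entropy followed by the refined Pinsker inequality $\kl(x,y)\ge (x-y)^2\max(2,\log(1/y))$ of \citet{gerchinovitz2017fano}. The one substitution is that the sample counts are now evaluated at the stopping time $\sT^\pi_{\epsilon,\delta}$ rather than at a fixed $T$. Two facts make this benign: the pathwise identity $\sum_{k=1}^K N_k(\sT^\pi_{\epsilon,\delta})=\sT^\pi_{\epsilon,\delta}$ still holds, and the per-arm decomposition $\KL\big(\PP_{Q^k},\PP_Q\big)=\EE_{Q^k}[N_k(\sT^\pi_{\epsilon,\delta})]\,\kl(\cdot,\cdot)$ (only arm $k$ differs between $Q^k$ and $Q$) remains valid at a stopping time by the analogue of Wald's identity for Kullback--Leibler divergence. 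Carrying the same symmetrisation over the cube as in Proposition~\ref{prop:unstr_lo}, with $\bar T:=\frac{1}{2^K}\sum_Q\EE_Q[\sT^\pi_{\epsilon,\delta}]$ playing the role of $T$, yields
\[
\frac{1}{2^K}\sum_Q \PP_Q\big(\hQ=Q\big)\le \frac{1}{K}+\sqrt{\frac{\bar T\,\epsilon^2}{2K\sigma^2\max\big(2,\log(K)\big)}}\,.
\]

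It then remains to feed in the PAC guarantee. Since in $\unu^Q$ every misclassified arm contributes regret $\epsilon$, a $(\epsilon,\delta)$-PAC strategy must return $\hQ=Q$ with probability at least $1-\delta$ on each instance (the boundary case where the gap is exactly $\epsilon$ is handled by taking gaps slightly larger than $\epsilon$ and letting them tend to $\epsilon$, which does not affect the constant). Averaging, the left-hand side above is at least $1-\delta$, so $1-\delta-\tfrac1K\le \sqrt{\bar T\epsilon^2/(2K\sigma^2\max(2,\log K))}$; squaring and rearranging gives $\bar T\ge 2K\sigma^2\max(2,\log K)(1-K^{-1}-\delta)^2/\epsilon^2$, and since $\max_Q\EE_Q[\sT^\pi_{\epsilon,\delta}]$ dominates the average $\bar T$, some $\unu\in\B$ attains the claimed bound. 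The only point requiring care is the stopping-time Kullback--Leibler identity in the second paragraph; once that is in place the argument is word-for-word that of Proposition~\ref{prop:unstr_lo}, which is why the corollary is stated without a detailed proof.
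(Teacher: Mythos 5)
Your proposal is correct and takes essentially the same route as the paper's own proof, which likewise transposes the Fano computation of Proposition~\ref{prop:unstr_lo} to the stopping-time setting, only phrased contrapositively: assume $\EE_Q[\sT^\pi_{\epsilon,\delta}]$ is below the claimed bound for every $Q$, run the same chain of inequalities with $\sup_{Q'}\EE_{Q'}[\sT^\pi_{\epsilon,\delta}]$ in place of $T$, and conclude $\inf_Q \PP_Q(\hQ=Q)<1-\delta$, contradicting the PAC property. If anything you are more explicit than the paper on two points it glosses over: the validity of the per-arm KL decomposition at a stopping time (Wald-type identity), and the boundary fact that on instances whose gaps equal $\epsilon$ exactly the PAC guarantee does not by itself force correct classification, which your perturbation $\epsilon'\downarrow\epsilon$ addresses.
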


\begin{proof}
Consider the notations of the proof of Proposition~\ref{prop:unstr_lo}. Assume that there exists an $(\epsilon, \delta)$-PAC strategy $\pi$ such that for all $Q\in \{-1,1\}^K$, we have
\[\EE_Q[\hat T^\pi_{\epsilon,\delta}] < \frac{2\sigma^2 K \max(\log(K),2) (1 - 1/K - \delta)^2}{\epsilon^2}\,.\]
From the proof of Proposition~\ref{prop:unstr_lo} it holds 
\[
\frac{1}{2^K} \sum_{Q} \PP_Q(\hat Q = Q) \leq 1/K + \sqrt{\sup_{Q’ \in \{-1,1\}^K} \EE_Q[\hat T^\pi_{\epsilon,\delta}] \epsilon^2/(2K\sigma^2 \max( \log( K),2))}\,.
\]
And so  there is a contradiction:
\[\inf_{Q} \PP_Q(\hat Q = Q) < 1- \delta\,.\]
\end{proof}

Combining this result with the lower bound from Theorem 2 of~\cite{chen2014combinatorial}, we obtain that for any $(\epsilon, \delta)$-PAC strategy, there exists a bandit problem where all arms are $1/4$-sub-Gaussian and such that the expected stopping time is of higher order than $\frac{K\log(K/\delta)}{\epsilon^2},$ 
since they prove that the expected stopping time for any $(\epsilon, \delta)$-PAC strategy is higher than $\frac{K\log(1/\delta)}{\epsilon^2},$ 
on some bandit problem.\\

The following corollary is a direct extension to Proposition \ref{prop:mon_lo} which provides a lower bound in the monotone case.
\begin{corollary}\label{cor:stppaclb}
Let $\epsilon, \delta>0$ and $K \geq 2 $. It holds that for any strategy $\pi$ that stops at a stopping time $\sT_{\epsilon, \delta}$ and that is $(\epsilon, \delta)$-PAC, there exists a unstructured bandit problem $\unu \in \Bs$, such that
\[
\mathbb E_{\unu} [\sT_{\epsilon, \delta}] \geq \frac{2\sigma^2 \max(2,\log(K)) (1 - K^{-1} - \delta)^2}{\epsilon^2}\,.
\]
\end{corollary}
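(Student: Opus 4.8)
The plan is to repeat, almost verbatim, the argument of Corollary~\ref{cor:tbppaclb}, but anchored on the monotone construction and the information-theoretic chain of Proposition~\ref{prop:mon_lo} rather than that of Proposition~\ref{prop:unstr_lo}. Concretely, I would reuse the family $(\unu^k)_{k\in\{0,\dots,K\}}$ from the proof of Proposition~\ref{prop:mon_lo}, in which the mean sequence has a single change-point at index $k$ and the threshold is placed so that every arm sits at distance of order $\epsilon$ from $\tau$; all these problems lie in $\Bs$ and have pairwise distinct correct labellings $Q^k$. The only structural modification is that the fixed horizon $T$ is replaced by the (random) stopping time $\sT_{\epsilon,\delta}$ of the candidate $(\epsilon,\delta)$-PAC strategy.

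Retracing the contraction/convexity step and the refined Pinsker inequality $\kl(x,y)\ge (x-y)^2\max(2,\log(1/y))$ exactly as in Proposition~\ref{prop:mon_lo}, the fixed-budget bound $\sum_{l}\EE_k[N_l(T)]\le T$ is replaced by the optional-stopping (Wald) identity $\sum_{l=1}^K \EE_k[N_l(\sT_{\epsilon,\delta})]=\EE_k[\sT_{\epsilon,\delta}]$, so that the per-sample Kullback--Leibler cost $\epsilon^2/(2\sigma^2)$ accumulates only $\EE_k[\sT_{\epsilon,\delta}]$ times. This yields
\[
\frac{1}{K}\sum_{k=1}^K \PP_{\unu^k}\big(\hQ=Q^k\big)\;\le\;\frac{1}{K}+\sqrt{\frac{\sup_k \EE_{\unu^k}[\sT_{\epsilon,\delta}]\,\epsilon^2}{2\sigma^2\max\big(2,\log K\big)}}\,,
\]
the monotone analogue of the displayed inequality in the proof of Corollary~\ref{cor:tbppaclb}. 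Since on each $\unu^k$ any misclassified arm carries regret of order $\epsilon$, the $(\epsilon,\delta)$-PAC property forces $\PP_{\unu^k}(\hQ=Q^k)\ge 1-\delta$ for every $k$, so the left-hand side is at least $1-\delta$. Rearranging the resulting inequality $1-\delta-K^{-1}\le \sqrt{\cdots}$ for the worst index $k$ produces exactly $\EE_{\unu}[\sT_{\epsilon,\delta}] \ge \tfrac{2\sigma^2\max(2,\log K)(1-K^{-1}-\delta)^2}{\epsilon^2}$, which is the claim. Note that, unlike the unstructured bound, no factor $K$ survives in the numerator precisely because the single change-point only involves one informative arm and the samples are not spread over all $K$ arms, which is consistent with the faster $\sqrt{\log K/T}$ minimax rate of Theorem~\ref{thm:mon_minmax}.

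The only genuinely new ingredient relative to Proposition~\ref{prop:mon_lo}, and hence the main point to get right, is the passage from a deterministic budget to a stopping time: one must justify that the log-likelihood-ratio decomposition underlying the contraction step remains valid when evaluated at $\sT_{\epsilon,\delta}$, i.e.\ that the $\kl$ between $\PP_{\unu^k}$ and $\PP_{\unu^0}$ restricted to the information available at $\sT_{\epsilon,\delta}$ is still controlled by $\EE_k[\sT_{\epsilon,\delta}]\,\epsilon^2/(2\sigma^2)$. This is the standard sequential change-of-measure / Wald identity for bandits, and since $\sT_{\epsilon,\delta}$ is a stopping time adapted to the sampling filtration it applies directly, so I do not expect any real obstacle here. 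A secondary point, inherited unchanged from Corollary~\ref{cor:tbppaclb}, is the calibration between the family's gap and the PAC tolerance $\epsilon$ that makes $(\epsilon,\delta)$-PAC force $\PP_{\unu^k}(\hQ=Q^k)\ge 1-\delta$; this is settled by the same choice of constants as in the unstructured case. Everything else is algebra identical to Proposition~\ref{prop:mon_lo}, so I expect the proof to be short.
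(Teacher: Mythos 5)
Your proposal is correct and is essentially the paper's own (omitted) argument: the paper treats Corollary~\ref{cor:stppaclb} as a direct extension of Proposition~\ref{prop:mon_lo}, obtained exactly as in the proof of Corollary~\ref{cor:tbppaclb} --- the same contraction/refined-Pinsker chain on the change-point family $(\unu^k)_k$, with the fixed budget $T$ replaced by $\EE_{\unu}[\sT_{\epsilon,\delta}]$ via the sequential change of measure, and the $(\epsilon,\delta)$-PAC property forcing $\PP_{\unu^k}(\hQ=Q^k)\geq 1-\delta$ so that rearranging yields the claimed bound for the worst $k$. Your deferral of the gap-versus-tolerance calibration to ``the same choice of constants as in the unstructured case'' also mirrors the paper, which glosses over this point (the construction's gaps are half the mean separation, so a fully pedantic version would cost a universal constant factor) in both corollaries alike.
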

A very similar result was already obtained in~\cite{karp2007noisy} , but for Bernoulli random variables in the lower bound, and without providing an explicit dependence on $\delta$. In the paper~\cite{ben2008bayesian}, they refine this bound in the case of fixed probability of error which implies that for any strategy that $(\epsilon, \delta)$-PAC, there exists a structured bandit problem where all arms are $1/4$-sub-Gaussian and such that the expected stopping time is of higher order than $(1-\delta)\log(K)/\epsilon^2$ \textit{up to terms that are negligible with respect to $\log(K)/\epsilon^2$} - which is essentially the same as what we have.

We say that a strategy is optimal if its expected simple regret (or its expected stopping time for the fixed confidence setting) matches one of this lower bounds \emph{up to a universal constant}.

\subsection{Upper Bounds}
The following Corollary is a direct extension to Proposition \ref{prop:unstr_exp_up}, which provides an upper bound on regret of the \unif algorithm. 
\begin{corollary}\label{cor:tbppac}
Let $\epsilon, \delta >0$. For any unstructured bandit problem $\unu \in \cB$, Algorithm \unif launched with parameter $T := \lfloor\frac{2 \sigma^2 K \log(2K/\delta)}{\epsilon^2}\rfloor +K$ is $(\epsilon, \delta)$-PAC.
\end{corollary}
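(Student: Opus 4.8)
The plan is to exploit the fact that \unif is a fixed-budget routine: launching it with the deterministic budget $T$ makes its (constant) stopping time $\sT^\pi_{\epsilon,\delta}=T$ trivially a valid stopping time, so the only thing left to establish for the $(\epsilon,\delta)$-PAC property is the high-probability guarantee $\PP_{\unu}(R_T\leq\epsilon)\geq 1-\delta$. This is precisely the type of bound proved in Proposition~\ref{prop:unstr_hiprob_up}, and so the task reduces to checking that the prescribed budget $T=\lfloor 2\sigma^2 K\log(2K/\delta)/\epsilon^2\rfloor + K$ pushes the per-arm confidence radius down to $\epsilon$.

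The key quantitative step is to lower bound the number of pulls each arm receives, namely $n:=\lfloor T/K\rfloor$. Writing $A:=2\sigma^2 K\log(2K/\delta)/\epsilon^2$ so that $T=\lfloor A\rfloor+K$, I would use the elementary floor identity $\lfloor\lfloor A\rfloor/K\rfloor=\lfloor A/K\rfloor$, valid for any integer $K\geq 1$, to obtain
\[
 n=\Big\lfloor\frac{\lfloor A\rfloor+K}{K}\Big\rfloor=\Big\lfloor\frac{\lfloor A\rfloor}{K}\Big\rfloor+1=\Big\lfloor\frac{A}{K}\Big\rfloor+1\geq \frac{A}{K}=\frac{2\sigma^2\log(2K/\delta)}{\epsilon^2}.
\]
The whole purpose of the additive $+K$ in the definition of $T$ is exactly to absorb the rounding loss of the floor operation, so that $n$ is bounded below by $A/K$ with the tight constant $2$, rather than the looser constant $4$ that appears in Proposition~\ref{prop:unstr_hiprob_up} (where $\lfloor T/K\rfloor\geq T/(2K)$ is used instead).

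From here the argument mirrors the proof of Proposition~\ref{prop:unstr_hiprob_up}. Since each arm is $\sigma^2$-sub-Gaussian and is sampled $n$ times, Hoeffding's inequality gives $\PP(|\hmu_k-\mu_k|\geq\epsilon)\leq 2\exp(-n\epsilon^2/(2\sigma^2))\leq\delta/K$ by the lower bound on $n$; a union bound over the $K$ arms then shows that the event $\{\forall k,\ |\hmu_k-\mu_k|\leq\epsilon\}$ holds with probability at least $1-\delta$. On this event, any misclassified arm $k$ has $\hmu_k$ and $\mu_k$ lying on opposite sides of $\tau$, whence $\Delta_k=|\mu_k-\tau|\leq|\mu_k-\hmu_k|\leq\epsilon$; taking the maximum over the misclassified arms yields $R_T\leq\epsilon$, which is the desired PAC guarantee.

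The proof is almost entirely routine, and the only genuinely delicate point—the step I would be most careful about—is the floor bookkeeping in the second paragraph. A naive bound such as $\lfloor T/K\rfloor\geq T/K-1$ would recover only the weaker constant and would fail to justify the tight factor $2$ in the stated budget; it is the identity $\lfloor\lfloor A\rfloor/K\rfloor=\lfloor A/K\rfloor$ together with the $+K$ correction that makes the constant sharp. Everything else is a direct transcription of the Hoeffding/union-bound reasoning already used for the fixed-budget regret bound.
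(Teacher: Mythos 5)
Your proof is correct, and it follows the route the paper intends: the paper omits the proof of Corollary~\ref{cor:tbppac} entirely, declaring it an immediate consequence of Proposition~\ref{prop:unstr_hiprob_up}. However, your version is more careful than a literal application of that proposition, and the extra care is genuinely needed. Proposition~\ref{prop:unstr_hiprob_up} as stated controls the regret at level $\sqrt{4\sigma^2 K\log(2K/\delta)/T}$ --- the constant $4$ arising because its proof bounds the per-arm pull count via $\lfloor T/K\rfloor\geq T/(2K)$ --- so plugging the corollary's budget $T=\lfloor 2\sigma^2K\log(2K/\delta)/\epsilon^2\rfloor+K$ into that statement only yields accuracy $\sqrt{2}\,\epsilon$, not $\epsilon$; a blind citation would therefore prove the corollary only with the constant $2$ replaced by $4$ in the budget. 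Your re-derivation repairs exactly this: the identity $\lfloor\lfloor A\rfloor/K\rfloor=\lfloor A/K\rfloor$ together with the $+K$ term gives the exact per-arm count $n=\lfloor A/K\rfloor+1> A/K=2\sigma^2\log(2K/\delta)/\epsilon^2$, after which Hoeffding plus a union bound yields failure probability at most $\delta$ at accuracy $\epsilon$, and the observation that any misclassified arm satisfies $\Delta_k\leq|\hmu_k-\mu_k|$ closes the argument; the remark that the stopping time is deterministic is also the right reading of the $(\epsilon,\delta)$-PAC definition. In short: same Hoeffding/union-bound mechanism as the paper's Proposition~\ref{prop:unstr_hiprob_up}, but your sharper floor bookkeeping is what actually justifies the constant $2$ in the stated budget.
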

Interestingly the stopping time can be taken here as deterministic, and this matches up to a multiplicative constant the lower bound in Corollary~\ref{cor:tbppaclb} combined with the one in~\cite{chen2014combinatorial}.\\

The following Corollary is a direct extension to Corollary\ref{cor:mo_exp_up} which provides an upper bound on the regret of the \stb algorithm,
\begin{corollary}\label{cor:stppac}
Let $\epsilon, \delta >0$. For any problem $\unu \in \mathcal{B}_s$, algorithm \stb launched with parameter $T := \lfloor \frac{21\sigma^2 \log(K)}{\epsilon^2} + 12\log(K)\rfloor$ if $\delta \geq K^{-3/4}$ and $T:=\lfloor \frac{432 \sigma^2 \log(K) \log(9/\delta)}{\epsilon^2} + 12\log(K)\rfloor$ otherwise, is $(\epsilon, \delta)$-PAC.
\end{corollary}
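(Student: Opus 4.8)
The goal is to verify the defining property of an $(\epsilon,\delta)$-PAC strategy, namely that with the deterministic stopping time $\sT = T$ prescribed by the statement one has $\PP_{\unu}(R_T^{\stb,\unu} > \epsilon) \leq \delta$; since $T$ is fixed in advance, no adaptive stopping rule is needed. The single tool I would use is the high-probability bound of Proposition~\ref{prop:mo_hiprob_up}, which for any $\unu \in \Bs$ (recall $\Bs \subset \Bs^{*,\epsilon}$) and any precision $\epsilon$ exceeding the resolution floor $\epsilon_0 = \sqrt{2\sigma^2\log(48)/T_2}$ gives
\[
\PP_{\unu}(R_T \geq \epsilon) \leq \min\!\left(\exp\!\left(-\tfrac{3}{4}\log K\right),\ 72\log(K)\exp\!\left(-\frac{T\epsilon^2}{216\sigma^2\log K}\right)\right),
\]
provided also $T > 12\log K$. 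The whole proof is therefore a matter of choosing $T$ large enough that (a) $\epsilon$ sits above the floor $\epsilon_0$ so that the proposition applies at the target precision, and (b) the right-hand side above is at most $\delta$; the two prescribed budgets correspond precisely to which of the two terms in the minimum is the binding one.

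First I would treat the regime $\delta \geq K^{-3/4}$. Here the first term already suffices, since $\exp(-\tfrac{3}{4}\log K) = K^{-3/4} \leq \delta$, so $T$ only has to be large enough to guarantee (a). Unfolding $\epsilon_0$ through $T_2 = \lfloor T/(3T_1)\rfloor$ and $T_1 = \lceil 6\log K\rceil$ turns the condition $\epsilon > \epsilon_0$ into a lower bound on $T$ of order $\sigma^2\log(K)/\epsilon^2$; adding the $12\log K$ term secures $T > 12\log K$ simultaneously, which yields the first budget. In the complementary regime $\delta < K^{-3/4}$, the first term is no longer below $\delta$, so I would instead force the second, exponential-in-$T$ term to be at most $\delta$. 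Taking logarithms, the requirement $72\log(K)\exp(-T\epsilon^2/(216\sigma^2\log K)) \leq \delta$ rearranges to $T \gtrsim \sigma^2\log(K)\log(72\log(K)/\delta)/\epsilon^2$. Because $\delta < K^{-3/4}$ forces $\log(1/\delta) > \tfrac{3}{4}\log K$, the spurious factor $\log(72\log K)$ is dominated by $\log(1/\delta)$ and can be absorbed, turning $\log(72\log K/\delta)$ into a constant multiple of $\log(9/\delta)$; this, together with the $12\log K$ additive term, produces the second budget. In this small-$\delta$ regime condition (a) holds automatically, since the tail requirement already forces $T$ to be far larger than the precision threshold.

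The only delicate point is the bookkeeping around the floor and ceiling operations and the universal constants: I must confirm that the chosen $T$ makes $\epsilon$ strictly exceed $\epsilon_0$ after passing through $T \mapsto T_2 = \lfloor T/(3\lceil 6\log K\rceil)\rfloor \mapsto \epsilon_0$, and that the resulting constants (here $21$ and $432$) are large enough to cover both (a) and (b) at once. The genuinely non-routine step is the constant-absorption argument in the regime $\delta < K^{-3/4}$: one must check that $\log(72\log K)$, and more generally the $\log\log K$-type corrections coming from the prefactor $72\log K$, are truly negligible against $\log(1/\delta)$ under the hypothesis $\delta < K^{-3/4}$, so that the clean $\log(9/\delta)$ dependence is legitimate. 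Everything else is a direct substitution into Proposition~\ref{prop:mo_hiprob_up}, so no new probabilistic estimate beyond that proposition is required.
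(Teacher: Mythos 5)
Your route is exactly the one the paper intends: the paper omits this proof, calling it an immediate consequence of the fixed-budget results, and the intended argument is precisely your two-regime split over which term of the minimum in Proposition~\ref{prop:mo_hiprob_up} is forced below $\delta$. Your regime-2 analysis is sound against the proposition as stated: the absorption $\log(72\log K/\delta)\leq 2\log(9/\delta)$ is valid under $\delta<K^{-3/4}$ (it reduces to $72\log K\leq 81/\delta$, which follows from $72\log K\leq 81K^{3/4}$), it is visibly the origin of $432=2\times 216$, and your remark that the tail requirement automatically forces $\epsilon>\epsilon_0$ in that regime is correct.

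The gap is in regime 1, at exactly the step you set aside as bookkeeping: with the first budget the condition $\epsilon>\epsilon_0$ \emph{fails}, so Proposition~\ref{prop:mo_hiprob_up} cannot be invoked at precision $\epsilon$ at all. Concretely, $T=\lfloor 21\sigma^2\log(K)/\epsilon^2+12\log K\rfloor$ gives $T_2=\lfloor T/(3\lceil 6\log K\rceil)\rfloor\leq \frac{7\sigma^2}{6\epsilon^2}+\frac{2}{3}$, hence $\epsilon_0^2=\frac{2\sigma^2\log 48}{T_2}\approx\frac{12\log 48}{7}\,\epsilon^2\approx 6.6\,\epsilon^2$, i.e.\ $\epsilon_0\approx 2.6\,\epsilon>\epsilon$. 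Ensuring $\epsilon>\epsilon_0$ requires $T_2>2\sigma^2\log(48)/\epsilon^2$, i.e.\ $T\gtrsim 36\log(48)\,\sigma^2\log(K)/\epsilon^2\approx 140\,\sigma^2\log(K)/\epsilon^2$; and if one instead uses the bound the appendix actually derives in its conclusion (valid only for $\epsilon\geq 3\epsilon_0$, with constant $324$ in place of $216$), the requirement grows to roughly $1250\,\sigma^2\log(K)/\epsilon^2$ — in which case even $432$ is too small in regime 2. So the constant $21$ cannot be certified by this argument: either it is an error in the paper's own statement (your approach does prove the corollary with $21$ replaced by a constant of order $140$, resp.\ $1250$), or an argument beyond Proposition~\ref{prop:mo_hiprob_up} is needed. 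Your sentence ``which yields the first budget'' is therefore not merely unverified but false under the paper's definitions of $T_1$, $T_2$ and $\epsilon_0$.
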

Interestingly, the stopping time can be taken here as constant. For $\delta$ large enough i.e.~$\delta \geq K^{-3/4}$, yet smaller than any universal constant strictly smaller than $1$, this is order optimal up to a multiplicative constant - see Corollary~\ref{cor:stppaclb}. For $\delta$ smaller, this is order optimal up to a multiplicative constant that depends on $\delta$ - and it is an open question to obtain optimality in this case.

Similar results can be obtained in \utbp and \ctbp.

\section{Supplementary discussion concerning the \tbp and \stbp}\label{app:litreview}

\subsection{Comparison of \tbp and \stbp and focus on the main difference coming from the monotone structure}\label{app:tbp_vs_mtbp}

In the \tbp, the proof of the bound of algorithm \unif is very classical. It is, as usual in bandits, event based. We consider the event where all arms concentrate around their mean with error bounded by $O(\sqrt{K\log(K/\delta)/T})$ - where the $\log(K/\delta)$ term comes from a union bound over all $K$ arms - and prove that on this event the regret is bounded. The lower bound is slightly less classical when it comes to the bandit literature, and is close in spirit to the use of a sequential version of Fano's inequality - stating effectively that the union bound in the analysis of the event on the means is tight.

In the \stbp, however, both the algorithm \stb and its proof are far less classical. As discussed in Section \ref{sec:intro} a naive, yet suboptimal, approach to the \stbp is a binary search. At each step we sample an arm $O(T/\log(K))$ times and then decide to go left or right. This kind of strategy relies on making a correct decision at each step, and requires an event based analysis. The event is here that all $O(\log(K))$ sampled arms have their empirical means that concentrate around the true means at rate $\sqrt{\log(K)\log(\log(K)/\delta)/T}$ - the $\log(\log(K)/\delta)$ term coming from the union bound.  This results in a regret of order $\sqrt{\log(K)\log(\log(K))/T}$, which is strictly sub-optimal.
With this in mind we consider a different algorithm that performs a `corrective' version of the binary search, i.e.~a version where the algorithm can self-correct if it realises that it made a mistake This subtle, yet fundamental difference highlights the very big gap between \tbp and \stbp.

\subsection{Supplementary details of the related works: \tbp}
Comparing \tbp and \stbp thoroughly to related work is tricky since many related works are written in the fixed confidence setting. We extend the discussion here with respect to what is done in the paper.\\

In the \textit{problem independent regime} of the \tbp, current state of the art results can be deduced from the paper~\cite{locatelli2016optimal}. A corollary to the lower bound in~\cite{locatelli2016optimal} in the problem independent case is that for any algorithm, there exists a bandit problem where all arms have their distribution on $[0,1]$ and such that with probability larger than $1/2$, at least one arm is missclassified and at more than a strictly positive constant times $\sqrt{K/T}$ from the threshold - this is also a corollary from the lower bound in~\cite{bubeck2009pure} for the different problem of best arm identification. Reciprocally, the state of the art upper bound in the problem independent case is a corollary to the upper bound in~\cite{locatelli2016optimal}. In the problem independent setting, with probability larger than $1-\delta$, all arms are within a strictly positive constant times $\sqrt{K\log(K \log T / \delta)/T}$ from $\tau$. As one can see, current state of the art upper and lower bounds are are far from matching in the \textit{problem independent case}.

\subsection{Supplementary details of the related works: \stbp}
 The papers \cite{feige1994computing}, \citet{ben2008bayesian} and \citet{emamjomeh2016deterministic} introduce a noisy binary search \emph{with corrections}. However in the above papers the probability of making an error during the binary search is treated as fixed. But this assumption does not hold in the setting of the \stbp. In \cite{Nowak09binary} a more generalised version of the binary search is considered with weaker assumptions on structure, however there is no contribution to classical binary search beyond that of \cite{karp2007noisy}.

\citet{karp2007noisy} consider the special case where all arms $k$ follows a Bernoulli distribution with parameter $p_k$ and $p_1 < ... < p_K$, and the aim is to find a $i$ such that $p_i$ is close to $1/2$. In the \textit{fixed confidence setting}, they prove that the naive binary search approach is not optimal and propose an involved exponential weight algorithm, as well as a random walk binary search, for solving the problem. They prove that for $\epsilon,\delta>0$ fixed, then the algorithm returns all arms above threshold with probability larger than $1-\delta$ and tolerance $\epsilon$ in an expected number of pulls less than a multiplicative constant \textit{that depends on $\delta$ in a non-specified way} times $\log_2(K)/\epsilon^2$. They prove that this is optimal up to a constant depending on $\delta$. In the paper~\cite{ben2008bayesian} they refine the dependence in $\delta$ in a slightly different setting - where one has a fixed error probability. They prove that \textit{up to terms that are negligible with respect to $\log(K)/\epsilon^2$}, a lower bound in the expected stopping time is of order $(1-\delta)\log(K)/\epsilon^2$.

\subsection{Contribution with respect to the literature}\label{ss:compa}

Our contributions can be summarised are as follows: 

\begin{itemize}
\item \textit{Problem independent optimal rate for \tbp} We provide the first -to the best of our knowledge - upper and lower bounds in the \textit{problem independent regime} for the \tbp - both in the fixed confidence and fixed budget setting - as well as an associated parameter-free algorithm, \unif.

\item\textit{Extension of \stbp to $\sigma^2$-sub-Gaussian distribution}
The lower bound and optimal algorithm proposed in \cite{karp2007noisy} is specific to the assumption that all arms follow a Bernoulli distribution - and related literature makes even more constraining assumptions~\cite{feige1994computing, ben2008bayesian, emamjomeh2016deterministic}. An extension of their algorithms- even in the fixed confidence setting - beyond this assumption is non-trivial. We propose an algorithm whose only assumption is that the arms follow a $\sigma^2$-sub-Gaussian distribution.

\item\textit{\stbp in the fixed budget setting}
We treat in a problem independent optimal way the \textit{fixed budget setting}. 

The algorithms proposed in \citet{karp2007noisy} - as well as in~\cite{feige1994computing, ben2008bayesian, emamjomeh2016deterministic} in a more restricted setting regarding the error distributions  - operate in the fixed confidence setting. Adapting their results to a fixed budget setting is challenging, in particular since we consider the \textit{expected maximal gap} as a measure of performance - see Section~\ref{sec:prob}. 

 \item\textit{Simultaneous bound on all probability} The \stb regret bound holds simultaneously across all probabilities. That is for all $\delta > 0$ and after $T$ rounds of our algorithm, we have a guarantee that with probability larger than $1- \delta$, the simple regret will be bounded depending on $\delta$. This is in strong contrast to what is done in the fixed confidence literature ~\cite{karp2007noisy, ben2008bayesian, emamjomeh2016deterministic, chen2014combinatorial}, where $\delta$ is given as a parameter to the algorithm, and where the behaviour of the algorithm is only studied on an event of probability $1 - \delta$, and a clear improvement with respect to~\cite{karp2007noisy} where the dependence in $\delta$ is not explicitly stated in the bound on regret. Our result is more general, as it allows us to get a bound on the \textit{expected simple regret} for the fixed budget setting, but also to easily transform our algorithm to the fixed confidence setting.

\end{itemize}
We also refer to Table~\ref{tab:KV3} for a comprehensive summary of state of the art rates, as well as of our rates.
\subsection{Problem dependent regime}\label{sec:probdep}

While not the focus of this paper we comment on the performance of our algorithms in the problem dependent regime for the \tbp and \stbp. The problem dependent regime is defined as follows: for some sequence $\Delta \in \mathbb{R}_+^{K}$ we consider a sub class of problems $\mathcal{B}^\Delta \subset \cB$ where
\[\cB^\Delta = \{\nu \in \cB : \forall k \in [K], \; |\mu_k - \tau| = \Delta_k\}\;.\]

Similarly we can define
\[\Bs^\Delta = \{\nu \in \Bs : \forall k \in [K], \; |\mu_k - \tau| = \Delta_k\}\;.\]

The mechanics of the game are then identical to those described in Section \ref{sec:prob} with the exception that we consider a modified version the simple regret
\[\tilde R_{T}^{\unu,\pi} = \PP_{\unu} \!\left(\exists k \in [K] : \hQ_k^\pi \neq Q_k\right),\]
that is, the probability the learner makes at least one miss classification - which is more relevant than the simple regret considered in this paper in the regime where the $\Delta_k$ are not very small, depending on $T,K$. 

In the case of the \tbp consider the class of problems $\cB^\Delta$ for  some $\Delta \in \mathbb{R}_+^{K}$. An upper bound on the simple regret of the order $\exp\left(-c\frac{1}{K}\sum \Delta_i^{2} \frac{T}{K} + c'\log(\log(T)K) \right)$ is provided from  \cite{locatelli2016optimal}, for the APT algorithm that does not take any parameters - where $c,c'>0$ are universal constants. A matching lower bound is also provided in~\cite{locatelli2016optimal}, up to universal constants in the exponential. In the same setting we can upper bound the simple regret of the \unif algorithm by $\sum_k \exp\left(-c\Delta_k^2 \frac{T}{K}\right)$, where $c>0$ is a universal constant. Clearly the uniform algorithm under performs heavily in cases with high variance across the gaps, this should not come as a surprise. \\

In the case of the \stbp consider the class of problems $\Bs^\Delta$ for  some $\Delta \in \mathbb{R}_+^{K}$. We can construct and immediate lower bound on the simple regret of the order $\exp\left(-cT \min_{k\in[K]}{\Delta^2_k}\right)$ - where $c>0$ is some universal constant - while the \stb algorithm achieves an upper bound of the order $\exp\left(-c\frac{T}{\log(K)}\min_{k\in [K]}\Delta_k^2\right)$ - where $c>0$ is some (different) universal constant. Thus, while it is not optimal, the algorithm \stb is nevertheless quite efficient in the problem dependent setting.

\begin{table*}
\def\arraystretch{1.85}
 \begin{tabular}{c||c|c|c|c}
 {} & \multicolumn{2}{c|}{State of the art} & \multicolumn{2}{c}{Our results}\\
  \hline  
{} & LB & UB & LB & UB \\ 
 \hline\hline
  $\displaystyle \underset{\text{\citep{locatelli2016optimal}}}{\text{\tbp FB \textsuperscript{\hyperref[fnote:1]{1}} }} $ & $\sqrt{\frac{K}{T}}$ & $\sqrt{\frac{K\log(K\log T )}{T}}$  & $\sqrt{\frac{K\log(K)}{T}}$\textsuperscript{\hyperref[fnote:4]{4}} & $\sqrt{\frac{K\log(K)}{T}}$\textsuperscript{\hyperref[fnote:5]{5}}  \\

  $\displaystyle \underset{\text{\citep{chen2014combinatorial}}}{\text{\tbp FC}} $& $\frac{K \log(\delta^{-1})}{\epsilon^2}$ & $\frac{K \log(K^2\epsilon^{-2}\delta^{-1})}{\epsilon^2}$   & $\frac{K\log(K) (1 - K^{-1}- \delta)}{\epsilon^2}$\textsuperscript{\hyperref[fnote:6]{6}} & $\frac{K\log(K \delta^{-1}) }{\epsilon^2}$ \\
  \hline
 \stbp FB &  None & None & $\sqrt{\frac{\log(K)}{T}}$ & $\sqrt{\frac{\log(K)}{T}}$  \\

  $\displaystyle \underset{\text{\citep{karp2007noisy} }}{\text{\stbp FC \textsuperscript{\hyperref[fnote:2]{2}} }}$ & $\frac{\underline{c}_\delta \log(K)}{\epsilon^2} $ \textsuperscript{\hyperref[fnote:3]{3}}  & $\frac{\bar c_\delta\log(K)}{\epsilon^2}$  & $\frac{ (1-K^{-1} - \delta)\log(K)}{\epsilon^2}$\textsuperscript{\hyperref[fnote:7]{7}} & $\frac{\log(K) \log(\delta^{-1})}{\epsilon^2}$\textsuperscript{\hyperref[fnote:8]{8}}\\
\end{tabular}
 \caption{Upper and lower bounds on the expected simple regret in the fixed budget (FB) setting and on the expected stopping time for $(\epsilon, \delta)$-PAC strategies in the fixed confidence (FC) setting. All results are given up to universal multiplicative constant - in the case where the sub-Gaussian parameter $\sigma$ is set to $1$. \textit{Left:} previous state of the art bounds. \textit{Right:} bounds from our paper.
 \label{tab:KV3}}
 \end{table*}

\section{Supplementary discussion}\label{sec:diss}

\subsection{Parameters of the algorithms}
The \unif algorithm only takes $T$ as a parameter, see Subsection~\ref{ss:anytime} for a discussion on how to make it anytime. The \stb algorithm takes only $\sigma,K,T$ as parameters. Again, see Subsection~\ref{ss:anytime} for an anytime version. Getting rid of $\sigma$ is however more tricky and is an open problem. We believe that in some pathological situations, the knowledge of $\sigma$ is necessary. Note however that it is a very mild assumption. Indeed $\sigma$ comes from Definition~\ref{def:subgau}.
In many case, natural choices for $\sigma$ are available - for instance if reward are bounded. Regarding \UTB and \CTB, simple extensions can be made so that they also consider the sub-Gaussian case.

\stepcounter{footnote}\footnotetext{See also \citet{bubeck2009pure} for the LB.}\label{fnote:1}
\stepcounter{footnote}\footnotetext{Here $\underline{c}_\delta, \bar{c}_\delta>0$ is a function of $\delta$ that is left unspecified in \citet{karp2007noisy}.}\label{fnote:2}
\stepcounter{footnote}\footnotetext{See also \citet{ben2008bayesian} for the LB  $\frac{(1-\delta)\log(K)}{\epsilon^2}$ up to terms that are negligible with respect to $\log(K)/\epsilon^2$.}\label{fnote:3}
\stepcounter{footnote}\footnotetext{In \citet{locatelli2016optimal} The problem complexity $H$ is upper bounded by $K/\epsilon^2$. Replacing $H$ with such provides the given upper bound
}\label{fnote:4}
\stepcounter{footnote}\footnotetext{The lower bound is well known, see \citet{bubeck2009pure}.
}\label{fnote:5}
\stepcounter{footnote}\footnotetext{And combining this with the lower bound in~\cite{chen2014combinatorial}, we get the problem independent lower bound of order $\frac{K\log(K \delta^{-1}) }{\epsilon^2}$ that matches our upper bound.}\label{fnote:6}
\stepcounter{footnote}\footnotetext{See also~\cite{ben2008bayesian} for a LB that is essentially equivalent to this.}\label{fnote:7}
\stepcounter{footnote}\footnotetext{In the case where $\delta \geq K^{-3/4}$ and is smaller than any universal constant strictly smaller than $1$, our UB is more refined and of order $\frac{\log(K) }{\epsilon^2}$, which is order optimal.}\label{fnote:8}

\subsection{Making the algorithms anytime} \label{ss:anytime}
Although the \unif algorithm, for simplicity, takes a known budget $T$ it can trivially be extended to an anytime algorithm. With $T$ unknown one can easily obtain a uniform distribution of pulls by repeatedly pulling all arms once in a batch until the ``unknown" budget is expended.\par

In the case of the \stb Algorithm such a trivial extension is not possible. At each time step the number of times the arms in the current node are pulled is dependant upon budget $T$. Now note that it is possible to apply a doubling trick to our problem. I.e.~first call the algorithm \stb with budget $T = \lfloor 6\log(K) \rfloor +1$, and then until the algorithm is stopped, always double the budget and call algorithm \stb from scratch. Then when the algorithm is stopped, recommend the arm recommended by the last full iteration. Note that this arm will have been selected with at least a fourth of the budget, and so Proposition~\ref{prop:mo_hiprob_up} and Corollary~\ref{cor:mo_exp_up} hold with the doubling trick and therefore without taking $T$ as parameter, and replacing $T$ by $T/4$ in the bound. Similar tricks hold also for \UTB and \CTB.

\subsection{Computational complexity}

The computational complexity of both our algorithms is very low. Algorithm \unif is just uniform sampling, and then a computation of $K$ empirical means and their comparison to the threshold. I.e.~this is in total $n$ operations (where by operations we mean addition or comparisons), and needs to store only $K$ variables, i.e.~the empirical means.

Algorithm \stb consists of
\begin{itemize}
    \item first running Algorithm \explore, which consists just in computing about $\log(K)$ empirical means, and taking decisions based on them. The algorithm just needs to perform $n$ operations (where by operations we mean addition or comparisons), and needs to store only about $\log K$ variables, i.e.~the empirical means and position of sampled arms.
    \item then running Algorithm \choose which consists in scanning one time the list of sampled arms, i.e.~doing about $\log(K)$ operations, and returning the median. The number of operations is therefore of order $\log(K)$ and the algorithm needs to store only about $\log(K)$ variables, i.e.~the empirical means and position of relevant sampled arms.
\end{itemize}
Similarly, the computational complexity of \UTB and \CTB is also low.

\end{document}